\pgfplotsset{compat=1.5}
\newtheorem{theorem}{Theorem}
\newtheorem{definition}[theorem]{Definition}
\newtheorem{lemma}[theorem]{Lemma}
\newtheorem{claim}[theorem]{Claim}
\newtheorem{corollary}[theorem]{Corollary}
\numberwithin{theorem}{section}
\newcommand{\ex}[2]{{\ifx&#1& \mathbb{E} \else \underset{#1}{\mathbb{E}} \fi \left[#2\right]}}
\newcommand{\pr}[2]{{\ifx&#1& \mathbb{P} \else \underset{#1}{\mathbb{P}} \fi \left[#2\right]}}
\newcommand{\eps}{\varepsilon}
\newcommand{\argmax}{\textup{argmax}}
\newcommand{\OPT}{\texttt{OPT}}
\newcommand{\B}{\mathcal{B}}
\newcommand{\D}{\mathcal{D}}
\newcommand{\I}{\mathcal{I}}
\newcommand{\M}{\mathcal{M}}
\newcommand{\N}{\mathcal{N}}
\newcommand{\U}{\mathcal{U}}
\newcommand{\tw}{\tilde{w}}
\let\originalleft\left
\let\originalright\right
\renewcommand{\left}{\mathopen{}\mathclose\bgroup\originalleft}
\renewcommand{\right}{\aftergroup\egroup\originalright}
\def\BState{\State\hskip-\ALG@thistlm}
\title{Differentially Private Decomposable Submodular Maximization}
\author{Anamay Chaturvedi\thanks{Khoury College of Computer Sciences, Northeastern University \dotfill \texttt{chaturvedi.a@northeastern.edu}} \and Huy L\^{e} Nguy\~{\^{e}}n \thanks{Khoury College of Computer Sciences, Northeastern University \dotfill \texttt{hlnguyen@cs.princeton.edu}}\and Lydia Zakynthinou\thanks{Khoury College of Computer Sciences, Northeastern University \dotfill \texttt{zakynthinou.l@northeastern.edu}}}
\date{\today}
\begin{document}
\maketitle
\begin{abstract}
We study the problem of differentially private constrained maximization of decomposable submodular functions. A submodular function is decomposable if it takes the form of a sum of submodular functions. The special case of maximizing a monotone, decomposable submodular function under cardinality constraints is known as the Combinatorial Public Projects (CPP) problem \citep{PapadimitriouSS08}. Previous work by \citet{GuptaLMRT10} gave a differentially private algorithm for the CPP problem. 

We extend this work by designing differentially private algorithms for both monotone and non-monotone decomposable submodular maximization under general matroid constraints, with competitive utility guarantees. 
We complement our theoretical bounds with experiments demonstrating empirical performance, which improves over the differentially private algorithms for the general case of submodular maximization and is close to the performance of non-private algorithms.
\end{abstract}

\section{Introduction}\label{sec:intro}
A set function $f:2^{\N} \rightarrow \mathbb{R}$ is submodular if it satisfies the following property of diminishing marginal returns: for all sets $S\subseteq T\subseteq\N$ and every element $u\in\N\setminus T$, $f(S \cup \{u\})-f(S) \geq f(T \cup \{u\}) -f(T)$. In other words, the marginal contribution of any element $u$ to the value of the function $f(S)$ diminishes as the input set $S$ increases in size. 

The theory of submodular maximization provides a unifying framework which captures many combinatorial optimization problems, including Max Cut, Max $r$-Cover, Facility Location, and Generalized Assignment problems. 
Optimization problems involving the maximization of a submodular objective function arise naturally in many different applications, which span a wide range of fields such as computer vision, operations research, electrical networks, and combinatorial optimization (see for example~\citep{Narayanan97, Fujishige05, Schrijver03}).
Furthermore, submodular functions are extensively used in economics (e.g., in the problem of welfare maximization in combinatorial auctions~\citep{DobzinskiS06, Feige06, FeigeV06, Vondrak08}) because their diminishing returns property captures the preferences of agents for substitutable goods. 

Recently, submodular maximization has found numerous applications to problems in machine learning~\citep{ KawaharaKTB09}. Such applications include the problems of influence maximization in social networks~\citep{KempeKT03, BorgsBCL14, BorodinJLY17}, result diversification in recommender systems~\citep{ParambathUG16}, feature selection for classification~\citep{KrauseG05}, dictionary selection~\citep{KrauseC10}, document and corpus summarization~\citep{LinB11, KirchhoffB14, SiposSSPT12}, crowd teaching~\citep{SinglaBBKK14}, and exemplar-based clustering~\citep{DueckF07, GomesK10}.
 
In all these settings, the goal is to optimize a submodular function $f$ subject to certain constraints. These constraints can be simple, such as cardinality constraints (i.e., $\max\{f(S) : |S| \leq r\}$) or more general matroid constraints (e.g., a partition matroid $\max\{f (S ): |S \cup P_i | \leq 1, \forall i\}$, where $P_i\subset\N$ are disjoint sets). We give the formal definition of matroid constraints in Section~\ref{sec:matroid}.

In most applications, these machine learning tools are applied to users' sensitive data, causing privacy concerns to become increasingly important, and motivating the study of private submodular maximization. 
Differential Privacy (DP)~\citep{DworkMNS06} has been widely-accepted as a robust mathematical guarantee that a model produced by a machine learning algorithm does not reveal sensitive, personal information contained in the training data. Informally, DP ensures that the output of the algorithm will not change significantly if one individual's input is swapped with another's. We provide the formal definition in Section~\ref{sec:privacy}.

Notably,~\citet{MitrovicBKK17} gave differentially private algorithms for monotone and non-monotone submodular maximization under cardinality, matroid, and p-extendible system constraints. \citet{GuptaLMRT10} studied a variety of combinatorial optimization problems under differential privacy and, in particular, gave a differentially private algorithm for the Combinatorial Public Projects (CPP) problem introduced by~\citet{PapadimitriouSS08}. This is a special case of monotone submodular maximization under cardinality constraints, as the objective function $f$ is \emph{decomposable} (also known as \emph{Sum-of-Submodular}), i.e., it is the sum of submodular functions. The goal is then to find the set that maximizes $f$, while protecting the private valuation functions of the individual users.

Decomposable submodular functions encompass many of the examples of submodular functions studied in the context of machine learning as well as welfare maximization. 
In the latter, each agent has a valuation function over sets, and the goal is to maximize the sum of the valuations of the agents, i.e., the ``social welfare''. The valuation functions are often submodular as they exhibit the diminishing returns property.
In machine learning, data summarization, where the goal is to select a representative subset of elements of small size, falls into this setting and has numerous applications, including exemplar-based clustering, image summarization, recommender systems, active set selection, and document and corpus summarization. The line of work of \citet{MirzasoleimanBK16, MirzasoleimanKSK16, MirzasoleimanZK16} studies decomposable submbodular maximization under matroid and $p$-systems constraints in various data summarization settings and takes different approaches to user privacy.

\subsection{Our contributions}

We focus on the problem of maximization of a decomposable submodular function under matroid constraints.
Concretely, suppose there exists a set of agents $D$ of size $m$ and a ground set of elements $\N$ of size $n$. We assume that each agent $I$ has a submodular function $f_I : 2^{\N} \to [0,\lambda]$, and the goal is to find the subset $S\subseteq\N$ maximizing $f(S) = \sum_{I \in D} f_I(S)$ subject to a matroid constraint $\M=(\N,\I)$ of rank $r$, under differential privacy.

We provide two algorithms for the maximization of a decomposable submodular function under matroid constraints both for the case of monotone and non-monotone functions. Our algorithms have competitive utility guarantees, as presented in the next theorems, where $f(\OPT)$ denotes the value of the optimal non-private solution. The complete results for monotone and non-monotone decomposable functions can be found in Sections~\ref{sec:monotone} and~\ref{sec:nonmonotone}, respectively.

\begin{theorem}[Informal]\label{th:main-monotone-intro}
If $f$ is a monotone $\lambda$-decomposable function, then there exists an $(\eps,\delta)$-differentially private algorithm for maximization of $f$ under matroid constraints of rank $r$, which, given parameters $\eta, \gamma, \eps,\delta$, returns a set $S$ such that, with probability $1-\gamma$, \[\ex{}{f(S)}\geq (1-1/e-O(\eta))f(\OPT)-O\left(\frac{\lambda r}{\eta\eps}\log\frac{nr}{\eta\gamma}\cdot\log\frac{1}{\delta}\right).\]
\end{theorem}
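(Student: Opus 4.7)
The plan is to design a private variant of the continuous greedy algorithm for monotone submodular maximization subject to a matroid constraint. I would discretize continuous greedy into $T=\lceil 1/\eta\rceil$ iterations with step size $\eta$, maintaining a fractional vector $\mathbf{x}^{(t)}$ in the matroid polytope of $\M$. At iteration $t$, the algorithm computes for every $e\in\N$ a marginal-value estimate $w_e$ of the multilinear extension $F(\mathbf{x})=\sum_{I\in D} F_I(\mathbf{x})$ at $\mathbf{x}^{(t)}$, releases noisy weights $\tw_e$ via a Laplace (or Gaussian) mechanism, selects a maximum-weight basis $B^{(t)}\in\argmax_{B\in\I}\sum_{e\in B}\tw_e$ by the classical matroid greedy (which is post-processing, hence free for privacy), and updates $\mathbf{x}^{(t+1)}=\mathbf{x}^{(t)}+\eta\,\mathbf{1}_{B^{(t)}}$. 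Finally, $\mathbf{x}^{(T)}$ is rounded to an integral independent set $S$ by swap rounding, which preserves $F$ in expectation.

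For the privacy analysis, the key structural observation is that $f$ is $\lambda$-decomposable: replacing a single agent $I$ changes $f_I$ by at most $\lambda$ on every set, so the $\ell_\infty$-sensitivity of the vector $(w_e)_{e\in\N}$ is at most $\lambda$ at every iteration. Releasing $(\tw_e)$ each round with per-iteration privacy budget $\eps_0$ and composing across the $T$ iterations with an appropriate composition theorem yields the overall $(\eps,\delta)$-DP guarantee. Calibrating $\eps_0$ and doing a union bound over the $nT$ coordinates gives, with probability at least $1-\gamma$, the uniform noise bound $|\tw_e-w_e|\leq\tau$ for all $e$ and $t$, with $\tau = O\!\left(\tfrac{\lambda}{\eps}\log(1/\delta)\log(nT/\gamma)\right)$ for the intended parameterization.

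For utility, I would combine the standard continuous-greedy potential argument with a per-iteration additive slack. On the noise event, since $B^{(t)}$ maximizes $\sum_{e\in B}\tw_e$ over bases, for every basis $B^\star$ the inequality $\sum_{e\in B^{(t)}}\tw_e\geq\sum_{e\in B^\star}\tw_e$ combined with $|\sum_{e\in B}(\tw_e-w_e)|\leq r\tau$ implies $\sum_{e\in B^{(t)}}w_e\geq \sum_{e\in B^\star}w_e-2r\tau$. Taking $B^\star$ to be the basis supporting $\OPT$, the standard continuous-greedy bound becomes $F(\mathbf{x}^{(t+1)})\geq F(\mathbf{x}^{(t)})+\eta\bigl(F(\OPT)-F(\mathbf{x}^{(t)})\bigr)-O(\eta^2 F(\OPT))-2r\eta\tau$. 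Iterating $T$ times, using $(1-\eta)^T\leq 1/e$ and $T\eta=O(1)$, yields $\mathbb{E}[F(\mathbf{x}^{(T)})]\geq (1-1/e-O(\eta))F(\OPT)-O(r\tau)$, and substituting the value of $\tau$ matches the theorem. The expectation is over the swap-rounding randomness, while the high-probability $1-\gamma$ is over the privacy noise.

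The main obstacle is making the noise interact cleanly with the continuous-greedy potential: the usual proof crucially uses that the selected basis is at least as valuable as the basis supporting $\OPT$, which under noisy weights holds only up to a $2r\tau$ additive error per step, and I must verify that this slack accumulates only linearly in $T$ rather than compounding with the $(1-\eta)$ contraction. A secondary subtlety is how to compute marginals of $F$ privately: since $F(\mathbf{x})$ is defined as an expectation over a random subset drawn from $\mathbf{x}$, one option is to use the exact decomposition $F=\sum_I F_I$ together with oracle evaluations of each $f_I$, so that the $\lambda$-sensitivity is immediate; alternatively one can replace $F$-marginals by their discrete surrogate from the measured-greedy framework, in each case verifying that the sensitivity remains $\lambda$ per coordinate and that the potential argument still produces the $(1-1/e)$ factor.
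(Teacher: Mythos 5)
The core gap is in the privacy accounting. You state that composing the per-iteration releases with ``an appropriate composition theorem'' gives $(\eps,\delta)$-DP with a uniform noise bound $\tau = O(\frac{\lambda}{\eps}\log\frac{1}{\delta}\log\frac{nT}{\gamma})$, essentially independent of the number of iterations. No standard composition theorem gives this. If each of the $T$ (or $rT$, if one element is selected per inner step) releases is run at budget $\eps_0$, basic composition forces $\eps_0 = \eps/T$ (resp.\ $\eps/(rT)$), inflating $\tau$ by $T$ (resp.\ $rT$) and producing an additive error of order $r^2/\eta^2$; advanced composition still costs a $\sqrt{rT}$ factor, giving order $r^{3/2}/\eta^{3/2}$. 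The paper's additive error $O(\frac{\lambda r}{\eta\eps}\log\frac{1}{\delta}\log\frac{nr}{\eta\gamma})$ is reached only by avoiding per-round composition entirely: for $\lambda$-decomposable $f$ and neighbors $A = B \cup \{I\}$, the log-likelihood ratio of the full transcript $(u^{(1,1)},\dots,u^{(T,r)})$ under the exponential mechanism factors so that one part equals $\frac{\eps_0}{2}\sum_{t,i}\bigl[G_I(y^{(t,i)}) - G_I(y^{(t,i-1)})\bigr]$, which telescopes to $\frac{\eps_0}{2}\bigl(G_I(y^{(T,r)}) - G_I(y^{(1,0)})\bigr) \le \frac{\eps_0 \lambda}{2}$, while the other part (the product of normalizing constants) is a sum of \emph{expected} marginal gains of $G_I$ controlled, with probability $1-\delta$, by the martingale concentration of Claim~\ref{claim:privacyconc} in terms of that same realized telescoping sum. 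This is what lets the paper take $\eps_0 \approx \eps/\log\frac{1}{\delta}$ rather than $\eps/\sqrt{rT}$, and it is precisely the saving your proposal implicitly assumes without proving.

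Two further problems follow from the same point. First, releasing the weight vector $(w_e)_{e\in\N}$ with Laplace or Gaussian noise calibrated to the $\ell_\infty$-sensitivity $\lambda$ is not a valid DP mechanism: Laplace needs $\ell_1$-sensitivity (here up to $n\lambda$) and Gaussian needs $\ell_2$-sensitivity (here up to $\sqrt{n}\lambda$), so the per-coordinate noise would pick up a factor of $n$ or $\sqrt{n}$. The paper instead builds the basis one element at a time via the exponential mechanism, whose guarantee depends only on the per-element sensitivity $\lambda$; this element-by-element structure is also what makes the transcript decompose into the telescoping sum above. Second, the telescoping applies to realized gains of the sampling proxy $G$, not to $F$ itself: the threshold vectors $r^j$ are drawn \emph{once} at the start, so that $\tw_D^{(t,i)}(u^{(t,i)}) = G(y^{(t,i)}) - G(y^{(t,i-1)})$ holds exactly and $G_I$ stays in $[0,\lambda]$. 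If you re-sample fresh each round to estimate $F$-marginals, the realized estimates no longer telescope and their sum can be $\Theta(\lambda rT)$, which breaks the privacy argument; your final paragraph gestures at this choice but does not identify the fixed-sample requirement that makes it work.
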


\begin{theorem}[Informal]\label{th:main-nonmonotone-intro}
If $f$ is a non-monotone $\lambda$-decomposable function, then there exists an $(\eps,\delta)$-differentially private algorithm for maximization of $f$ under matroid constraints of rank $r$, which, given parameters $\eta, \gamma, \eps,\delta$, returns a set $S$ such that, with probability $1-\gamma$, \[\ex{}{f(S)}\geq (1/e-O(\eta))f(\OPT)-O\left(\frac{\lambda r}{\eta\eps}\log\frac{nr}{\eta\gamma}\cdot\log\frac{1}{\delta}\right).\]
\end{theorem}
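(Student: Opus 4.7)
The plan is to privatize the measured continuous greedy algorithm of Feldman, Naor, and Schwartz, which attains a $1/e$-approximation for non-monotone submodular maximization over a matroid polytope. The key difference from the monotone case handled by Theorem~\ref{th:main-monotone-intro} (which privatizes the standard continuous greedy of Calinescu, Chekuri, P\'al, and Vondr\'ak) is the use of the \emph{measured} update rule $y^{(t+1)}_u = y^{(t)}_u + (1/T)\, x^{(t)}_u (1 - y^{(t)}_u)$, which keeps every coordinate of $y^{(t)}$ in $[0, 1 - 1/e]$ and is essential for the $1/e$ guarantee in the non-monotone regime.

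Concretely, the algorithm runs for $T = \Theta(1/\eta)$ outer iterations, maintaining $y^{(t)} \in P(\M)$ initialized at $y^{(0)} = 0$. At each iteration $t$ it must privately produce a direction $x^{(t)} \in P(\M)$ approximately maximizing $\langle \nabla F(y^{(t)}), x \rangle$, where $F$ is the multilinear extension of $f$. Each partial derivative $\partial F/\partial y_u(y^{(t)}) = \ex{R \sim y^{(t)}}{f(R \cup \{u\}) - f(R \setminus \{u\})}$ has sensitivity at most $\lambda$ under swapping a single agent, since $f$ is a sum of $[0,\lambda]$-bounded submodular functions. The direction $x^{(t)}$ is produced by the same private matroid-greedy subroutine used for the monotone case: for each of the $r$ slots in a matroid base, an exponential mechanism with privacy parameter $\eps_0$ and score function $\partial F/\partial y_u$ selects an element among those that preserve independence. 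A union bound shows that the resulting base is approximately optimal with additive slack $O\!\left(\tfrac{r\lambda}{\eps_0}\log\tfrac{nr}{\eta\gamma}\right)$ with probability at least $1 - \gamma/T$.

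Privacy follows by advanced composition across the $T$ iterations, setting $\eps_0 = \Theta\!\left(\eps/\sqrt{T\log(1/\delta)}\right)$ so that the overall algorithm is $(\eps,\delta)$-DP. For utility, the measured continuous greedy analysis yields a one-step inequality of the form $F(y^{(t+1)}) - F(y^{(t)}) \geq (1/T)\bigl(e^{-1} f(\OPT) - F(y^{(t)}) - \beta_t\bigr) - O(1/T^2) f(\OPT)$, where $\beta_t$ is the per-iteration selection slack. Telescoping and plugging in the private slack produces $F(y^{(T)}) \geq (1/e - O(\eta))f(\OPT) - O\!\left(\tfrac{\lambda r}{\eta\eps}\log\tfrac{nr}{\eta\gamma}\cdot\log\tfrac{1}{\delta}\right)$. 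Finally, applying swap rounding (which is itself a post-processing step and therefore incurs no additional privacy cost) to $y^{(T)}$ returns an independent set $S \in \I$ with $\ex{}{f(S)} \geq F(y^{(T)})$, completing the argument.

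The main obstacle is verifying that the additive slack from private selection enters the measured continuous greedy recursion in the right way under the non-linear, $(1 - y_u^{(t)})$-damped update rather than compounding across iterations. This requires a careful step-by-step comparison to the noise-free trajectory, exploiting submodularity of $F$ along nonnegative directions together with the coordinate bound $y^{(t)} \leq 1 - 1/e$ to control the multiplicative factor in front of the accumulated error. Unlike the monotone case, where monotonicity yields a clean telescoping to $1 - 1/e$, the non-monotone analysis must propagate the private perturbations through the product-form update while preserving the $1/e$ factor on $f(\OPT)$.
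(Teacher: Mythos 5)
Your proposal correctly identifies the high-level template (privatize measured continuous greedy, use the exponential mechanism to pick each of the $r$ slots in each of $T\approx 1/\eta$ outer iterations, round with swap rounding), but the privacy accounting is where the argument diverges from the paper and fails to reach the stated bound. You propose advanced composition with $\eps_0 = \Theta\bigl(\eps/\sqrt{T\log(1/\delta)}\bigr)$, which (once the $r$ exponential-mechanism calls per iteration are also accounted for) forces $\eps_0 = \Theta\bigl(\eps/\sqrt{Tr\log(1/\delta)}\bigr)$. The per-slot selection slack is $\Theta\bigl(\lambda/\eps_0 \cdot \log(nr/\eta\gamma)\bigr)$, and it accumulates additively across all $Tr$ slots, so the final additive error becomes $\Theta\bigl(\tfrac{r^{3/2}}{\eta^{3/2}} \cdot \tfrac{\lambda\sqrt{\log(1/\delta)}}{\eps}\log\tfrac{nr}{\eta\gamma}\bigr)$. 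This is the $\lambda$-sensitive bound, not the $\lambda$-decomposable bound of the theorem: you lose a $\sqrt{r/\eta}$ factor. The whole point of the paper's Section~\ref{sec:nonmonotone} (and of \citet{GuptaLMRT10}) is that composition is \emph{not} the right tool here.

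The missing idea is the telescoping privacy argument. The paper bounds the full $Tr$-round privacy loss directly for a fixed focal agent $I$: the score function is the marginal gain of the proxy $G_I$ (built from a \emph{single} fixed set of random vectors drawn up front, not fresh each round), so the realized per-round gains telescope. For the monotone case this sums to $G_I(y^{(T,r)}) - G_I(y^{(1,0)}) \le \lambda$; Claim~\ref{claim:privacyconc} then converts the sum of \emph{expected} gains (the actual privacy loss) into this realized telescope, giving $\eps_0 \approx \eps/\log(1/\delta)$ independent of $r$ and $T$. The non-monotone case adds a genuine new difficulty that your proposal does not address: the realized marginal gains can be negative, so the privacy loss is controlled by the sum of their \emph{absolute values}, which does not telescope to a range bound for an arbitrary function. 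The paper closes this with Lemma~\ref{lem:nonMonIneq}, showing that for any $[0,\lambda]$-bounded \emph{submodular} $f_I$ and any non-decreasing chain of sets the total absolute variation is at most $2\lambda - f_I(\emptyset)$. Without this lemma (and without fixing the randomness of $G$ so the chain is literally non-decreasing over all $Tr$ rounds), the telescoping argument has nothing to attach to, and you are forced back to composition and its weaker bound. Your remark about the obstacle being how the slack propagates through the damped update in the \emph{utility} recursion is a secondary concern; the privacy accounting is the load-bearing issue.
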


Our results extend the results of~\citet{GuptaLMRT10} from cardinality to matroid constraints, as well as non-monotone functions. We note that the multiplicative factor of our utility guarantee for the monotone case is optimal for the non-private version of the problem and the additive factor is optimal for any $\eps$-differentially private algorithm for the problem (see lower bound by~\citealt[][Thm.~8.5]{GuptaLMRT10}). 
Our solution exhibits a tradeoff between the multiplicative and the additive factor of the utility through the parameter $\eta$, which depends on the chosen number of rounds of the algorithm and is a small constant.

In comparison, the general case of submodular function maximization assumes functions of bounded \emph{sensitivity}, that is, $\max_S\max_{A, B}{|f_A(S)-f_B(S)|}\leq \lambda$ for $A,B$ sets of agents that differ in at most one agent. 
The decomposability assumption allows us to improve on the utility guarantees of the general case of the maximization of monotone and non-monotone submodular $\lambda$-sensitive functions, studied by~\citet{MitrovicBKK17}, in our multiplicative and additive factor. 
We remark that~\citet{MitrovicBKK17} also note that using their general greedy algorithm for monotone submodular maximization under matroid constraints with the analysis of~\citet{GuptaLMRT10} yields a result for decomposable functions with improved additive error compared to the general case.

In proving our results, we also fix a lemma that is essential in the privacy analysis of the CPP problem of~\citet{GuptaLMRT10} and, in turn, in the result for decomposable monotone submodular maximization under matroid constraints of ~\citep{MitrovicBKK17} that was mentioned above, which allows for the improved additive factor in the utility of the algorithms.

Our contributions are summarized Table~\ref{table}.

\defcitealias{MitrovicBKK17}{MBKK17}
\defcitealias{GuptaLMRT10}{GLMRT10}

\LTcapwidth=\textwidth
\setlength\extrarowheight{8pt}
\begin{longtable}[c]{|c|c|c|c|}
\cline{3-4}
\multicolumn{2}{c|}{}&
$r-$Cardinality &
Matroid (rank $r$)\\
 \hline 
 \multirow{4}{*}{\begin{sideways}$\lambda$-decomposable\end{sideways}} &   
\multicolumn{1}{c|}{\multirow{2}{*}{Monotone}} & $\left(1-\frac{1}{e}\right)f(\OPT)-\frac{r\lambda}{\eps}\log n$  & $\left(1-\frac{1}{e}-\eta\right)f(\OPT)-\frac{r\lambda}{\eta\eps}\log n$ [This work]  \\
& & \citepalias{GuptaLMRT10} & $\frac{1}{2}f(\OPT)-\frac{r\lambda}{\eps}\log n$ \citepalias{MitrovicBKK17}\\
\cline{2-4}
& \multicolumn{1}{c|}{\multirow{2}{*}{Non-monotone}} &
 $\left(\frac{1}{e}-\eta\right)f(\OPT)-\frac{r\lambda}{\eta\eps}\log n$ & $\left(\frac{1}{e}-\eta\right)f(\OPT)-\frac{r\lambda}{\eta\eps}\log n$   \\
&  &  [This work]  & [This work]   \\
\hline   
\multirow{4}{*}{\begin{sideways}$\lambda$-sensitive\end{sideways}} &   
\multicolumn{1}{c|}{\multirow{2}{*}{Monotone}} &
$\left(1-\frac{1}{e}\right)f(\OPT)-\frac{r^{3/2}\lambda}{\eps}\log n$ &  $\frac{1}{2}f(\OPT)-\frac{r^{3/2}\lambda}{\eps}\log n$ \\
& & \citepalias{MitrovicBKK17}  &   \citepalias{MitrovicBKK17}  \\
\cline{2-4}
& \multicolumn{1}{c|}{\multirow{2}{*}{Non-monotone}} &
 $\frac{1}{e}\left(1-\frac{1}{e}\right)f(\OPT)-\frac{r^{3/2}\lambda}{\eps}\log n$ & $-$   \\
&  & \citepalias{MitrovicBKK17} &   \\
\hline
\caption{Expected utility guarantees of $(\eps,\delta)$-differentially private submodular maximization algorithms. All results omit any $\log\frac{1}{\delta}$ and constant factors.}\label{table}
\end{longtable}

We complement our theoretical bounds with experiments on a dataset of Uber pickups in Manhattan~\citep{Uber} in Section~\ref{sec:experiments}. We show that our algorithms perform better than the more general algorithms of~\citep{MitrovicBKK17} for monotone submodular maximization under matroid and cardinality constraints, and they are competitive with the non-private greedy algorithm.

\subsection{Related work}
\paragraph{Submodular maximization}
 There is a vast literature on submodular maximization (see~\citep{BuchbinderF18} for a survey), for which the greedy technique has been a dominant approach. \citet{NemhauserWF78} introduced the basic greedy algorithm for the maximization of a monotone submodular function, that iteratively builds a solution by choosing the item with the largest marginal gain with respect to the set of previously selected items. This algorithm achieves a $\left(1-\frac{1}{e}\right)$-approximation for a cardinality constraint (which is optimal~\citep{RazS97}) and a $\frac{1}{2}$-approximation for a matroid constraint.

\citet{CalinescuCPV11} developed a framework based on continuous
optimization and rounding that led to an optimal $\left(1-\frac{1}{e}\right)$-approximation for the problem. The approach is to turn the discrete optimization problem of maximizing a submodular function $f$ subject
to a matroid constraint into a continuous optimization problem of maximizing the
multilinear extension $F$ of $f$ (a continuous function that extends $f$) subject to the matroid polytope
(a convex polytope whose vertices are the feasible integral solutions). The continuous optimization
problem can be solved approximately within a $\left(1-\frac{1}{e}\right)$ factor using a \emph{Continuous Greedy} algorithm~\citep{Vondrak08}.

In each round $t=1,\ldots,T$, the Continuous Greedy algorithm estimates the marginal gains of each element $u$ with respect to the current fractional solution $y^{(t)}$ within a small sampling error, that is, it estimates $F(y^{(t)}\wedge \mathbf{1}_{u}) -F(y^{(t)})=\ex{}{f(R(y^{(t)})\cup\{u\})-f(R(y^{(t)}))}$, where $R(y)$ is a random set which contains each element $v$ independently with probability $y_v$. The algorithm then finds an independent set, $B^{(t)}$, of the matroid, which maximizes the sum of the estimated marginal gains of the items. It then updates the current fractional solution by taking a small step $\eta=1/T$ in the direction of the selected set: $y^{(t+1)}=y^{(t)}+\eta \mathbf{1}_{B^{(t)}}$. The final fractional solution $y^{(T)}$ can then be rounded to an integral one without loss~\citep{ChekuriVZ10}.

The \emph{Measured Continuous Greedy} algorithm introduced by~\citet{FeldmanNS11} is a unifying variant of the continuous greedy algorithm, which increases the coordinates of its fractional solution more slowly, and achieves a $\frac{1}{e}$-approximation factor for the more general case of non-monotone submodular functions. This is not the optimal constant factor approximation for the problem~\citep{BuchbinderF19, EneN16}, but the structure of the algorithm helps in its private adaptation.

\paragraph{Private submodular maximization}
The private algorithms of~\citep{MitrovicBKK17} and~\citep{GuptaLMRT10} are based on the discrete greedy algorithm, where the greedy step of selecting an item in each round is instead implemented via the differentially private Exponential Mechanism of~\citet{McSherryT07}, which guarantees that the item selected in each round is almost as good as the true marginal gain maximizer, with high probability.
In general, by the advanced composition property of DP~\citep{DworkRV10}, $r$ consecutive runs of an $\eps$-differentially private algorithm lead to a cumulative privacy guarantee of the order of $\sqrt{r}\eps$.
Remarkably, for the case of decomposable monotone submodular functions,~\citet{GuptaLMRT10} show that the privacy guarantee of $r$ rounds is, up to constant factors, the same as that of a single run of the Exponential Mechanism. 

The main idea of this improvement is the following. Let $A, B$ be two sets of agents which differ in the individual $I$, as $A=B\cup\{I\}$. The privacy loss of the algorithm is bounded by the sum over the rounds of the expected marginal gains of each item with respect to the valuation function of agent $I$, where the expected value is calculated over a distribution that depends on the valuation functions of the rest of the agents $B$. More formally, the privacy loss is bounded by $\sum_{i=1}^r \ex{u}{f_I(S_{i-1}\cup \{u\})-f_I(S_{i-1})}$. By a key lemma, whose proof we fix and state in Section~\ref{sec:monotone}, this is bounded by a function of the sum of the \emph{realized} marginal gains $\sum_{i=1}^r [f_I(S_{i-1}\cup \{u_i\})-f_I(S_{i-1})]=\sum_{i=1}^r [f_I(S_{i})-f_I(S_{i-1})]=f_I(S)-f_I(\emptyset)$, which in turn is bounded by $\lambda$. Note that it is important in this argument that the sum telescopes to the total utility gain of the submodular function $f_I$.

Finally, we also note that, in principle, differentially private submodular optimization is related to submodular maximization in the presence of noise~\citep{HassidimS17}. However, the structure of the noise is of multiplicative nature, so it is not clear how these algorithms could be applicable.

\subsection{Techniques}
Our algorithms for the monotone and non-monotone problems are a private adaptation of the Continuous and Measured Continuous Greedy algorithms, respectively.
They both use the Exponential Mechanism to greedily find an independent set $B^{(t,r)}$ in each round $t$ and update with this set the current fractional solution.
Our privacy analysis is based on the technique of~\citep{GuptaLMRT10}. 

Let us now explain the main challenges in its application. Recall that we use the continuous greedy algorithm to achieve the optimal multiplicative guarantee for the monotone case, which means that instead of calculating the marginal gain of each element $u\in\N$ in each round with respect to $f$, we have to estimate it with respect to $f$'s multilinear extension $F$. That is, we estimate the quantinty $F(y^{(t,i-1)}\wedge \mathbf{1}_{u}) -F(y^{(t,i-1)})=\ex{R}{f(R(y^{(t,i-1)})\cup\{u\})-f(R(y^{(t,i-1)})}$. 

First, since the random sets $R(y^{(t,i-1)})$ used for this estimation are drawn independently in each round, the final sum of estimated marginal gains of $f_I$ is not a telescoping sum. Second, even if instead we use concentration to argue that the final sum is close to the true marginal gains with respect to $F_I$, this would lead to a final telescoping sum in the order of $T\lambda$. To overcome both these problems:
\begin{enumerate}
\item We choose the smoother marginal gains $F(y^{(t,i-1)}+\eta\mathbf{1}_{u}) -F(y^{(t,i-1)})$, so that the realized marginal gain is $F_I(y^{(t,i)}) -F_I(y^{(t,i-1)})$, which, by concentration, leads to the telescoping sum $F_I(y^{(T,r)})-F_I(y^{(1,0)})$ up to the sampling error term. However, this modification is not enough, as the sampling error would be on the order of $m$, the number of agents. In the interesting regimes, this is a large enough so that we would want to avoid any dependence on $m$ in the utility or sample complexity.
\item We introduce a function $G:[0,1]^{\N}\rightarrow \mathbb{R}$. The function $G$ is not submodular but it is a proxy for $F$. To construct $G$, we draw uniform vectors $r^j\in[0,1]^{\N}$ in the beginning of the algorithm, and define $G(x)$ to be the average over samples $f(\{u\in\N: r^j_u<x_u\})$. Therefore, $G_I(x)$ is always bounded by $\lambda$ and the sum of estimated marginal gains of agent $I$ telescopes to $G_I(y^{(T,r)})-G_I(y^{(1,0)})\leq \lambda$. Hence, $G$'s sampling error only affects the utility of the algorithm.
\end{enumerate}

Finally, further applying this technique to non-monotone functions requires a bound on the sum of the \emph{absolute} marginal gains (equivalently, the ``total movement'') of a non-monotone submodular function on non-decreasing inputs. This bound does not hold in general for any non-monotone function, but it holds for submodular functions, as we show in Section~\ref{sec:nonmonotone}.

\section{Preliminaries}\label{sec:prelim}
Let $\N$ be the set of elements and let $|\N|=n$.
We denote by $\mathbf{1}_{S} \in [0,1]^n$ the indicator vector of the set $S$, that is, a vector $y=\mathbf{1}_{S}$ if $y_u=1$ for $u\in S$ and $y_u=0$ otherwise. We will slightly abuse notation and denote $\mathbf{1}_{\{u\}}$ by $\mathbf{1}_u$ and $\mathbf{1}_{\N\setminus \{u\}}$ by $\mathbf{1}_{\bar{u}}$. 	
Let $\U$ denote the uniform distribution over $[0,1]$. We will write $r\gets \U^n$ for the random vector drawn from the product of uniform distributions.
We write $\log$ for the natural logarithm.
\subsection{Submodular functions}
We consider decomposable submodular functions $f(S)=\sum_{I\in D} f_I(S)$ where $D$ is a set of agents of size $|D|=m$. More formally, we have the following definitions.
	
\begin{definition}\label{def:submodular}
	A set function $f:2^{\N}\rightarrow \mathbb{R}_{+}$ is \emph{submodular} if for all sets $S\subseteq T\subseteq \N$ and every element $u\in\N\setminus T$, we have $f(S\cup \{u\})-f(S)\geq f(T\cup \{u\})-f(T)$.
\end{definition}
	
Note that we consider only non-negative submodular functions. 
Moreover, if $f(S)\leq f(T)$ for all $S\subseteq T\subseteq\N$, we say that $f$ is monotone.
\begin{definition}\label{def:decomposable}
	A function $f:2^{\N}\rightarrow \mathbb{R}$ is \emph{$\lambda$-decomposable} if $f(S)=\sum_{I\in D}f_I(S)~\forall S\subseteq \N$, for submodular functions $f_I: 2^{\N}\rightarrow [0,\lambda]$.
\end{definition}
	
For ease of notation, we will assume $\lambda=1$. This is equivalent to adding a pre-processing step to scale the function by $1/\lambda$, which only affects the additive error of our approximation guarantees. By multiplying the additive error by $\lambda$, we can retrieve the general result for $\lambda$-decomposable functions.
	
\begin{definition}\label{def:multilinear}
	The multilinear extension of a submodular function $f$ is $F(x)=\ex{}{f(R(x))}$ where $R(x)$ is a random set which contains each element $u\in \N$ independently with probability $x_u$. Specifically, we can write \[F(x)=\sum_{S\subseteq \N} f(S) \prod_{u\in S} x_u \prod_{u\notin S} (1-x_u).\]
\end{definition}
By linearity of expectation, if $f$ is a decomposable submodular function, its multilinear extension satisfies $F(x)=\sum_{I\in D}F_I(x)$, where $F_I$ is the multilinear extension of $f_I$ for any agent $I$.
	
\subsection{Matroid constraints}\label{sec:matroid}
A matroid is formally defined as follows.	
\begin{definition}\label{def:matroid}
	A matroid is a pair $\M=(\N, \I)$ where $\I\in 2^{\N}$ is a collection of independent sets, satisfying
	\begin{enumerate}[label=(\roman*)]
	\item $A\subseteq B$, $B\in\I$ $\Rightarrow A\in \I$, and
	\item $A, B\in \I$, $|A|<|B|$ $\Rightarrow$ $\exists i\in B\setminus A$ such that $A\cup\{i\}\in \I$.
	\end{enumerate}
\end{definition}
	
Let $P(\M)\subseteq [0,1]^{\N}$ be the convex body which contains the characteristic vectors of the sets in the matroid constraint set $\I$. We assume it is down-closed, that is for $x\leq x'$ (where ``$\leq$'' denotes coordinate-wise comparison) if $x'\in P(\M)$ then $x\in P(\M)$.
	 
A maximal independent set is called \textit{a basis} for the matroid. We denote the set of bases by $\B$. Also, we let $r=\max_{B\in\I} |B|\leq n$ be the maximum size of a basis in $\I$, i.e., the \emph{rank} of the matroid. We state the following useful property of the matroid's bases.
	
\begin{lemma}[{\citealt[][Corollary 39.12A]{Schrijver03}}]\label{lem:mappingtoOPT}
	Let $\M=(\N, \I)$ be a matroid and $B_1, B_2\in \B$ be two bases. Then there is a bijection $\phi:B_1 \rightarrow B_2$ such that for every $b\in B_1$ we have $B_1\setminus\{b\}\cup\{\phi(b)\}\in\B$.
\end{lemma}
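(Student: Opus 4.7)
The plan is to exhibit the bijection as a perfect matching in a bipartite ``exchange graph'' and verify Hall's marriage condition. Since $|B_1| = |B_2| = r$, we have $|B_1 \setminus B_2| = |B_2 \setminus B_1|$, and for $b \in B_1 \cap B_2$ I simply set $\phi(b) = b$, which trivially satisfies $B_1 \setminus \{b\} \cup \{b\} = B_1 \in \B$. It therefore suffices to construct a bijection from $B_1 \setminus B_2$ to $B_2 \setminus B_1$ having the desired exchange property.

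Build a bipartite graph $H$ with parts $B_1 \setminus B_2$ and $B_2 \setminus B_1$, placing an edge between $b$ and $b'$ precisely when $B_1 \setminus \{b\} \cup \{b'\} \in \B$. A perfect matching in $H$ is exactly the bijection we seek. To produce such a matching, I would verify Hall's condition: for every $S \subseteq B_1 \setminus B_2$, its neighborhood $N(S) \subseteq B_2 \setminus B_1$ satisfies $|N(S)| \geq |S|$. The edge criterion can be rephrased via fundamental circuits: for $b' \in B_2 \setminus B_1$ the set $B_1 \cup \{b'\}$ contains a unique circuit $C(b', B_1)$, and $B_1 \setminus \{b\} \cup \{b'\} \in \B$ iff $b \in C(b', B_1) \setminus \{b'\}$. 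Hence $b' \notin N(S)$ iff $C(b', B_1) \setminus \{b'\} \subseteq B_1 \setminus S$.

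Let $U = (B_2 \setminus B_1) \setminus N(S)$. By the circuit characterization, every $b' \in U$ is spanned by $B_1 \setminus S$, so the rank of $(B_1 \setminus S) \cup U$ equals $|B_1 \setminus S| = r - |S|$. On the other hand, $(B_1 \cap B_2) \cup U \subseteq B_2$ is independent (as a subset of a basis) and is contained in $(B_1 \setminus S) \cup U$, so its rank equals its cardinality $|B_1 \cap B_2| + |U| = r - |B_1 \setminus B_2| + |U|$. Comparing the two rank values and using $|B_1 \setminus B_2| = |B_2 \setminus B_1|$ yields $|N(S)| \geq |S|$, establishing Hall's condition and thus producing the matching.

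The main obstacle is precisely this Hall-condition verification: invoking matroid axiom (ii) directly gives only a greedy sequence of swaps, not a global bijection defined on all of $B_1$ at once. The decisive step is translating non-adjacency in $H$ into the containment $C(b', B_1) \setminus \{b'\} \subseteq B_1 \setminus S$ and then converting the combinatorial exchange question into a clean two-sided rank inequality; everything else (the reduction to the symmetric difference and the final extraction of the matching from Hall's theorem) is routine.
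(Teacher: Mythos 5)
The paper does not prove this lemma; it cites it directly as Corollary~39.12A of \citet{Schrijver03}, so there is no in-paper argument to compare against. Your proof is correct and is essentially the standard proof of the strong basis-exchange property. The reduction to $B_1\setminus B_2$ and $B_2\setminus B_1$ via the identity map on $B_1\cap B_2$ is fine; the characterization of edges by fundamental circuits, $B_1\setminus\{b\}\cup\{b'\}\in\B \iff b\in C(b',B_1)$, is the standard fact that a size-$r$ subset of $B_1\cup\{b'\}$ is independent iff it omits an element of the unique circuit; and the Hall-condition verification is sound: every $b'\in U$ lies in the span of $B_1\setminus S$, so $\operatorname{rank}\bigl((B_1\setminus S)\cup U\bigr)=r-|S|$, while $(B_1\cap B_2)\cup U$ is an independent subset of it of size $r-|B_1\setminus B_2|+|U|$, giving $|U|\le|B_1\setminus B_2|-|S|$ and hence $|N(S)|\ge|S|$. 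This is a clean, self-contained proof of the cited result, suitable as a replacement for the external citation if desired.
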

	
Our randomized algorithms aim to find a fractional solution $y$ which maximizes $F(y)$. To round this solution to an integral one, we appeal to the Swap-Rounding procedure, introduced by~\citet{ChekuriVZ10}.
	
\begin{lemma}[Swap-Rounding, \citet{ChekuriVZ10}]\label{lem:rounding}
	Let $(y_1, \ldots, y_n)\in P(\M)$ be a fractional solution in the matroid polytope and $(Y_1, \ldots, Y_n)\in P(\M)\cap \{0,1\}^n$ be an integral solution obtained using randomized swap rounding. Then for any submodular function $f:\{0,1\}^n\rightarrow \mathbb{R}$ and its multilinear extension $F:[0,1]^n\rightarrow \mathbb{R}$, it holds that \[\ex{}{f(Y_1, \ldots, Y_n)}\geq F(y_1, \ldots, y_n).\]
\end{lemma}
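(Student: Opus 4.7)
The plan is to reduce the analysis of swap rounding to a pipage-style exchange argument driven by the submodularity of $f$ through its multilinear extension $F$. First I would invoke Carath\'eodory applied to the matroid polytope $P(\M)$ to write
\[ y \;=\; \sum_{k=1}^{K} \beta_k \mathbf{1}_{B_k}, \qquad \textstyle\sum_k \beta_k = 1,\; \beta_k \geq 0,\; B_k \in \B, \]
with $K \leq n+1$. Swap rounding then iteratively calls a randomized \emph{MergeBases} subroutine on pairs of bases from this list until only a single base $Y$ remains. My analysis would track the fractional vector $y^{\mathrm{cur}}$ encoded by the current weighted list after each atomic swap step and show that no such step decreases $\ex{}{F(y^{\mathrm{cur}})}$; chaining these inequalities gives $\ex{}{f(Y)} = \ex{}{F(\mathbf{1}_Y)} \geq F(y)$.

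Zooming in on one swap inside a merge of $(c_1, C_1)$ and $(c_2, C_2)$: pick any $e \in C_1 \setminus C_2$ and, by the strong matroid exchange axiom, produce $e' \in C_2 \setminus C_1$ with $C_1 - e + e' \in \B$ and $C_2 - e' + e \in \B$. With probability $c_1/(c_1+c_2)$ move $e$ into $C_2$ (ejecting $e'$), and with the complementary probability move $e'$ into $C_1$ (ejecting $e$). Setting $d := \mathbf{1}_e - \mathbf{1}_{e'}$, the contribution of the two bases to $y^{\mathrm{cur}}$ shifts by either $c_2\, d$ or $-c_1\, d$; the weighted average of these shifts is $0$, so $\ex{}{y^{\mathrm{new}}} = y^{\mathrm{cur}}$. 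The univariate function $g(t) := F(y^{\mathrm{cur}} + t\, d)$ satisfies
\[ g''(t) \;=\; -2\,\frac{\partial^2 F}{\partial x_e\, \partial x_{e'}}(y^{\mathrm{cur}} + t\, d) \;\geq\; 0, \]
since $F$ is multilinear and submodularity of $f$ is equivalent to $\partial^2 F/\partial x_u\, \partial x_v \leq 0$ for all $u \neq v$. Thus $g$ is convex, and Jensen's inequality applied to the two-point distribution on $\{c_2, -c_1\}$ with weights $c_1/(c_1+c_2), c_2/(c_1+c_2)$ (whose mean is $0$) yields $\ex{}{F(y^{\mathrm{new}})} \geq F(y^{\mathrm{cur}})$.

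Chaining this per-step guarantee across every atomic swap of every merge --- formally, treating $F(y^{\mathrm{cur}})$ as a submartingale with respect to the filtration generated by the swap coin flips --- gives $\ex{}{F(\mathbf{1}_Y)} \geq F(y)$, which is the desired conclusion because $F(\mathbf{1}_Y) = f(Y)$. The main obstacle is the bookkeeping: one must justify the strong exchange axiom that produces an $e'$ making \emph{both} sides of the swap matroid-feasible (a classical strengthening of Lemma~\ref{lem:mappingtoOPT}), track how the coefficients $c_1, c_2$ evolve within a merge (a successful swap shrinks $|C_1 \triangle C_2|$ by two, eventually collapsing the pair into a single base carrying the combined weight $c_1 + c_2$), and verify that the per-step inequality holds conditionally on the past so that it composes cleanly over the full execution. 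Once these pieces are in place, the entire argument reduces to the one-dimensional convexity fact above applied at every atomic swap.
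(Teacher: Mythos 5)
The paper does not prove this lemma: it is stated as a citation to \citet{ChekuriVZ10}, and the result is invoked as a black box to round the fractional solution produced by the continuous greedy procedures. So there is no internal proof to compare against. That said, your reconstruction is correct and is essentially the original Chekuri--Vondr\'{a}k--Zenklusen argument: decompose $y$ as a convex combination of bases, run \textsc{MergeBases} on pairs, observe that each atomic swap moves the current fractional point by $\pm d$ with $d=\mathbf{1}_e-\mathbf{1}_{e'}$ and the probabilities chosen so that the expected displacement is zero, and then use that the univariate restriction $t\mapsto F(y^{\mathrm{cur}}+td)$ is convex because $F$ is multilinear (so its pure second partials vanish) and $\partial^2 F/\partial x_e\,\partial x_{e'}\leq 0$ by submodularity. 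Jensen gives that each step is a submartingale step for $F$, and the tower property chains the inequalities to $\ex{}{F(\mathbf{1}_Y)}\geq F(y)$.

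One small point worth tightening: you decompose $y$ as a convex combination of \emph{bases} $B_k\in\B$, but the lemma only assumes $y\in P(\M)$, the (down-closed) matroid polytope, whose vertices are indicators of arbitrary independent sets, not necessarily bases. A point like $y=\tfrac12\mathbf{1}_{\emptyset}$ has no convex decomposition into bases. The standard fix --- used implicitly in this paper, which pads with dummy elements so that every $B^{(t,r)}$ has size $r$ --- is to extend the ground set with zero-marginal-value elements so that each independent set in the decomposition can be completed to a basis without changing $f$, and then run the argument exactly as you describe. With that bookkeeping acknowledged, the proof is complete and matches the cited source.
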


\subsection{Differential Privacy}\label{sec:privacy}
Informally, differential privacy is a property that a randomized algorithm satisfies if its output distribution does not change significantly under the change of a single data point, which in our case corresponds to a single agent $J$ and their function $f_J$.
	
More formally, let $D, D'\in\D$ be two sets of agent functions. We say that $D,D'$ are \emph{neighbors}, denoted as $D\sim D'$, if they differ in at most one agent, that is, $D=D'\cup \{f_J\}\setminus\{f_{J'}\}$.
\begin{definition}[Differential Privacy,~\cite{DworkMNS06}]
	A randomized algorithm $\mathcal{A}:D \rightarrow \mathcal{R}$ is  {\em $(\eps,\delta)$-differentially private} if for all neighboring sets $D, D'$ and any measurable output set $R\subseteq \mathcal{R}$, \[\Pr[\mathcal{A}(D) \in R] \leq \exp(\eps) \Pr[\mathcal{A}(D')\in R] + \delta.\]
	Algorithm $\mathcal{A}$ is {\em $(\eps,0)$-differentially private} if it satisfies the definition for $\delta=0$.
\end{definition}
	
A useful property of differential privacy is that it is closed under post-processing.
\begin{lemma}[Post-Processing,~\cite{DworkMNS06}]\label{lem:post-processing}
	Let $\mathcal{A}: \D \rightarrow \mathcal{R}$ be a randomized algorithm that is $(\eps,\delta)$-differentially private. 
	For every (possibly randomized) $f : \mathcal{R} \to \mathcal{R}'$, $f \circ \mathcal{A}$ is $(\eps,\delta)$-differentially private.
\end{lemma}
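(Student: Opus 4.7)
The plan is to prove this by reducing the randomized case to the deterministic case and then invoking the DP guarantee of $\mathcal{A}$ on an appropriate preimage. The intuition is that post-processing cannot extract more information about the input than is already present in the output of the private algorithm.

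First I would handle the deterministic case. Fix neighboring datasets $D \sim D'$ in $\D$ and any measurable $R' \subseteq \mathcal{R}'$. For deterministic $f : \mathcal{R} \to \mathcal{R}'$, define the preimage $E = f^{-1}(R') = \{ r \in \mathcal{R} : f(r) \in R'\}$, which is a measurable subset of $\mathcal{R}$ under standard regularity assumptions on $f$. Then
\[
\Pr[f(\mathcal{A}(D)) \in R'] = \Pr[\mathcal{A}(D) \in E] \leq e^{\eps} \Pr[\mathcal{A}(D') \in E] + \delta = e^{\eps} \Pr[f(\mathcal{A}(D')) \in R'] + \delta,
\]
where the inequality uses that $\mathcal{A}$ is $(\eps,\delta)$-differentially private applied to the measurable set $E$.

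Next I would extend to randomized $f$ by conditioning on its internal randomness. Write $f(r) = g(r, T)$ where $T$ is a random seed drawn independently of $\mathcal{A}$, $D$, and $D'$, and $g(\cdot, t)$ is deterministic for each realization $t$. By the deterministic case applied to $g(\cdot, t)$, for every fixed $t$,
\[
\Pr[g(\mathcal{A}(D), t) \in R'] \leq e^{\eps} \Pr[g(\mathcal{A}(D'), t) \in R'] + \delta.
\]
Taking expectation over $T$ and using independence of $T$ from $\mathcal{A}$ yields
\[
\Pr[f(\mathcal{A}(D)) \in R'] = \ex{T}{\Pr[g(\mathcal{A}(D), T) \in R' \mid T]} \leq e^{\eps}\, \ex{T}{\Pr[g(\mathcal{A}(D'), T) \in R' \mid T]} + \delta,
\]
which equals $e^\eps \Pr[f(\mathcal{A}(D')) \in R'] + \delta$, as required.

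There is no real obstacle here; the argument is structural and does not use any property of $\mathcal{A}$ beyond its DP guarantee. The only technical point worth being careful about is measurability of $f^{-1}(R')$ for the deterministic reduction, which is standard under the implicit convention that $f$ is a measurable map between the relevant output spaces, and that the randomness $T$ is defined on a product probability space independent of $D$ and of $\mathcal{A}$'s internal coins.
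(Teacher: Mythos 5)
The paper states this lemma as a cited, known result (attributed to the differential privacy literature via \cite{DworkMNS06}) and does not include a proof, so there is no in-paper argument to compare against. Your proof is correct and is the standard one: the deterministic case via the preimage $E = f^{-1}(R')$ and the measurability of that preimage, followed by the extension to randomized $f$ by conditioning on the independent seed $T$ and averaging the inequality over $T$ — note the averaging preserves both the multiplicative $e^\eps$ and the additive $\delta$ because neither depends on $t$. This matches the canonical proof (e.g., as in the Dwork--Roth monograph), and your remarks about measurability and independence of the seed are exactly the right technical caveats.
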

	
The Exponential Mechanism is a well-known algorithm, which serves as a building block for many differentially private algorithms. The mechanism is used in cases where we need to choose the optimal output with respect to some scoring function on the data set. More formally, let $\mathcal{R}$ denote the range of the outputs and let $q:\D\times \mathcal{R} \rightarrow \mathbb{R}$ be the scoring function which maps the data set - output pairs to utility scores. 
The sensitivity of the scoring function is defined as \[\Delta q = \max\limits_{r\in\mathcal{R}} \max\limits_{D\sim D'} |q(D,r)-q(D',r)|.\]

Next, we define the Exponential Mechanism and present its guarantees.
\begin{lemma}[Exponential Mechanism,~\cite{McSherryT07}]\label{lem:expmech}
	Let input set $D\in\D$, range $\mathcal{R}$, and utility function $q: \D\times \mathcal{R} \rightarrow \mathbb{R}$. 
	The {\em Exponential Mechanism} $\mathcal{O}_{\eps}(q)$ selects and outputs an element $r\in\mathcal{R}$ with probability proportional to 
	$\exp\left(\frac{\eps \cdot q(D,r)}{2\Delta q}\right)$. 
	The Exponential Mechanism is $(\eps,0)$-differentially private and with probability at least $1-\gamma$,
	\[| \max\limits_{r\in\mathcal{R}} q(D,r) - q(D, \mathcal{O}_{\eps}(q))| \leq \frac{2\Delta q}{\eps}\log\left(\frac{|\mathcal{R}|}{\gamma}\right).\]
\end{lemma}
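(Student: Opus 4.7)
The plan is to prove the two claims separately, both of which hinge on controlling the normalizer $Z(D) = \sum_{r' \in \mathcal{R}} \exp(\eps q(D, r')/(2\Delta q))$ of the output distribution of $\mathcal{O}_{\eps}(q)$.

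For the privacy claim, I would fix neighbors $D \sim D'$ and a target output $r \in \mathcal{R}$, and write the ratio of output probabilities as
\[
\frac{\Pr[\mathcal{O}_{\eps}(q) = r \mid D]}{\Pr[\mathcal{O}_{\eps}(q) = r \mid D']} = \exp\!\left(\frac{\eps\,(q(D,r) - q(D',r))}{2\Delta q}\right) \cdot \frac{Z(D')}{Z(D)}.
\]
The first factor is at most $\exp(\eps/2)$ by the sensitivity bound $|q(D,r) - q(D',r)| \leq \Delta q$. For the second factor, each term of $Z(D')$ differs from the corresponding term of $Z(D)$ by a multiplicative factor in $[\exp(-\eps/2), \exp(\eps/2)]$, so $Z(D')/Z(D) \leq \exp(\eps/2)$ as well. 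Multiplying these two bounds gives the $(\eps,0)$-differential privacy guarantee.

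For the utility claim, let $q^{*} = \max_{r \in \mathcal{R}} q(D,r)$ and set $t = \tfrac{2\Delta q}{\eps}\log(|\mathcal{R}|/\gamma)$. I would bound the weight on ``bad'' outputs from above by $|\mathcal{R}|\exp(\eps(q^{*}-t)/(2\Delta q))$ and the full normalizer from below by the contribution of a single maximizer $\exp(\eps q^{*}/(2\Delta q))$, yielding
\[
\Pr\!\left[q(D,\mathcal{O}_{\eps}(q)) \leq q^{*} - t\right] \;\leq\; \frac{|\mathcal{R}|\exp\!\bigl(\eps(q^{*}-t)/(2\Delta q)\bigr)}{\exp\!\bigl(\eps q^{*}/(2\Delta q)\bigr)} \;=\; |\mathcal{R}|\exp\!\left(-\frac{\eps t}{2\Delta q}\right) \;=\; \gamma.
\]
The absolute value in the statement is automatic here, since $q(D,\mathcal{O}_{\eps}(q)) \leq q^{*}$ by definition.

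There is no real obstacle; this is the textbook McSherry--Talwar argument. The only point worth flagging is the observation that $Z(D)$ itself changes by at most an $\exp(\eps/2)$ factor between neighbors, which is what lets the privacy loss come out to exactly $\eps$ when the exponent uses $2\Delta q$ in the denominator, rather than losing an extra factor of $2$.
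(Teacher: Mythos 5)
Your proof is correct and is the standard McSherry--Talwar argument; the paper does not prove this lemma at all but simply cites it from~\cite{McSherryT07}, so there is no internal proof to compare against. Both the privacy step (bounding the numerator ratio by $\exp(\eps/2)$ via sensitivity and the normalizer ratio $Z(D')/Z(D)$ by another $\exp(\eps/2)$) and the utility step (upper-bounding the total mass on outputs with score below $q^* - t$ by $|\mathcal{R}|$ terms each at most $\exp(\eps(q^*-t)/(2\Delta q))$, and lower-bounding $Z(D)$ by the single maximizer's contribution) are carried out correctly.
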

	
\subsection{Concentration inequalities}\label{sec:concentration}
In order to estimate the multilinear extension of $f$ on any input $y$, we sample random sets based on $y$ and query the value of the function on those sets. We will use the following concentration inequalities, to bound the sampling error of our estimate. The next lemma is proven as Theorems 4.4 and 4.5 of~\cite{MitzenmacherU17}.
\begin{lemma}[Chernoff Bounds,~\citep{MitzenmacherU17}]\label{lem:chernoff}
	Let $X_1, \ldots, X_s$ be independent random variables such that for each $j\in[s]$, $X_j\in[0,R]$. 
	Let $X=\frac{1}{s}\sum_{j=1}^s X_j$ and $\mu=\ex{}{X}$. Then
	\begin{align*}
	&\pr{}{X<(1-\zeta)\mu} \leq \exp(-s\zeta^2\mu/2R)\\
	&\pr{}{X>(1+\zeta)\mu} \leq \exp(-s\zeta^2\mu /3R)\\
	&\pr{}{X>(1+\zeta)\mu} \leq \exp(-s\zeta \mu/3R),
	\end{align*}
	where the latter inequality holds for $\zeta\geq 1$ and the first two hold for $\zeta\in(0,1)$.
\end{lemma}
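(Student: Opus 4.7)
The plan is to follow the standard moment generating function (MGF) argument for sums of bounded independent random variables. First I would rescale to reduce to the classical case of $[0,1]$-valued summands: set $Y_j = X_j / R \in [0,1]$ and $S = \sum_{j=1}^s Y_j$, so that $X = (R/s)\,S$ and $\mu_S := \E[S] = s\mu / R$. This rescaling is harmless and lets the rest of the argument match the usual textbook derivation.

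The key technical input is the per-summand MGF bound. Since $e^{ty}$ is convex in $y$, for $y \in [0,1]$ we have $e^{ty} \leq 1 + y(e^t - 1)$, whence $\E[e^{tY_j}] \leq 1 + p_j(e^t - 1) \leq \exp(p_j(e^t-1))$ where $p_j := \E[Y_j]$. Independence then gives $\E[e^{tS}] \leq \exp\bigl((e^t - 1)\mu_S\bigr)$. For the two upper-tail bounds, I would apply Markov's inequality at $t > 0$:
\[\pr{}{X > (1+\zeta)\mu} = \pr{}{S > (1+\zeta)\mu_S} \leq e^{-t(1+\zeta)\mu_S}\,\E[e^{tS}] \leq \exp\bigl(\mu_S \bigl[(e^t - 1) - t(1+\zeta)\bigr]\bigr).\]
Optimizing at $t = \log(1+\zeta)$ yields the tight Chernoff exponent $(s\mu/R)\bigl[\zeta - (1+\zeta)\log(1+\zeta)\bigr]$. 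The lower-tail bound follows from the mirror argument with $t < 0$, optimized at $t = \log(1-\zeta)$, giving the exponent $(s\mu/R)\bigl[-\zeta - (1-\zeta)\log(1-\zeta)\bigr]$.

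All that remains is to weaken the tight exponents to the simpler forms stated in the lemma. This reduces to three elementary analytic inequalities: $(1+\zeta)\log(1+\zeta) - \zeta \geq \zeta^2/3$ for $\zeta \in (0,1)$, $(1+\zeta)\log(1+\zeta) - \zeta \geq \zeta/3$ for $\zeta \geq 1$, and $(1-\zeta)\log(1-\zeta) + \zeta \geq \zeta^2/2$ for $\zeta \in (0,1)$, each verifiable by a short Taylor-expansion or convexity argument on a single real variable. I do not expect any real obstacle, since this is exactly the Mitzenmacher--Upfal derivation; the only place requiring care is pinning down the precise constants $1/2$ and $1/3$ in those one-variable inequalities, which is purely mechanical.
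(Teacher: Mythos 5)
The proposal is correct and follows the standard Mitzenmacher--Upfal derivation (MGF bound via convexity, Markov's inequality at the optimal $t$, then weaken the tight exponent via the three elementary one-variable inequalities, all of which do hold as claimed). The paper itself offers no proof of this lemma---it simply cites Theorems 4.4 and 4.5 of Mitzenmacher--Upfal---so your argument is essentially a reproduction of the cited source, extended from Poisson trials to general bounded summands via the convexity trick $e^{ty}\le 1+y(e^t-1)$, which is exactly the right move.
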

	
\begin{corollary}[Relative and Additive Chernoff Bound]\label{cor:concentration}
	Let $X_1, \ldots, X_s$ be independent random variables such that for each $j\in[s]$, $X_j\in[0,R]$. 
	Let $X=\frac{1}{s}\sum_{j=1}^s X_j$ and $\mu=\ex{}{X}$. Then for $\alpha,\beta\in[0,1]$,
	\begin{align*}
	&\pr{}{X<(1-\alpha)\mu-\beta} \leq \exp(-s\alpha\beta/R)\\
	&\pr{}{X>(1+\alpha)\mu+\beta} \leq \exp(-s\alpha\beta/3R)
	\end{align*}
\end{corollary}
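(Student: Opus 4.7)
The natural approach is to reduce both statements to the standard multiplicative Chernoff bounds in Lemma 2.11 by absorbing the additive slack $\beta$ into a multiplicative deviation $\zeta = \alpha + \beta/\mu$, so that $(1\pm\alpha)\mu\pm\beta = (1\pm\zeta)\mu$. One then just needs to verify that the exponents that come out of Lemma 2.11 with this choice of $\zeta$ dominate the exponents claimed in the corollary. The degenerate case $\mu=0$ (where $X\equiv 0$ almost surely) should be handled separately at the start: the lower-tail probability is $0$ since $(1-\alpha)\mu - \beta \le 0$ is only exceeded with positive mass if $\beta<0$, and the upper-tail bound is trivial.

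For the \emph{lower tail}, set $\zeta = \alpha + \beta/\mu$. The event $\{X < (1-\alpha)\mu - \beta\}$ is exactly $\{X < (1-\zeta)\mu\}$. If $\zeta\in(0,1)$ the first bound of Lemma 2.11 gives probability at most $\exp(-s\zeta^2\mu/(2R))$. Expanding, $\zeta^2\mu = (\alpha+\beta/\mu)^2\mu = \alpha^2\mu + 2\alpha\beta + \beta^2/\mu \ge 2\alpha\beta$, so $s\zeta^2\mu/(2R) \ge s\alpha\beta/R$, yielding the claim. If $\zeta \ge 1$, then $(1-\zeta)\mu \le 0$ and the deviation event is empty (after again noting $X \ge 0$), so the bound is trivial.

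For the \emph{upper tail}, again set $\zeta = \alpha+\beta/\mu$, so $\{X > (1+\alpha)\mu + \beta\} = \{X > (1+\zeta)\mu\}$. Split on whether $\zeta < 1$ or $\zeta \ge 1$. In the first case, Lemma 2.11 gives probability at most $\exp(-s\zeta^2\mu/(3R))$, and by the same expansion $\zeta^2\mu \ge 2\alpha\beta \ge \alpha\beta$, finishing this case. In the second case, the third bound of Lemma 2.11 applies, giving probability at most $\exp(-s\zeta\mu/(3R))$; now $\zeta\mu = \alpha\mu + \beta \ge \beta \ge \alpha\beta$ since $\alpha\in[0,1]$, so this too is at most $\exp(-s\alpha\beta/(3R))$.

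There is no substantive obstacle here; the entire content is the algebraic verification that $\zeta^2\mu$ (when $\zeta<1$) and $\zeta\mu$ (when $\zeta\ge 1$) are both at least $\alpha\beta$, together with careful treatment of the $\mu=0$ and $\zeta\ge 1$ edge cases so that Lemma 2.11 can be invoked on the correct regime. Everything else is a direct substitution into the multiplicative Chernoff bounds already stated.
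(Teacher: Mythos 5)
Your proof is correct and takes essentially the same route as the paper: the same substitution $\zeta = \alpha + \beta/\mu$, the same use of $\zeta^2\mu \ge 2\alpha\beta$, and the same case split on $\zeta < 1$ versus $\zeta \ge 1$ for the upper tail (the paper writes $\ell$ instead of $\zeta$). You are slightly more careful about the degenerate edge cases ($\mu = 0$, and $\zeta \ge 1$ in the lower tail), which the paper dispatches with the blanket remark that $\mu \le \beta$ makes the lower-tail claim trivially true; otherwise the two proofs are the same argument.
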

\begin{proof}
	We first prove the first inequality. Note that if $\mu \le \beta$ then the inequality is trivially true so we only need to consider $\mu>\beta$. 
	Let $\ell = \alpha + \frac{\beta}{\mu}$. Notice that $\ell^2 \ge \frac{2\alpha\beta}{\mu}$. Thus, by Lemma~\ref{lem:chernoff},
	\[\pr{}{X < (1-\alpha)\mu-\beta} =\pr{}{X < (1-\ell)\mu} \leq \exp(-s\ell^2 \mu/2R) \le \exp(-s\alpha\beta/R).\]
	Next we prove the second inequality. Again let $\ell = \alpha + \frac{\beta}{\mu}$. If $\ell < 1$ then by Lemma~\ref{lem:chernoff},
	\[\pr{}{X > (1+\alpha)\mu+\beta} \leq \exp(-s \ell^2 \mu/3R) \le \exp(-2s\alpha\beta/3R).\]
	If $\ell \ge 1$ then
	$\pr{}{X > (1+\alpha)\mu+\beta} \leq \exp(-s\ell \mu/3R) \le \exp(-s\beta/3R) \le \exp(-s\alpha\beta/3R).$
	This concludes the proof of the corollary.
\end{proof}

\section{Monotone Decomposable Submodular Maximization}\label{sec:monotone}
In this section, we present our differentially private algorithm for the problem of maximizing a $1$-decomposable monotone function $f(S)=\sum_{I\in D} f_I(S)$ under matroid constraints $\M=(\N,\I)$. That is, $f$ consists of the sum of $|D|=m$ non-negative monotone submodular functions $f_I:2^{\N}\rightarrow [0,1]$ and thus $\max_{S}f(S)\leq m$. 
Let $OPT=\argmax_{S\in\I}f(S)$ be an optimal solution and $f(\OPT)$ its value.
	
Algorithm~\ref{alg:ContGreedy} is an adaptation of the Continuous Greedy algorithm introduced by~\cite{CalinescuCPV11}. In each round $t\in[T]$, Algorithm~\ref{alg:ContGreedy} greedily constructs a feasible set $B^{(t,r)}\in\I$ and overall takes an $\eta$-step towards that direction, that is, $y^{(t+1,0)}=y^{(t,0)}+\eta \mathbf{1}_{B^{(t,r)}}$. To construct the feasible set $B^{(t,r)}\in\I$, in each $(t,i)$ round, the algorithm picks an element $u^{(t,i)}\in\N^{(t,i)}$ which is feasible with respect to the current set $B^{(t,i-1)}$ and which approximately maximizes the marginal gain from the current fractional solution $y^{(t,i-1)}$.

We denote the true marginal gain of an element $u\in\N$ in round $(t,i)$ by \[w_D^{(t,i)}(u)= F(y^{(t,i-1)} + \eta\mathbf{1}_{u})-F(y^{(t,i-1)}).\]
	
We let $G(x)$ be the estimate of $F(x)$ for any point $x\in [0,1]^n$. To compute $G$, we generate $s$ uniformly random vectors $r^j\gets \U^n$ for $j\in[s]$ in the beginning of the algorithm and set \[G(x) = \frac{1}{s}\sum_{j=1}^s f(\{ u\in\N: r_u^j < x_u \}).\]
Thus, the estimated marginal gain of an element $u\in\N$ in round $(t,i)$ is \[\tw_D^{(t,i)}(u)= G(y^{(t,i-1)} + \eta\mathbf{1}_{u})-G(y^{(t,i-1)}).\]
	
We similarly define $G_I(x) = \frac{1}{s}\sum_{j=1}^s f_I(\{ u\in\N: r_u^j < x_u \})$ for any agent $I$ and write $w_I^{(t,i)}(u)$ for the marginal gain with respect to $F_I$ and $\tw_I^{(t,i)}(u)$ for the marginal gain with respect to $G_I$.
	
To find the approximately maximizing element $u^{(t,i)}$ in each round, we use the Exponential Mechanism, $\mathcal{O}_{\eps_0}(\cdot)$, with the scoring function $\tw_D^{(t,i)}$. Note that the the scoring function does not explicitly take the protected data $\{f_I\}_{I\in D}$ as an input but we denote its dependence on the agent set $D$ as a subscript.
	
\begin{algorithm}[H]
\caption{$\textsc{Private Continuous Greedy}$}\label{alg:ContGreedy}
\begin{algorithmic}[1]
	\State \textbf{Input}: Utility parameters $\eta,\gamma\in(0,1]$, privacy parameters $\eps, \delta\in(0,1]$, and set of agents $D$.
	\State Let $T \gets \lceil\frac{1}{\eta}\rceil$ and $\eps_0\gets 2\log\left(1+\frac{\eps}{4+\log(1/\delta)}\right)$.\; \label{step:eps0}
	\State Draw $s=6r^2T^4\log(n/\gamma)$ independent random vectors such that $r^j\gets \U^n$ for all $j\in[s]$.\;
	\State $y^{(1,0)} = \mathbf{1}_{\emptyset}$.
	\For{$t=1, \ldots, T$}
	\State $B^{(t,0)}=\emptyset$.\;
	\For{$i=1, \ldots, r$}
		\State Let $\N^{(t,i)}=\{u\in\N\setminus B^{(t,i-1)} : B^{(t,i-1)}\cup \{u\}\in \I\}$.\;
		\If{$\N^{(t,i)}=\emptyset$} let $y^{(t+1,0)}=y^{(t,i-1)}$ and break the loop.\; \EndIf
		\State Define $\tw_D^{(t,i)}(u)= G(y^{(t,i-1)} + \eta\mathbf{1}_{u})-G(y^{(t,i-1)})$ for all $u\in \N^{(t,i)}$.\;
		\State Let $u^{(t,i)}\gets \mathcal{O}_{\eps_{0}}(\tw_D^{(t,i)})$.\;
		\State Let $y^{(t,i)} = y^{(t,i-1)} + \eta \mathbf{1}_{u^{(t,i)}}$.
		\State Let $B^{(t,i)}\gets B^{(t,i-1)}\cup \{u^{(t,i)}\}$.\;
		\EndFor
		\State $y^{(t+1,0)} = y^{(t,r)}$.
		\EndFor 
		\State \Return \textsc{Swap-Rounding}($y^{(T,r)}, \I)$.
\end{algorithmic}
\end{algorithm}
	
The main result of this section is the following.
\begin{theorem}\label{th:main-monotone}
	Let $f:2^{\N}\rightarrow \mathbb{R}_+$, where $|\N|=n$, be a monotone, $1$-decomposable, submodular function and $\M=(\N,\I)$ a matroid of rank $r$. Algorithm~\ref{alg:ContGreedy} with parameters $\eta$ and $\gamma$ is $(\eps,\delta)$-differentially private and returns a set $S\in\I$ such that, with probability $1-\gamma$, \[\ex{}{f(S)}\geq (1-1/e-O(\eta))f(\OPT)-O\left(\frac{r}{\eta\eps}\log\frac{nr}{\eta\gamma}\cdot\log\frac{1}{\delta}\right).\] Algorithm~\ref{alg:ContGreedy} makes $O\left(\frac{nr^3}{\eta^5}\log\frac{n}{\gamma}\right)$ oracle calls.
\end{theorem}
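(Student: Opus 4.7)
The plan is to split the argument into privacy, utility, and an oracle-complexity tally. Algorithm~\ref{alg:ContGreedy} is an adaptive composition of $Tr$ calls to the Exponential Mechanism $\mathcal{O}_{\eps_0}$ with score $\tw_D^{(t,i)}$ of sensitivity at most $1$, followed by deterministic updates and a final post-processing by \textsc{Swap-Rounding}. Naive or advanced composition over the $Tr$ rounds would degrade the additive error by a factor of $r$ or $\sqrt{r}$; instead I would follow the telescoping technique of~\citet{GuptaLMRT10}, adapted here to operate on the sample-based proxy $G$ instead of on $F$.

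For privacy, fix neighbors $D=D'\cup\{f_I\}$. The log-ratio of the transcript distributions decomposes round by round into a realized term $\eps_0\tw_I^{(t,i)}(u^{(t,i)})/2$ and a normalization term $\log(Z_{D'}^{(t,i)}/Z_D^{(t,i)})$, which a standard convexity argument bounds by $\eps_0\ex{u\sim\mathcal{O}_{\eps_0}(\tw_{D'}^{(t,i)})}{\tw_I^{(t,i)}(u)}/2$. Summed across all $Tr$ rounds, the realized contribution telescopes to $\eps_0(G_I(y^{(T,r)})-G_I(y^{(1,0)}))/2\le\eps_0/2$, because $y^{(t,i)}=y^{(t,i-1)}+\eta\mathbf{1}_{u^{(t,i)}}$ and $G_I\in[0,1]$ is deterministic in the fixed samples $\{r^j\}$. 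For the sum of expected contributions I would invoke the corrected version of the key lemma of~\citet{GuptaLMRT10} restated earlier in this section, which converts a sum of expected marginal gains into a high-probability bound on the corresponding sum of realized gains, at the price of a $\delta$ failure probability and an $O(\log(1/\delta))$ factor. The total privacy loss is thus at most $\eps_0\cdot O(1+\log(1/\delta))$ except with probability $\delta$, so the choice $\eps_0=2\log(1+\eps/(4+\log(1/\delta)))$ makes this exactly $(\eps,\delta)$-DP; Lemma~\ref{lem:post-processing} handles \textsc{Swap-Rounding}.

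For utility, I would condition on two high-probability events. First, by Lemma~\ref{lem:expmech} with sensitivity $1$ and a union bound over the $Tr$ rounds, each Exponential Mechanism call obeys $\tw_D^{(t,i)}(u^{(t,i)})\ge \max_{u\in\N^{(t,i)}}\tw_D^{(t,i)}(u)-O(\log(nTr/\gamma)/\eps_0)$. Second, by Corollary~\ref{cor:concentration} and the choice $s=6r^2T^4\log(n/\gamma)$, a union bound over the $O(nTr)$ queries gives uniform closeness of $G$ to $F$ strong enough that estimated marginals match the true multilinear marginals $w_D^{(t,i)}(u)=F(y^{(t,i-1)}+\eta\mathbf{1}_u)-F(y^{(t,i-1)})$ within $O(\eta/(Tr))f(\OPT)$. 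Given these events, the standard Continuous Greedy analysis of~\citet{CalinescuCPV11} proceeds: for each outer round $t$, Lemma~\ref{lem:mappingtoOPT} yields a bijection witnessing $\sum_{i=1}^r\max_u w_D^{(t,i)}(u)\ge \eta(f(\OPT)-F(y^{(t,0)}))$ by monotonicity and submodularity. Combining with the EM and sampling errors yields the recursion $F(y^{(t+1,0)})-F(y^{(t,0)})\ge\eta(f(\OPT)-F(y^{(t,0)}))-O(\eta^2)f(\OPT)-O(r\log(nTr/\gamma)/\eps_0)$. Unrolling over $T=\lceil 1/\eta\rceil$ rounds and using $(1-\eta)^T\le 1/e$ gives $F(y^{(T,r)})\ge(1-1/e-O(\eta))f(\OPT)-O(r\log(nTr/\gamma)/(\eta\eps_0))$. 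Lemma~\ref{lem:rounding} transfers this to $\ex{}{f(S)}$, and substituting $\eps_0=\Theta(\eps/\log(1/\delta))$ yields the stated utility bound. The oracle complexity is $Tr$ rounds $\times$ $n$ candidates $\times$ $s$ samples per query, giving $O(nr^3\log(n/\gamma)/\eta^5)$ calls.

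The main obstacle is the privacy step: executing the Gupta et al. telescoping argument when scores come from a continuous extension rather than $f$ directly. Two ideas make it work. First, replace $f$ by the sample-based proxy $G$, whose per-agent component $G_I$ is a deterministic function of the trajectory so that the realized sum of marginal gains of agent $I$ telescopes to a bound of $1$; if one tried fresh samples per round, the telescope would fail and the concentration correction would cost either a factor of $T$ or a sampling size scaling with the number of agents $m$. Second, use the smoother marginal $G(y+\eta\mathbf{1}_u)-G(y)$ as the score rather than the discrete marginal $G(y\wedge\mathbf{1}_u)-G(y)$, so that the Exponential Mechanism's realized increments align with the update rule $y^{(t,i)}=y^{(t,i-1)}+\eta\mathbf{1}_{u^{(t,i)}}$ and telescope exactly across rounds. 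The corrected key lemma restated earlier is then what converts expected-gain privacy loss into realized-gain privacy loss, closing the argument.
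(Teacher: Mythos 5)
Your proposal matches the structure of the paper's proof: the privacy argument via the realized/normalization decomposition, the telescoping of $\tw_I$ to $G_I(y^{(T,r)})-G_I(y^{(1,0)})\le 1$, the corrected key concentration bound (Claim~\ref{claim:privacyconc}), the utility recursion with sampling and Exponential-Mechanism error, and swap-rounding; these are exactly Theorems~\ref{thm:privacy-monotone} and~\ref{thm:utility-monotone} combined via Lemma~\ref{lem:rounding}.

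One step, as written, would fail. You justify the sampling-error bound by ``a union bound over the $O(nTr)$ queries.'' But the query points $y^{(t,i-1)}$ at which $G$ is evaluated are determined adaptively: they depend on the outputs of earlier Exponential-Mechanism rounds, which in turn depend on $G$ and hence on the samples $\{r^j\}$ drawn at the start. You cannot treat the realized query points as a fixed collection of $O(nTr)$ events and apply a direct union bound. The paper's Lemma~\ref{lem:gainsamplingerror} handles this adaptivity by union-bounding over all $n^{rT}$ possible pick sequences times the $n$ candidate elements, i.e.\ $n^{rT+1}$ events in total, and it is exactly this larger count that drives the choice $s=6r^2T^4\log(n/\gamma)$ you quote; with $O(nTr)$ nonadaptive query points a much smaller $s$ would suffice. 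A second, milder imprecision: your per-round normalization log-ratio should carry the factor $(e^{\eps_0/2}-1)$ from the convexity inequality $e^x\le 1+\frac{e^{\eps_0/2}-1}{\eps_0/2}x$ on $[0,\eps_0/2]$, not $\eps_0/2$; this is what recovers the exact $\eps_0$ of line~\ref{step:eps0} (for small $\eps_0$ the two coincide up to constants, so your $O(\cdot)$ accounting still holds).
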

	
We prove the theorem by combining the utility and privacy guarantees of our algorithm, as stated in Theorem~\ref{thm:utility-monotone} and Theorem~\ref{thm:privacy-monotone}, respectively. We remark that Theorem~\ref{thm:utility-monotone} lower bounds the utility of the fractional solution $F(y^{(T,r)})$. Since $y^{(T,r)}=\sum_{t=1}^T\eta \mathbf{1}_{B^{(t,r)}}$, where $B^{(t,r)}\in\I$ for all $t\in[T]$, it follows that $y^{(T,r)}\in \mathcal{P}(\M)$ and Lemma~\ref{lem:rounding} can be applied to yield the final guarantees of the integral solution returned by the swap-rounding process.\footnote{Note that since $1+\eta\geq T\eta\geq 1$, we might need to scale down $y^{(T,r)}$ by $(1+\eta)$ to ensure that $y^{(T,r)}\in \mathcal{P}(\M)$. This would only change the multiplicative factor of the utility guarantee by a constant in the term $O(\eta)$.} We also note that the number of oracle calls corresponds to the number of samples $s$ used to estimate $G$ accurately in all possible $y^{(t,i)}$ intermediate points of the algorithm. 
	
\subsection{Function Estimation}
We first show that $G$ is a good proxy for $F$ by bounding the sampling error. The number of samples required by the algorithm is determined as such, so that the sampling error is bounded for any possible run (i.e., sequence of picks) of the algorithm. 

Recall that we define one sample from a multilinear extension $F(x)$ as $f(\{u\in\N: v_u<x_u\})$ where $v\gets \U^n$. The following lemma holds for any submodular function $f$ and shows that each sample's expected value is indeed $F(x)$.
\begin{lemma}\label{lem:expectationG}
	For a submodular function $f:2^{\N}\rightarrow \mathbb{R}$ and its multilinear extension $F$, for any point $x\in[0,1]^{\N}$ \[\ex{v\gets \U^n}{f(\{u\in\N: v_u<x_u\})}=F(x).\]
\end{lemma}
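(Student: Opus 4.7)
The plan is to show that the random set $S(v) := \{u \in \N : v_u < x_u\}$, where $v \gets \U^n$, has exactly the same distribution as $R(x)$ from Definition~\ref{def:multilinear}, and then invoke the definition of $F$.

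First I would observe that since $v_u$ is drawn uniformly from $[0,1]$, we have $\pr{}{v_u < x_u} = x_u$ for each $u \in \N$ (using $x_u \in [0,1]$). Moreover, because the coordinates of $v$ are drawn independently under $\U^n$, the events $\{v_u < x_u\}_{u \in \N}$ are mutually independent. Hence, for any fixed $S \subseteq \N$,
\[
\pr{v\gets\U^n}{S(v) = S} = \prod_{u \in S} \pr{}{v_u < x_u} \prod_{u \notin S} \pr{}{v_u \ge x_u} = \prod_{u \in S} x_u \prod_{u \notin S} (1 - x_u).
\]
This is exactly the probability that $R(x) = S$ in Definition~\ref{def:multilinear}.

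Then I would conclude by expanding the expectation and matching the definition of $F$:
\[
\ex{v\gets\U^n}{f(S(v))} = \sum_{S \subseteq \N} f(S)\, \pr{}{S(v) = S} = \sum_{S \subseteq \N} f(S) \prod_{u \in S} x_u \prod_{u \notin S} (1-x_u) = F(x).
\]
There is no real obstacle here; the only subtlety worth noting is that the boundary event $\{v_u = x_u\}$ has probability zero, so writing the inequality as strict or non-strict does not affect the distribution. Submodularity and monotonicity are not used in the proof; the identity is purely a statement about the multilinear extension, and so the lemma holds for any set function $f$ with its multilinear extension $F$.
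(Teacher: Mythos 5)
Your proof is correct. It takes a different route from the paper, which proves the identity by induction on $n$: the base case $n=1$ is computed directly, and the inductive step conditions on whether $v_n < x_n$ and applies the hypothesis to the restriction to $\N\setminus\{n\}$. Your argument is more direct and conceptual: you observe that $S(v) := \{u \in \N : v_u < x_u\}$ is \emph{distributed identically} to the random set $R(x)$ in Definition~\ref{def:multilinear}, since the coordinates of $v$ are independent uniforms and $\pr{}{v_u < x_u} = x_u$, so the expectation identity follows immediately by unwinding the definition of $F$. The inductive proof re-derives the product structure of the distribution one coordinate at a time; your proof exposes it in one step. Both are fully rigorous, but yours is shorter and makes the key mechanism (a sampling scheme for $R(x)$) transparent, which is precisely how the quantity $G$ is used elsewhere in the paper. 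You are also right that submodularity plays no role here; the paper states the lemma for submodular $f$ only because that is the setting in which it is invoked.
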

\begin{proof}
We prove this result by induction on $n$. For simplicity, we identify the $n$ element set $\N$ with $[n]$. Recall that, by definition,  $F(x) = \sum_{S \subseteq 2^{\N}} f(S) \prod_{i \in S} x_i \prod_{i \not\in S} (1-x_i).$ For $n = 1$, it holds that
\begin{equation*}
	\ex{v\gets \U^n}{f(\{u\in\N: v_u<x_u\})}= x_1 \cdot f(\{1\}) + (1-x_1)\cdot f(\emptyset)
	= \sum_{S \subseteq 2^{\{1\}}} f(S) \prod_{i \in S} x_i \prod_{i \not\in S} (1-x_i).
\end{equation*}
Now suppose that the statement holds for up to $n-1$, in which case we have that
\begin{align*}
	\ex{v\gets \U^{n}}{f(\{u\in\N: v_u<x_u\})}
	 = &\pr{}{v_{n} < x_{n}}\cdot \ex{v\gets \U^{n}}{f(\{n\} \cup \{u\in \N\setminus \{n\}: v_u<x_u\} )} \\
	& + \pr{}{v_{n} \geq x_{n}}\cdot \ex{v\gets \U^{n}}{f(\{u\in \N\setminus\{n\}: v_u<x_u\})}\\
	= &\pr{}{v_{n} < x_{n}} \cdot \sum_{S \subseteq 2^{\N \setminus \{n\}}} f(S \cup \{n\}) \prod_{i \in S} x_i \prod_{i \notin S, i\neq n} (1-x_i) \\
	& + \pr{}{v_{n} \geq x_{n}}\cdot \sum_{S \subseteq 2^{\N \setminus \{n\}}} f(S) \prod_{i \in S} x_i \prod_{i \notin S, i\neq n} (1-x_i) \\
	=& x_{n} \cdot \sum_{S \subseteq 2^{\N \setminus \{n\}}} f(S \cup \{n\}) \prod_{i \in S} x_i \prod_{i \notin S, i\neq n} (1-x_i) \\
	& + (1 - x_{n}) \cdot \sum_{S \subseteq 2^{\N \setminus \{n\}}} f(S) \prod_{i \in S} x_i \prod_{i \notin S, i\neq n} (1-x_i) \\
	=&\sum_{x_{n} \in S \subseteq 2^{\N}} f(S) \prod_{i \in S} x_i \prod_{i \notin S, i\neq n} (1-x_i)\\
	 & + \sum_{S \subseteq 2^{\N \setminus \{n\}}} f(S) \prod_{i \in S} x_i \prod_{i \notin S} (1-x_i) \\
	=& \sum_{S \subseteq 2^{\N}} f(S) \prod_{i \in S} x_i \prod_{i \notin S} (1-x_i) =F(x).
\end{align*}
This concludes the proof of the lemma.
\end{proof}

The next lemma shows that the estimated marginal gains $\tw_D^{(t,i)}(u)$ are indeed concentrated around their expected value $w_D^{(t,i)}(u)$.
\begin{lemma}\label{lem:gainsamplingerror}
With probability at least $1-2\gamma$, for any sequence of points picked by the algorithm $\left\{\{u^{(t,i)}\}_{i=1}^r\right\}_{t=1}^T$ and any $u\in\N$, it holds that
\[(1-\eta)w_D^{(t,i)}(u)-\frac{\eta f(\OPT)}{rT}\leq \tw_D^{(t,i)}(u)\leq (1+\eta)w_D^{(t,i)}(u)+\frac{\eta f(\OPT)}{rT}.\]
\end{lemma}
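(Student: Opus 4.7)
I would apply a Chernoff-type tail bound pointwise and then union-bound over all pairs $(y, u)$ that could arise during an execution. Fix any prefix of picks; this determines the point $y = y^{(t, i-1)}$, and for any $u \in \N$ the estimator decomposes as
\[
\tw_D^{(t,i)}(u) = \frac{1}{s}\sum_{j=1}^{s} X_j,\qquad X_j := f\bigl(R^j(y + \eta\mathbf{1}_u)\bigr) - f\bigl(R^j(y)\bigr),
\]
where $R^j(x) = \{v \in \N : r^j_v < x_v\}$. Since the $r^j$ are drawn i.i.d., so are the $X_j$; by Lemma~\ref{lem:expectationG} applied at $y + \eta\mathbf{1}_u$ and at $y$, $\ex{}{X_j} = F(y+\eta\mathbf{1}_u) - F(y) = w_D^{(t,i)}(u)$.

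Next I would bound the range of $X_j$. The sets $R^j(y+\eta\mathbf{1}_u)$ and $R^j(y)$ differ only in whether they contain $u$, so $X_j = 0$ unless $r^j_u \in [y_u, y_u + \eta)$, in which case $X_j = f(R^j(y) \cup \{u\}) - f(R^j(y)) \in [0, f(\{u\})]$ by monotonicity and submodularity. Removing loops from $\N$ without loss of generality (they can never be queried and contribute nothing to the algorithm), every $u$ satisfies $\{u\} \in \I$ by hereditariness, hence $f(\{u\}) \leq f(\OPT)$. Thus $X_j \in [0, f(\OPT)]$, and Corollary~\ref{cor:concentration} with $\alpha = \eta$, $\beta = \eta f(\OPT)/(rT)$, and $R = f(\OPT)$ yields
\[
\pr{}{\bigl|\tw_D^{(t,i)}(u) - w_D^{(t,i)}(u)\bigr| > \eta\,w_D^{(t,i)}(u) + \tfrac{\eta f(\OPT)}{rT}} \;\leq\; 2\exp\!\bigl(-s\eta^2/(3rT)\bigr).
\]

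Finally I would close with a union bound. A reachable $y^{(t,i-1)}$ is determined by at most $Tr$ prior picks from $\N \cup \{\text{halt}\}$, giving at most $(n+1)^{Tr}$ possible values; multiplied by $n$ choices of $u$ this contributes a factor of at most $n^{Tr+2}$. Substituting $s = 6r^2 T^4 \log(n/\gamma)$ with $T = 1/\eta$ makes the per-event tail $2(n/\gamma)^{-2r/\eta}$, which beats the union-bound factor with room to spare and leaves the overall failure probability below $2\gamma$ (the factor of $2$ matching the two one-sided tails of the lemma).

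The main obstacle is step two: the naive range bound $X_j \leq f(\N) \leq m$ would leave an unwanted factor of $m$ in the required $s$. Removing this dependence requires combining monotonicity, submodularity, and the matroid feasibility of every singleton that the algorithm can query to establish the tighter bound $X_j \leq f(\OPT)$. Everything else---applying Lemma~\ref{lem:expectationG}, invoking Corollary~\ref{cor:concentration}, and counting trajectories---is routine arithmetic.
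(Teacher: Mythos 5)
Your proof takes essentially the same route as the paper's: identify each $\tw_D^{(t,i)}(u)$ as an average of i.i.d.\ differences $X_j = f(R^j(y+\eta\mathbf{1}_u)) - f(R^j(y))$, use Lemma~\ref{lem:expectationG} to get the mean, bound the range of $X_j$ by $[0,f(\OPT)]$ via monotonicity, submodularity, and feasibility of the singleton, apply Corollary~\ref{cor:concentration} with the same $\alpha,\beta,R$, and union bound over the at most $n^{rT}$ trajectories and $n$ elements. One small point in your favor: the paper's step ``by non-negativity $X_j \le f(\OPT)$'' silently uses $f(\{u\}) \le f(\OPT)$, which indeed relies on $\{u\}\in\I$ (hereditariness) as you note; conversely, your trajectory count $(n+1)^{Tr}\cdot n \le n^{Tr+2}$ is a little loose compared to the paper's $n^{rT+1}$ and the ``room to spare'' claim should be checked against the chosen constant in $s$, but the final accounting goes through.
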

\begin{proof}
Fix a point $y^{(t,i-1)}$ and element $u\in\N$. Then $\tw_D^{(t,i)}=G(y^{(t,i-1)}+\eta\mathbf{1}_u)-G(y^{(t,i-1)})=\frac{1}{s}\sum_{j=1}^s X_j$ where we define
\[X_j=f(\{v\in\N: r^j_v<y^{(t,i-1)}_v+\eta \mathbbm{1}\{v=u\}\})-f(\{v\in\N: r^j_v<y^{(t,i-1)}_v\}).\]
The random variables $X_j$ are independent since the random vectors $r^j$ are also independent. Let us denote the random set based on the vector $r^j$ as $R^j=\{v\in\N: r^j_v<y^{(t,i-1)}_v\}$. Then either $X_j=0$ or $X_j=f(R^j\cup\{u\})-f(R^j)$. In the latter case, by submodularity, $X_j\leq f(\{u\})-f(\emptyset)$ and by non-negativity $X_j\leq f(\OPT)$. By monotonocity, $X_j\geq 0$. Thus, it always holds that $X_j\in[0,f(\OPT)]$.
		
By Lemma~\ref{lem:expectationG} and linearity of expectation, it also holds that 
\begin{equation}\label{eq:expgain}
\ex{}{\tw_D^{(t,i)}(u)}=\frac{1}{s}\sum_{j=1}^s\ex{}{X_j}=F(y^{(t,i-1)}+\eta\mathbf{1}_u)-F(y^{(t,i-1)})=w_D^{(t,i)}(u) 		\end{equation}
		
Note that the number of samples $s=6r^2T^4\log(n/\gamma)\geq \frac{3rT}{\eta^2}\log(n\cdot n^{rT}/\gamma)$.
By Corollary~\ref{cor:concentration} and equation~\eqref{eq:expgain} and since $X_j\in[0,f(\OPT)]$, we have that:
\begin{align*}
		&\pr{}{\tw_D^{(t,i)}(u)<(1-\eta)w_D^{(t,i)}(u)-\frac{\eta f(\OPT)}{rT}} \leq \exp\left(\frac{-s\eta \frac{\eta f(\OPT)}{rT}}{f(\OPT)}\right)=\exp\left(-3rT\log\frac{n}{\gamma}\right)\leq \frac{\gamma}{n^{rT+1}}\\
		&\pr{}{\tw_D^{(t,i)}(u)>(1+\eta)w_D^{(t,i)}(u)+\frac{\eta f(\OPT)}{rT}} \leq \exp\left(\frac{-s\eta \frac{\eta f(\OPT)}{rT}}{3f(\OPT)}\right)=\exp\left(-rT\log\frac{n}{\gamma}\right)\leq \frac{\gamma}{n^{rT+1}}
	\end{align*}
	It follows by union bound that with probability $1-2\gamma/n^{rT+1}$, \[(1-\eta)w_D^{(t,i)}(u)-\frac{\eta f(\OPT)}{rT}\leq \tw_D^{(t,i)}(u)\leq (1+\eta)w_D^{(t,i)}(u)+\frac{\eta f(\OPT)}{rT}.\]
	The number of all possible sequences of points $y^{(t,i)}$ is at most the number of possible sequences of picked points $u^{(t,i)}$, which is bounded by $n^{rT}$. By union bound over all sequences of points $y^{(t,i)}$ and elements $u\in\N$, the inequalities hold with probability at least $1-2\gamma$.
\end{proof}
	
\subsection{Utility Analysis}
Having bounded the error introduced by sampling, we now move on to bounding the error introduced by privacy. 
The next claim (Claim~\ref{claim:sensitivity}) bounds the sensitivity of the estimated marginal gains, which in turn bounds the error of the Exponential Mechanism (Claim~\ref{claim:expmecherror}). Note that, for ease of exposition, we consider neighboring datasets to be $A, B$ such that $A\setminus B=I$, which we later make up for in the privacy analysis of this section.
\begin{claim}\label{claim:sensitivity}
	Let us denote the sensitivity of $\tw^{(t,i)}_D$ by $\Delta\tw^{(t,i)}$. 
	For any $t\in[T], i\in[r]$, it holds that $\Delta\tw^{(t,i)}\leq 1$.
\end{claim}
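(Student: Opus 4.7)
The plan is to reduce the sensitivity bound for $\tw_D^{(t,i)}$ to a pointwise bound for a single agent's contribution, and then invoke monotonicity, submodularity, and the range $[0,1]$ of $f_I$.

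First, I would observe that both the multilinear extension $F$ and the proxy $G$ inherit the decomposition from $f$: since $G(x) = \frac{1}{s}\sum_j f(\{v: r^j_v < x_v\})$ and $f = \sum_{I \in D} f_I$, we get $G(x) = \sum_{I \in D} G_I(x)$ with $G_I(x) = \frac{1}{s}\sum_j f_I(\{v: r^j_v < x_v\})$. Consequently, for any neighboring sets $A$ and $B$ with $A \setminus B = \{I\}$, we have
\[
\tw_A^{(t,i)}(u) - \tw_B^{(t,i)}(u) = G_I(y^{(t,i-1)} + \eta \mathbf{1}_u) - G_I(y^{(t,i-1)}) = \tw_I^{(t,i)}(u),
\]
so it suffices to show $0 \le \tw_I^{(t,i)}(u) \le 1$.

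Next, I would expand $\tw_I^{(t,i)}(u) = \frac{1}{s}\sum_{j=1}^{s} X_j^I$ where $X_j^I = f_I(R^j_{\text{after}}) - f_I(R^j_{\text{before}})$ with $R^j_{\text{before}} = \{v : r^j_v < y^{(t,i-1)}_v\}$ and $R^j_{\text{after}} = \{v: r^j_v < y^{(t,i-1)}_v + \eta \mathbbm{1}\{v=u\}\}$. Case-splitting on the value of $r^j_u$: if $r^j_u < y^{(t,i-1)}_u$ or $r^j_u \ge y^{(t,i-1)}_u + \eta$, the two random sets coincide and $X_j^I = 0$; otherwise, $R^j_{\text{after}} = R^j_{\text{before}} \cup \{u\}$ and $X_j^I = f_I(R^j_{\text{before}} \cup \{u\}) - f_I(R^j_{\text{before}})$. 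Monotonicity of $f_I$ implies $X_j^I \ge 0$, and by submodularity combined with $f_I : 2^{\N} \to [0,1]$,
\[
X_j^I \le f_I(\{u\}) - f_I(\emptyset) \le 1.
\]
Averaging over $j$ gives $\tw_I^{(t,i)}(u) \in [0,1]$, and hence $\Delta \tw^{(t,i)} \le 1$.

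There is no real obstacle here; the only thing to be careful about is the case analysis on whether the fresh coin $r^j_u$ lands in the narrow interval $[y^{(t,i-1)}_u, y^{(t,i-1)}_u + \eta)$ (which is also what makes the quantity bounded rather than potentially amplified by $1/\eta$). This same structure will be reused in the privacy analysis, where the per-sample quantity $X_j^I$ plays the role of a telescoping increment for $G_I$.
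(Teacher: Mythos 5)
Your proposal is correct and takes essentially the same route as the paper: decompose $G$ linearly into agent contributions $G_I$, reduce the sensitivity to $|G_I(y^{(t,i-1)}+\eta\mathbf{1}_u) - G_I(y^{(t,i-1)})|$, expand as an average of per-sample increments, and bound each by $1$. The only cosmetic difference is that the paper simply bounds each per-sample difference $|f_I(S\cup\{u\}) - f_I(S)| \le 1$ directly from the range $[0,1]$, whereas you additionally invoke monotonicity (for the lower bound $0$) and submodularity (to route through $f_I(\{u\}) - f_I(\emptyset)$), neither of which is strictly needed for this claim but which correctly foreshadow the structure reused later in the privacy analysis.
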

\begin{proof}
	Let $A,B$ be any two neighboring datasets such that $A\setminus B=I$. Then
	\begin{align*}
		\Delta\tw^{(t,i)}& = \max_{u\in \N} |\tw^{(t,i)}_A(u)  - \tw^{(t,i)}_B(u)|\\
		& = \max_{u\in \N} |G_A(y^{(t,i-1)}+\eta\mathbf{1}_{u})-G_A(y^{(t,i-1)})- G_B(y^{(t,i-1)}+\eta\mathbf{1}_{u})+G_B(y^{(t,i-1)})|\\
		& = \max_{u\in \N} |G_I(y^{(t,i-1)}+\eta\mathbf{1}_{u})-G_I(y^{(t,i-1)})|\\
		& = \max_{u\in \N} \left|\frac{1}{s}\sum_{j=1}^s \left[f_I(\{v\in\N : r_v^j<y^{(t,i-1)}_v+\eta\mathbbm{1}\{v=u\}\}) - f_I(\{v\in\N : r_v^j<y^{(t,i-1)}_v\})\right]\right|\\
		&\leq \max_{u\in \N} \frac{1}{s}\sum_{j=1}^s |f_I(\{v\in\N : r_v^j<y^{(t,i-1)}_v\}\cup\{u\}) - f_I(\{v\in\N : r_v^j<y^{(t,i-1)}_v\})|\\
		&\leq 1. \tag{since $f_I:2^{\N}\rightarrow [0,1]$ $\forall I\in D$}
	\end{align*}
\end{proof}
	
\begin{claim}\label{claim:expmecherror}
	With probability at least $1-\gamma$, for all $t\in[T],i\in[r]$, and for all $u\in\N$,
	\begin{equation*}
		\tw_D^{(t,i)}(u^{(t,i)})\geq \tw_D^{(t,i)}(u)-\frac{2}{\eps_0}\log(nrT/\gamma).
	\end{equation*}
\end{claim}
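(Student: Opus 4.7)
The plan is to invoke the utility guarantee of the Exponential Mechanism (Lemma~\ref{lem:expmech}) on each round $(t,i)$ and then take a union bound over all $T\cdot r$ rounds. The key ingredient is Claim~\ref{claim:sensitivity}, which establishes that the scoring function $\tw^{(t,i)}_D$ has sensitivity at most $1$, so the utility error of the mechanism on a single round is at most $\frac{2}{\eps_0}\log(|\N^{(t,i)}|/\gamma')$ for any failure probability $\gamma'$ we pick.

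Concretely, I would fix a round $(t,i)$ and note that the algorithm draws $u^{(t,i)}\gets \mathcal{O}_{\eps_0}(\tw^{(t,i)}_D)$ from the range $\N^{(t,i)}\subseteq\N$. Applying Lemma~\ref{lem:expmech} with $\Delta\tw^{(t,i)}\leq 1$ (from Claim~\ref{claim:sensitivity}) and failure probability $\gamma'=\gamma/(rT)$ yields, with probability at least $1-\gamma/(rT)$,
\[
\max_{u\in\N^{(t,i)}}\tw^{(t,i)}_D(u)-\tw^{(t,i)}_D(u^{(t,i)})\leq \frac{2}{\eps_0}\log\!\left(\frac{|\N^{(t,i)}|}{\gamma/(rT)}\right)\leq \frac{2}{\eps_0}\log(nrT/\gamma),
\]
using $|\N^{(t,i)}|\leq n$. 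I would then union-bound this failure event over all $rT$ pairs $(t,i)$, so that with probability at least $1-\gamma$ the inequality holds in every round simultaneously. Since for any $u\in\N^{(t,i)}$ we trivially have $\tw^{(t,i)}_D(u)\leq \max_{u'\in\N^{(t,i)}}\tw^{(t,i)}_D(u')$, rearranging gives exactly the statement of the claim for all $u\in\N^{(t,i)}$.

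The only subtle point is the ``$\forall u\in\N$'' in the statement: the Exponential Mechanism's range is $\N^{(t,i)}$, not all of $\N$, so strictly the guarantee compares $u^{(t,i)}$ against elements feasible at that round. I expect this to be a harmless discrepancy: the claim is used later only to compete against elements of $\OPT$ matched into $\N^{(t,i)}$ via Lemma~\ref{lem:mappingtoOPT}, so effectively only $u\in\N^{(t,i)}$ is needed. I would note this explicitly when writing the proof (or equivalently define $\tw^{(t,i)}_D(u)=-\infty$ for $u\notin\N^{(t,i)}$, so the bound extends vacuously). There is no real obstacle here — the proof is essentially a one-line invocation of Lemma~\ref{lem:expmech} plus a union bound, and the main ``work'' was already done in Claim~\ref{claim:sensitivity}.
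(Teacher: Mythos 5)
Your proposal is correct and matches the paper's own proof essentially step for step: fix a round, invoke Lemma~\ref{lem:expmech} with sensitivity $\leq 1$ from Claim~\ref{claim:sensitivity} and per-round failure probability $\gamma/(rT)$, bound $|\N^{(t,i)}|\leq n$, and union-bound over the $rT$ rounds. Your observation that the guarantee really applies to $u\in\N^{(t,i)}$ rather than all $u\in\N$ is a valid nitpick that the paper glosses over; as you note it is harmless since the claim is only invoked against $o^{(t,i)}$, which is feasible in round $(t,i)$ by construction of the bijection $\phi$.
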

\begin{proof}
	Fix round $(t,i)$. By the guarantees of the Exponential Mechanism (Lemma~\ref{lem:expmech}), with probability $1-\frac{\gamma}{rT}$ we have that for all $u\in\N$,
	\begin{align*}
		\tw_D^{(t,i)}(u^{(t,i)})&\geq \tw_D^{(t,i)}(u)-\frac{2\Delta\tw^{(t,i)}}{\eps_0}\log\frac{|\N^{(t,i)}|rT}{\gamma}\\
		& \geq \tw_D^{(t,i)}(u)-\frac{2}{\eps_0}\log\frac{nrT}{\gamma} \tag{by Claim~\ref{claim:sensitivity} and $|\N^{(t,i)}|\leq |\N|=n$}
	\end{align*}
	
By union bound, the inequality holds for all rounds $i\in[r]$ and $t\in[T]$ and $u\in\N$ with probability at least $1-\gamma$.
\end{proof}
	
We are now ready to prove the utility guarantee of our algorithm. The proof follows the main steps of the proof of the performance guarantees of the Continuous Greedy algorithm, yet accounting for the discretization, the sampling error, and the error of the Exponential Mechanism.
\begin{theorem}\label{thm:utility-monotone}
With probability at least $1-3\gamma$, \[F(y^{(T+1,0)}) \geq (1-1/e-O(\eta))f(\OPT)-\frac{8r}{\eta\eps_0}\log\frac{nr}{\eta\gamma}.\]
\end{theorem}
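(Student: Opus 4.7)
The plan is to combine the telescoping identity for $F$ along the trajectory of Algorithm~\ref{alg:ContGreedy} with the three sources of error (sampling, Exponential Mechanism, discretization), and then solve a first-order recurrence mirroring the classical Continuous Greedy analysis. Throughout I condition on the joint event that Lemma~\ref{lem:gainsamplingerror} and Claim~\ref{claim:expmecherror} both hold, which occurs with probability at least $1-3\gamma$ by a union bound.

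For a fixed macro-round $t \in [T]$, the update rule $y^{(t,i)} = y^{(t,i-1)} + \eta \mathbf{1}_{u^{(t,i)}}$ gives the exact telescoping identity
\[F(y^{(t+1,0)}) - F(y^{(t,0)}) = \sum_{i=1}^r \bigl[F(y^{(t,i)}) - F(y^{(t,i-1)})\bigr] = \sum_{i=1}^r w_D^{(t,i)}(u^{(t,i)}).\]
Since $|B^{(t,i-1)}| = i-1 < r$, the matroid exchange axiom forces $\N^{(t,i)} \neq \emptyset$ at every inner step, so the loop always completes all $r$ iterations and $B^{(t,r)}$ is a basis. Assuming without loss of generality (by monotonicity of $f$) that $\OPT$ is also a basis, Lemma~\ref{lem:mappingtoOPT} supplies a bijection $\phi_t : B^{(t,r)} \to \OPT$ with $B^{(t,r)} \setminus \{u^{(t,i)}\} \cup \{\phi_t(u^{(t,i)})\} \in \B$, and the hereditary property then makes $B^{(t,i-1)} \cup \{\phi_t(u^{(t,i)})\}$ independent. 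Thus $\phi_t(u^{(t,i)}) \in \N^{(t,i)}$ is a legitimate comparator at micro-round $(t,i)$.

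Next I would chain Claim~\ref{claim:expmecherror} applied to the comparator $\phi_t(u^{(t,i)})$ with the two-sided sampling bound of Lemma~\ref{lem:gainsamplingerror} applied to $u^{(t,i)}$ (upper) and to $\phi_t(u^{(t,i)})$ (lower). Using $\tfrac{1-\eta}{1+\eta} \geq 1-2\eta$, this yields for each $i$
\[w_D^{(t,i)}(u^{(t,i)}) \geq (1-2\eta)\, w_D^{(t,i)}(\phi_t(u^{(t,i)})) - \frac{2\eta f(\OPT)}{rT} - \frac{2}{\eps_0}\log\frac{nrT}{\gamma}.\]
The key step then handles the fact that the reference point drifts within a macro-round. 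Since $F$ is multilinear, $w_D^{(t,i)}(u) = \eta\, \partial_u F(y^{(t,i-1)})$, and because $F$ is the multilinear extension of a monotone submodular function and $y^{(t,i-1)} \leq y^{(t,r)} = y^{(t+1,0)}$ coordinatewise, submodularity gives $\partial_u F(y^{(t,i-1)}) \geq \partial_u F(y^{(t+1,0)})$. Combined with the standard monotone-submodular fact $\sum_{u \in S} \partial_u F(y) \geq F(y \vee \mathbf{1}_S) - F(y) \geq f(S) - F(y)$, this produces
\[\sum_{i=1}^r w_D^{(t,i)}(\phi_t(u^{(t,i)})) \geq \eta \sum_{u \in \OPT} \partial_u F(y^{(t+1,0)}) \geq \eta\bigl(f(\OPT) - F(y^{(t+1,0)})\bigr).\]

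Finally, writing $a_t := f(\OPT) - F(y^{(t,0)})$ and $E := \frac{2\eta f(\OPT)}{T} + \frac{2r}{\eps_0}\log\frac{nrT}{\gamma}$, the two previous displays combine into the recurrence $a_{t+1}\bigl(1 + (1-2\eta)\eta\bigr) \leq a_t + E$. Unrolling from $a_1 = f(\OPT)$ across $T = \lceil 1/\eta \rceil$ steps, using $(1+(1-2\eta)\eta)^T \geq e^{1-O(\eta)}$ and the geometric-series bound $\sum_{s \geq 0}(1+(1-2\eta)\eta)^{-s} = O(1/\eta)$, gives $a_{T+1} \leq (1/e + O(\eta)) f(\OPT) + O(E/\eta)$. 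Substituting $E$ and $T = \lceil 1/\eta \rceil$ produces the claimed additive error $O\!\left(\tfrac{r}{\eta\eps_0}\log\tfrac{nr}{\eta\gamma}\right)$. The main obstacle is precisely the within-round drift of the reference point $y^{(t,i-1)}$; it is resolved by the monotonicity of $\partial_u F$ implied by submodularity, which gives the needed inequality in the correct direction with no Lipschitz-style loss, after which everything reduces to careful bookkeeping and a first-order linear recurrence.
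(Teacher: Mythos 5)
Your proof is correct and follows essentially the same route as the paper's: telescope $F$ along the trajectory, compare each pick to a bijected optimal element via Lemma~\ref{lem:mappingtoOPT}, absorb the sampling and Exponential-Mechanism errors, handle the within-round drift of the reference point via submodularity, and solve the resulting linear recurrence. The only cosmetic differences are that you phrase the drift step through the coordinatewise monotonicity of $\partial_u F$ rather than the paper's explicit identity $F(y+\eta\mathbf{1}_u)-F(y)=\eta\bigl[F(y\vee\mathbf{1}_u)-F(y\wedge\mathbf{1}_{\bar u})\bigr]$ (these are the same fact, since $\partial_u F(y)=F(y\vee\mathbf{1}_u)-F(y\wedge\mathbf{1}_{\bar u})$), and that you observe $\N^{(t,i)}\neq\emptyset$ is automatic from the matroid exchange axiom where the paper pads with dummy elements—your observation is correct and slightly cleaner.
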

\begin{proof}
We condition on the event that for all rounds $t\in[T]$, $i\in[r]$, and elements $u\in\N$, \[(1-\eta)w_D^{(t,i)}(u)-\frac{\eta f(\OPT)}{rT}\leq \tw_D^{(t,i)}(u)\leq (1+\eta)w_D^{(t,i)}(u)+\frac{\eta f(\OPT)}{rT}.\]
By Lemma~\ref{lem:gainsamplingerror}, this is true with probability at least $1-2\gamma$. We start by lower bounding the increase in utility between the fractional solutions at the beginning of two consecutive outer loops. It holds:
\begin{align*}
F(y^{(t+1,0)})-F(y^{(t,0)}) & = F(y^{(t,r)})-F(y^{(t,0)}) \\
& =\sum_{i=1}^r [F(y^{(t,i)})-F(y^{(t,i-1)})] \\
& =\sum_{i=1}^r w^{(t,i)}_D(u^{(t,i)}) \tag{since $y^{(t,i)} = y^{(t,i-1)}+\eta\mathbf{1}_{u^{(t,i)}}$}\\
& \geq \sum_{i=1}^r \frac{1}{1+\eta}\tw^{(t,i)}_D(u^{(t,i)}) - r\frac{\eta f(\OPT)}{(1+\eta) r T} \tag{by Lemma~\ref{lem:gainsamplingerror}}\\
& \geq \frac{1}{1+\eta}\sum_{i=1}^r \tw^{(t,i)}_D(u^{(t,i)}) - \frac{\eta f(\OPT)}{T}
\end{align*}
		
We assume that $|B^{(t,r)}|=r$ for any $t\in[T]$. This can be achieved without loss of generality by adding ``dummy'' elements of value $0$. By Lemma~\ref{lem:mappingtoOPT}, there exists a bijection $\phi$ such that $\phi(u^{(t,i)})=o^{(t,i)}$, where $\OPT=\{o^{(t,1)}, \ldots, o^{(t,r)}\}$ is the optimal solution. Now, note that since $o^{(t,i)}$ is a feasible option in the $i$-th round, by the guarantees of the Exponential Mechanism (Claim~\ref{claim:expmecherror}), with probability $1-\gamma$ we have that
\begin{equation*}
\tw_D^{(t,i)}(u^{(t,i)})\geq \tw_D^{(t,i)}(o^{(t,i)})-\frac{2}{\eps_0}\log\frac{nrT}{\gamma},
\end{equation*}
for all rounds $i\in[r]$ and $t\in[T]$. We condition on this event for the rest of the proof.
	
It follows that with probability $1-3\gamma$,
\begin{align}\label{eq:Ftdiff}
F(y^{(t+1,0)})-F(y^{(t,0)}) & \geq \frac{1}{1+\eta}\sum_{i=1}^r \tw_D^{(t,i)}(o^{(t,i)}) - \frac{\eta f(\OPT)}{T} -\frac{2r}{(1+\eta)\eps_0}\log\frac{nrT}{\gamma} \nonumber \\
& \geq \frac{1-\eta}{1+\eta}\sum_{i=1}^r w_D^{(t,i)}(o^{(t,i)}) - \frac{2\eta f(\OPT)}{ T}-\frac{2r}{\eps_0}\log\frac{nrT}{\gamma},
\end{align}
where the last inequality follows from Lemma~\ref{lem:gainsamplingerror}. 
\begin{claim}\label{claim:sumweightsOPT}
For all $t\in[T]$, 
\[\sum_{i=1}^r w_D^{(t,i)}(o^{(t,i)})\geq \eta [F(y^{(t,r)}\vee\mathbf{1}_{\OPT})-F(y^{(t,r)})].\]
Moreover, since $f$ is monotone, $F(y^{(t,r)}\vee\mathbf{1}_{\OPT})\geq f(\OPT)$.
\end{claim}
\begin{proof}[Proof of Claim~\ref{claim:sumweightsOPT}]
By the definition of the weights, it holds that
\begin{align*}
\sum_{i=1}^r w_D^{(t,i)}(o^{(t,i)}) & = \sum_{i=1}^r F(y^{(t,i-1)}+\eta\mathbf{1}_{o^{(t,i)}})-F(y^{(t,i)}) \\
& = \eta \sum_{i=1}^r [F(y^{(t,i-1)}\vee \mathbf{1}_{o^{(t,i)}}) - F(y^{(t,i-1)}\wedge \mathbf{1}_{\bar{o}^{(t,i)}})]\\
& \geq \eta \sum_{i=1}^r [F(y^{(t,i-1)}\vee 1_{o^{(t,i)}}) - F(y^{(t,i-1)})] \tag{by monotonicity} \\
& \geq \eta \sum_{i=1}^r [F(y^{(t,r)}\vee 1_{o^{(t,i)}}) - F(y^{(t,r)})] \\
&\geq \eta [F(y^{(t,r)}\vee 1_{\OPT}) - F(y^{(t,r)})], \tag{by submodularity}
\end{align*}
where the second to last inequality follows by monotonicity and submodularity, since $y^{(t,r)} \geq y^{(t,i-1)}$ for all $i\in[r]$.
\end{proof}
	
Substituting the bounds of Claim~\ref{claim:sumweightsOPT} in inequality~\eqref{eq:Ftdiff}, we get that 
\begin{align*}
F(y^{(t+1,0)})-F(y^{(t,0)}) & \geq \eta\frac{1-\eta}{1+\eta}[f(\OPT)-F(y^{(t+1,0)})] - \frac{2\eta f(\OPT)}{T}-\frac{2r}{\eps_0}\log\frac{nrT}{\gamma}\\
& \geq \eta[(1-2\eta)f(\OPT)-F(y^{(t+1,0)})]-\frac{2\eta f(\OPT)}{ T}-\frac{2r}{\eps_0}\log\frac{nrT}{\gamma},
\end{align*}
where the last inequality holds by non-negativity.
	
Let us denote $\Omega = (1-2\eta)f(\OPT)$ and $\xi=\frac{2\eta f(\OPT)}{ T}+\frac{2r}{\eps_0}\log\frac{nrT}{\gamma}$. Then,
\begin{align*}
& F(y^{(t+1,0)})-F(y^{(t,0)}) \geq \eta[\Omega-F(y^{(t+1,0)})]-\xi\\
&\Rightarrow \Omega-F(y^{(t+1,0)}) \leq \Omega-F(y^{(t,0)})-\eta[\Omega-F(y^{(t+1,0)})]+\xi\\
& \Rightarrow \Omega-F(y^{(t+1,0)}) \leq \frac{\Omega-F(y^{(t,0)})}{1+\eta} + \frac{\xi}{1+\eta}\\
& \Rightarrow \Omega-F(y^{(T+1,0)}) \leq \frac{\Omega-F(y^{(1,0)})}{(1+\eta)^T} + \xi\sum_{t=1}^T (1+\eta)^{-t}\\
& \Rightarrow \Omega-F(y^{(T+1,0)}) \leq \frac{1}{(1+\eta)^T}\Omega + T\xi \tag{by non-negativity $F(y^{(1,0)})=f(\emptyset)\geq0$}\\
& \Rightarrow F(y^{(T+1,0)}) \geq \left(1-\frac{1}{(1+\eta)^T}\right)\Omega - T\xi.
\end{align*}
By substituting the values for $\Omega$ and $\xi$, we have that with probability at least $1-3\gamma$,
\begin{align*}
F(y^{(T+1,0)})
&\geq (1-1/e -O(\eta))f(\OPT) - 2\eta f(\OPT)-\frac{2rT}{\eps_0}\log\frac{nrT}{\gamma} \\
&\geq (1-1/e -O(\eta))f(\OPT)-\frac{8r}{\eta\eps_0}\log\frac{nr}{\eta\gamma} \tag{since $T=\lceil\frac{1}{\eta}\rceil\leq \frac{2}{\eta}$}
\end{align*}
This concludes the proof of the theorem.
\end{proof}
	
\subsection{Privacy Analysis}
For the privacy analysis, we need the following concentration bound (Claim~\ref{claim:privacyconc}). A stronger version of this bound also appears in~\citep{GuptaLMRT10}, but its proof is not entirely correct. We provide a correct proof, albeit for a slightly weaker statement with respect to the constant factor.

Consider a probabilistic process that proceeds in rounds, in which a random variable is picked from different distributions in each round. Informally, the next claim bounds the sum of the expected values of the samples by the sum of their realized values.
\begin{claim}\label{claim:privacyconc}
Consider an $n$-round probabilistic process. In each round $i\in[n]$, an adversary chooses a distribution $\D_i$ over $[0,1]$ and a sample $R_i$ is drawn from this distribution. Let $Z_1 = 1$ and $Z_{i+1} = Z_i - R_i Z_i$. We define the random variable $Y_j=\sum_{i=j}^n Z_i \ex{}{R_i}$. Then for any $j\in[n]$, \[ \pr{}{Y_j \geq q Z_j} \leq \exp (3 - q).\] 
\end{claim}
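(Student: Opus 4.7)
The plan is to reduce to the case $j=1$ with $Z_1=1$ via the rescaling $Z_i' := Z_i/Z_j$ for $i \geq j$, which satisfies the same update rule $Z_{i+1}' = Z_i'(1-R_i)$ with $Z_j'=1$, and for which $Y_j/Z_j = \sum_{i \geq j} Z_i'\mu_i$ in the rescaled process (and the adversary acts on the rescaled process in the same way). After this reduction the goal is $\pr{}{Y_1 \geq q} \leq e^{3-q}$, which is trivial for $q \leq 3$, so I may assume $q > 3$.

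The central device is an exponential supermartingale built from the predictable compensator of $Z$. Let $\mathcal{F}_i = \sigma(R_1,\ldots,R_i)$, write $\mu_k := \ex{}{R_k \mid \mathcal{F}_{k-1}}$, and define $Y_i^* := \sum_{k=1}^{i-1} Z_k \mu_k$ (so $Y_{n+1}^* = Y_1$); note that $Z_k$ and $\mu_k$ are both $\mathcal{F}_{k-1}$-measurable. The first step is to verify that $U_i := Z_i \exp(Y_i^*)$ is a nonnegative supermartingale with $U_1 = 1$: the one-line computation
\[\ex{}{U_{i+1} \mid \mathcal{F}_{i-1}} = Z_i(1-\mu_i)\exp(Y_i^* + Z_i\mu_i) \leq U_i\,\exp\bigl(\mu_i(Z_i - 1)\bigr) \leq U_i\]
uses $1-\mu_i \leq e^{-\mu_i}$ and $Z_i \leq 1$. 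Specialising to $i = n+1$ yields $\ex{}{Z_{n+1}\exp(Y_1)} \leq 1$, and Markov's inequality then gives the clean bound $\pr{}{Y_1 \geq q,\ Z_{n+1} \geq e^{-3}} \leq e^{3-q}$, which takes care of the ``typical'' event.

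The main obstacle is the complementary event $\{Y_1 \geq q,\ Z_{n+1} < e^{-3}\}$, since a very small $Z_{n+1}$ kills the estimate coming from $U_{n+1}$. To control this case, I plan to exploit the martingale decomposition $Y_1 = (1 - Z_{n+1}) + M_{n+1}$ with $M_i := \sum_{k<i} Z_k(\mu_k - R_k)$ (so $|D_k| \leq Z_k \leq 1$) and build a Bennett--Hoeffding-style exponential supermartingale for $M_i$ from the cumulant bound $\log\ex{}{\exp(D_k)\mid \mathcal{F}_{k-1}} \leq \mu_k \psi(Z_k)$ with $\psi(z) := z - 1 + e^{-z}$ (established via the convexity estimate $e^{-Z_k R} \leq 1 + R(e^{-Z_k}-1)$ and $1-\mu_k \leq e^{-\mu_k}$). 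Using $\psi(z) \leq z/e$ on $[0,1]$ gives $\ex{}{\exp(M_{n+1} - Y_1/e)} \leq 1$, and combining this with $Y_1 \leq 1 + M_{n+1}$ produces a second tail estimate of the form $e^{a-bq}$ for small constants $a,b$ that handles the small-$Z_{n+1}$ regime. The hard part is the bookkeeping in this balancing step: calibrating the threshold ($e^{-3}$) so that the Markov bound on $\{Z_{n+1} \geq e^{-3}\}$ already saturates at $e^{3-q}$, while the leftover constants from the $M$-based estimate are absorbed into the additive slack $e^3$. This is precisely where the weaker constant (relative to the stronger statement in \citealt{GuptaLMRT10}) appears.
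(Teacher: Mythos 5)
Your approach is genuinely different from the paper's, which proves the claim by reverse induction on $j$: the base case $j=n$ is immediate, and the inductive step reduces $\pr{}{Y_j \geq qZ_j}$ to $\ex{R_j}{\exp\left(3 - \frac{q-\mu_j}{1-R_j}\right)}$ and finishes with a single convexity argument in $R_j$. Your reduction to $j=1$, the identification of $U_i = Z_i\exp(Y_i^*)$ as a supermartingale (using $1-\mu_i \leq e^{-\mu_i}$ and $Z_i \leq 1$), the resulting bound $\ex{}{Z_{n+1}e^{Y_1}} \leq 1$, and the Markov step giving $\pr{}{Y_1 \geq q,\ Z_{n+1}\geq e^{-3}} \leq e^{3-q}$ are all correct, as is the decomposition $Y_1 = (1-Z_{n+1}) + M_{n+1}$ and the bound $\ex{}{\exp(M_{n+1}-Y_1/e)} \leq 1$ via $\psi(z) \leq z/e$ on $[0,1]$.

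However, the plan does not close, and the gap is not the kind of bookkeeping you suggest. The $M$-based estimate, after combining with $M_{n+1} \geq Y_1 - 1$, gives at best $\pr{}{Y_1 \geq q} \leq e^{1 - (1-1/e)q}$; the deficiency is in the \emph{coefficient} of $q$ in the exponent, $1-1/e \approx 0.632 < 1$, not in the additive constant. No choice of threshold and no amount of absorption into the prefactor $e^3$ can compensate for this: for any threshold $e^{-a_0}$ you would need $e^{a_0-q} + e^{1-(1-1/e)q} \leq e^{3-q}$, and as $q\to\infty$ the second term dominates $e^{3-q}$ because it decays strictly slower. Even after optimizing the exponential tilt $c$ in the $M$-supermartingale (which yields $\ex{}{\exp(cM_{n+1} - (c+e^{-c}-1)Y_1)}\leq 1$ and, optimized at $c=\log q$, the bound $eq\,e^{-q}$) the second piece is still larger than $e^{3-q}$ once $q > e^2$, so the union bound fails for large $q$. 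Moreover, calibrating the first threshold to exactly $e^{-3}$ already spends the entire budget $e^{3-q}$ on that piece, leaving no room for any positive contribution from the small-$Z_{n+1}$ regime. A further structural obstacle is that conditioning on $\{Z_{n+1} < \theta\}$ is a lower bound on $1-Z_{n+1}$, which only \emph{weakens} the exponent $M_{n+1} - Y_1/e = Y_1(1-1/e) - (1-Z_{n+1})$; the small-$Z_{n+1}$ information points in the wrong direction for that supermartingale. To get the clean $e^{-q}$ tail you would need an argument that propagates backward through the process (as the paper's induction does) rather than a global exponential-moment bound whose strength degrades in the regime where $Z_{n+1}$ is small.
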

\begin{proof}
The proof is by reverse induction on $j$. For $j=n$, $Y_n = \ex{}{R_n} Z_n \leq Z_n$ since $\D_n$ is a distribution on $[0,1]$ and has expectation at most $1$. It follows that $\pr{}{Y_n \geq q Z_n}=0$ for any $q>1$ and the claim is trivially true for $j=n$ and for any $q$.
			
For the inductive step, suppose that $\pr{}{Y_{j+1}\geq q Z_{j+1}} \leq \exp(3-q)$. We will prove that $\pr{}{Y_j \geq q Z_j} \leq \exp(3-q)$. For $q \leq 3$ the LHS is at least $1$, so the claim is trivially true. Let us denote $\mu_j=\ex{}{R_j}$. In what follows, the expectation is over $R_j\gets \D_j$. It holds that
\begin{align*}
\pr{}{Y_j \geq q Z_j} &= \ex{}{ \pr{}{Y_{j+1} \geq q Z_j - \mu_j Z_j} } \\
&= \ex{}{ \pr{}{Y_{j+1} \geq \frac{q-\mu_j}{1 - R_j} \cdot Z_{j+1} } } \\
&\leq \ex{}{ \exp \left(3 - \frac{q-\mu_j}{1 - R_j}\right)} \tag{by the inductive hypothesis}
\end{align*}
So it suffices to prove that $\ex{}{\exp \left(3 - \frac{q-\mu_j}{1 - R_j}\right)  } \leq \exp(3 - q)$ for the case $q>3$.
This is equivalent to proving the following inequality, for $q>3$:
\begin{equation*}
\ex{}{\exp \left( \frac{\mu_j - q R_j}{1 - R_j} \right)} \leq 1
\end{equation*}		
Let us denote $f(R_j)=\exp \left( \frac{\mu_j - q R_j}{1 - R_j} \right)$. Calculating the derivatives of the function $f$ with respect to $R_j$, we get that:
 \begin{align*}
 &f'(R_j)=\exp \left( \frac{\mu_j - q R_j}{1 - R_j} \right)\frac{\mu_j-q}{(1-R_j)^2} \intertext{and}
 &f''(R_j)=\exp \left( \frac{\mu_j - q R_j}{1 - R_j} \right)\frac{\mu_j-q}{(1-R_j)^4}(\mu_j-q+2-2R_j)
 \end{align*}
Since $q>3$ and $R_j,\mu_j\in[0,1]$, it holds that $f''(R_j)>0$ and the function $f$ is convex with respect to $R_j$.
Therefore, \[\ex{}{f(R_j)}\leq \ex{}{(1-R_j) f(0)+R_j f(1)}=(1-\mu_j) f(0)+\mu_j f(1)=(1-\mu_j)\exp(\mu_j)+0 \leq 1.\]
This concludes the proof of the inductive step and the proof of the claim.
\end{proof}
	
We now prove the privacy guarantees of Algorithm~\ref{alg:ContGreedy}. The proof follows the same steps as the privacy proof for the CPP problem of~\cite{GuptaLMRT10}. The main idea is to bound the privacy loss by a function of the sum of the expected marginal gains of a player $I$, where each expected value is taken over different (possibly adversarial) distributions. Using the concentration of Claim~\ref{claim:privacyconc}, we can bound this sum by the sum of the \emph{realized} marginal gains of player $I$. Intuitively, the total ``movement'' of a monotone function on a sequence of increasing points has to be bounded by its range. So, since $f_I$ is monotone and has range in $[0,1]$, this sum of realized marginal gains must be bounded by $1$.
\begin{theorem}\label{thm:privacy-monotone}
Algorithm~\ref{alg:ContGreedy} is $\left((e^{\eps_0/2}-1)(4+\log \frac{1}{\delta}), \delta \right)$-differentially private.
\end{theorem}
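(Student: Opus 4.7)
The plan is to analyze the privacy loss random variable across all Exp Mech rounds of Algorithm~\ref{alg:ContGreedy}. Fix neighboring datasets $A = B \cup \{I\}$ (the symmetric case is analogous). Since the random vectors $r^j$ used to define $G$ are independent of the agents' data, by post-processing I condition on them and treat the remaining algorithmic randomness as coming entirely from the Exp Mech draws $u^{(t,i)}$. Let $p_D^{(t,i)}(v) \propto \exp(\eps_0 \tw_D^{(t,i)}(v)/2)$ denote the Exp Mech distribution. I will bound the privacy loss $L = \sum_{t,i}\ln\bigl(p_A^{(t,i)}(u^{(t,i)})/p_B^{(t,i)}(u^{(t,i)})\bigr)$ (and its reverse) by $(e^{\eps_0/2}-1)(4+\log(1/\delta))$ except with probability $\delta$, which by the standard equivalence between high-probability privacy loss bounds and $(\eps,\delta)$-DP gives the claim.

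The per-round decomposition is the direct computation
\[
\ln\frac{p_A^{(t,i)}(u)}{p_B^{(t,i)}(u)} = \frac{\eps_0}{2}\,\tw_I^{(t,i)}(u) - \ln\mathbb{E}_{v\sim p_B^{(t,i)}}\!\bigl[\exp(\tfrac{\eps_0}{2}\tw_I^{(t,i)}(v))\bigr],
\]
using the decomposability $\tw_A = \tw_B + \tw_I$. Because $f_I$ is monotone with range $[0,1]$, $G_I$ is coordinate-wise non-decreasing with range $[0,1]$, so each marginal $\tw_I^{(t,i)}(v)\in[0,1]$. Via $e^{\eps_0 x/2}\le 1+(e^{\eps_0/2}-1)x$ on $[0,1]$ and $\ln(1+z)\le z$, the log-expectation term is non-negative and at most $(e^{\eps_0/2}-1)\mu_B^{(t,i)}$, with $\mu_B^{(t,i)}:=\mathbb{E}_{v\sim p_B^{(t,i)}}[\tw_I^{(t,i)}(v)]$. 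Hence the direction $\ln p_A/p_B$ telescopes to $L\le \tfrac{\eps_0}{2}\sum_{t,i}\tw_I^{(t,i)}(u^{(t,i)})=\tfrac{\eps_0}{2}\bigl(G_I(y^{(T,r)})-G_I(y^{(1,0)})\bigr)\le \eps_0/2$ deterministically, while the reverse direction satisfies $L'\le (e^{\eps_0/2}-1)\sum_{t,i}\mu_B^{(t,i)}$ after dropping the non-positive $-\tfrac{\eps_0}{2}\sum\tw_I^{(t,i)}(u^{(t,i)})$ term.

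It remains to bound $\sum_{t,i}\mu_B^{(t,i)}$ via Claim~\ref{claim:privacyconc}. Define $R^{(t,i)}:=\tw_I^{(t,i)}(u^{(t,i)})/(1-G_I(y^{(t,i-1)}))\in[0,1]$ (with convention $0/0=0$); from $y^{(t,i)}=y^{(t,i-1)}+\eta\mathbf{1}_{u^{(t,i)}}$ and $\tw_I^{(t,i)}(u^{(t,i)})=G_I(y^{(t,i)})-G_I(y^{(t,i-1)})$, a short induction shows the claim's budgets satisfy $Z_{(t,i)}=1-G_I(y^{(t,i-1)})$, so $Z_{(t,i)}\mathbb{E}[R^{(t,i)}]=\mu_B^{(t,i)}$. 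Applying Claim~\ref{claim:privacyconc} with $q=4+\log(1/\delta)$ yields $\sum_{t,i}\mu_B^{(t,i)}\le 4+\log(1/\delta)$ with probability at least $1-\delta$; combining, the privacy loss in either direction is at most $(e^{\eps_0/2}-1)(4+\log(1/\delta))$ with probability at least $1-\delta$ (this dominates the deterministic $\eps_0/2 \le e^{\eps_0/2}-1$ from the other direction).

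The main subtlety will be aligning the abstract process of Claim~\ref{claim:privacyconc}---whose adversary chooses distributions $\mathcal{D}_i$ adaptively based on history---with the concrete dynamics of the Exp Mech under the $B$-distribution, confirming that the conditional distribution of $R^{(t,i)}$ given the history of prior picks is a valid distribution on $[0,1]$ matching $\mathcal{D}_{(t,i)}$. The rest is bookkeeping: the role of monotonicity in keeping $\tw_I^{(t,i)}\ge 0$ (so that the normalization term has a favorable sign in one direction and the realized gains telescope to a quantity bounded by $\lambda=1$) is precisely where the decomposability assumption provides the improvement over generic composition.
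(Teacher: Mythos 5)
Your decomposition of the per-round log-ratio, the telescoping of $\sum_{t,i}\tw_I^{(t,i)}(u^{(t,i)})=G_I(y^{(T,r)})-G_I(y^{(1,0)})\le 1$ for the ``addition'' direction, and the reduction of the other direction to Claim~\ref{claim:privacyconc} via $Z_{(t,i)}=1-G_I(y^{(t,i-1)})$ and $R^{(t,i)}\in[0,1]$ all match the paper's argument in substance. The reason the paper arrives at exactly $4+\log\frac{1}{\delta}$, however, is different from yours, and the difference conceals a gap.

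The paper's notion of neighboring datasets is the \emph{swap} relation $D=D'\cup\{f_J\}\setminus\{f_{J'}\}$, not just $A=B\cup\{I\}$; for general neighbors $A\sim B$ both $A\setminus B$ and $B\setminus A$ can be nonempty. The paper handles this by factoring through $A\cap B$:
\[
\frac{\pr{}{\textsc{M}(A)=U}}{\pr{}{\textsc{M}(B)=U}}=\frac{\pr{}{\textsc{M}(A)=U}}{\pr{}{\textsc{M}(A\cap B)=U}}\cdot\frac{\pr{}{\textsc{M}(A\cap B)=U}}{\pr{}{\textsc{M}(B)=U}},
\]
then uses the deterministic telescoping bound $\exp(\eps_0/2)$ for the first factor and Claim~\ref{claim:privacyconc} with $q=3+\log\frac{1}{\delta}$ (so that $\exp(3-q)=\delta$) for the second. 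Adding and using $\eps_0/2\le e^{\eps_0/2}-1$ gives $(e^{\eps_0/2}-1)(4+\log\frac{1}{\delta})$. Your write-up only treats $A=B\cup\{I\}$ and then loosens the concentration bound to $q=4+\log\frac{1}{\delta}$ (yielding failure probability $\delta/e$), so you land on the same constant but without covering the swap case. Add the factorization through $A\cap B$ (at which point the tight $q=3+\log\frac{1}{\delta}$ suffices) to actually prove the theorem as stated. Separately, a small terminological nit: conditioning on the data-independent vectors $r^j$ is a legitimate reduction in the privacy analysis but is not an instance of post-processing (Lemma~\ref{lem:post-processing}), which the paper invokes only to pass from the revealed pick sequence $U$ to the swap-rounded output.
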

\begin{proof}
Let $A$ and $B$ be two sets of agents such that $A \triangle B = \{I\}$. Suppose that instead of the output set, we reveal the sequence in which we pick the elements of our algorithm and let this sequence be denoted as $U = (u^{(1,1)}, u^{(1,2)}, \dots, u^{(T,r)})$. We are then interested in bounding the ratio of the probabilities that the output sequence be $U$ under input $A$ and $B$. By the post-processing property (Lemma~\ref{lem:post-processing}), this suffices to achieve the same privacy parameters over the output of the algorithm, \textsc{Swap-Rounding}$(y^{(T,r)}, \I)$.\footnote{Note that swap-rounding does not access the function again.}
	
We recall that the scores used in the Exponential Mechanism are $\tw_D^{(t,i)}(u) = G(y^{(t,i-1)} + \eta\mathbf{1}_{u})-G(y^{(t,i-1)})$. For ease of notation, we drop the irrelevant parameters of our algorithm and denote it by $M$. By the chain rule of probability,	
\begin{align}\label{eq:privacyratio}
	\frac{\pr{}{\textsc{M}(A) = U}}{\pr{}{\textsc{M}(B) = U}}
	&= \prod_{t=1}^T \prod_{i=1}^r \frac{\exp(\frac{\eps_0}{2} \tw_A^{(t,i)}(u^{(t,i)}))/\sum_{u\in\N^{(t,i)}} \exp(\frac{\eps_0}{2} \tw_A^{(t,i)}(u))}{\exp(\frac{\eps_0}{2} \tw_B^{(t,i)}(u^{(t,i)}))/\sum_{u\in\N^{(t,i)}} \exp(\frac{\eps_0}{2} \tw_B^{(t,i)}(u))} \nonumber\\
	&= \left( \prod_{t=1}^T\prod_{i=1}^r \frac{\exp(\frac{\eps_0}{2} \tw_A^{(t,i)}(u^{(t,i)}))}{\exp(\frac{\eps_0}{2} \tw_B^{(t,i)}(u^{(t,i)}))}\right) \left( \prod_{t=1}^T\prod_{i=1}^r \frac{\sum_{u\in\N^{(t,i)}} \exp(\frac{\eps_0}{2} \tw_B^{(t,i)}(u))}{\sum_{u\in\N^{(t,i)}} \exp(\frac{\eps_0}{2} \tw_A^{(t,i)}(u))}\right)
\end{align}
We divide this analysis into two cases: $A \setminus B = \{I\}$ and $B \setminus A = \{I\}$. 
		
If $A \setminus B = \{I\}$, the second factor of~\eqref{eq:privacyratio} is bounded above by $1$, since $\tw_I^{(t,i)}(u)\geq0, \forall u\in\N$, and the first factor is bounded as follows.
\begin{align*}
	\prod_{t=1}^T\prod_{i=1}^r \frac{\exp(\frac{\eps_0}{2} \tw_A^{(t,i)}(u^{(t,i)}))}{\exp(\frac{\eps_0}{2} \tw_B^{(t,i)}(u^{(t,i)}))}
	&= \prod_{t=1}^T\prod_{i=1}^r \frac{\exp(\frac{\eps_0}{2} \tw_{I}^{(t,i)}(u^{(t,i)}) + \frac{\eps_0}{2}\tw_B^{(t,i)}(u^{(t,i)}))}{\exp(\frac{\eps_0}{2} w_B^{(t,i)}(u^{(t,i)}))} \\
	&=\prod_{t=1}^T\prod_{i=1}^r \exp(\frac{\eps_0}{2} \tw_{I}^{(t,i)}(u^{(t,i)}) )\\
	&=\exp(\sum_{t=1}^T\sum_{i=1}^r \frac{\eps_0}{2} \tw_{I}^{(t,i)}(u^{(t,i)}) ) \\
	&= \exp( \frac{\eps_0}{2} \sum_{t=1}^T\sum_{i=1}^r  G_I(y^{(t,i-1)} + \eta\mathbf{1}_{u^{(t,i)}}) - G_I(y^{(t,i-1)}) ) \\
	&= \exp(\frac{\eps_0}{2} \left(G_I (y^{(T,r)}) - G_I(y^{(1,0)})\right) )\\
	&\leq \exp(\frac{\eps_0}{2}),
\end{align*}
where the last inequality holds since $f_I(S)\in[0,1]$ for any $S\subseteq \N$.
		
Let us now consider the second case. If $B \setminus A = \{I\}$, the first factor of~\eqref{eq:privacyratio} is bounded from above by $1$, and the second factor is bounded as follows.
\begin{align*}
	&\prod_{t=1}^T \prod_{i=1}^r \frac{\sum_{u \in \N^{(t,i)}} \exp(\frac{\eps_0}{2} \tw_B^{(t,i)}(u))}{\sum_{u \in \N^{(t,i)}} \exp(\frac{\eps_0}{2} \tw_A^{(t,i)} (u))} \\
	= &\prod_{t=1}^T \prod_{i=1}^r \frac{\sum_{u \in \N^{(t,i)}} \exp(\frac{\eps_0}{2} \tw_I^{(t,i)}(u))\exp(\frac{\eps_0}{2} \tw_A^{(t,i)}(u))}{\sum_{u \in \N^{(t,i)}} \exp(\frac{\eps_0}{2} \tw_A^{(t,i)}(u))} \\
	= &\prod_{t=1}^T \prod_{i=1}^r  \ex{ u\gets P^{(t,i)}}{\exp(\frac{\eps_0}{2} \tw_I^{(t,i)}(u) )} \\
	= &\prod_{t=1}^T \prod_{i=1}^r  \ex{ u\gets P^{(t,i)}}{\exp(\frac{\eps_0}{2} (G_I(y^{(t,i-1)} + \eta\mathbf{1}_{u}) - G_I(y^{(t,i-1)}) ) )}\\
	\leq &\prod_{t=1}^T \prod_{i=1}^r  \ex{ u\gets P^{(t,i)}}{1+(e^{\eps_0/2}-1)(G_I(y^{(t,i-1)} + \eta\mathbf{1}_{u}) - G_I(y^{(t,i-1)}) ) } \tag{$e^x \leq 1 + \frac{e^{\eps_0/2}-1}{\eps_0/2} x ~ \forall x\in[0,\frac{\eps_0}{2}]$}\\
	\leq &\exp( (e^{\eps_0/2}-1)\sum_{t=1}^T \sum_{i=1}^r \ex{u\gets P^{(t,i)}}{G_I(y^{(t,i-1)} + \eta\mathbf{1}_{u}) - G_I(y^{(t,i-1)})}) \tag{$1 + t \leq e^t ~ \forall t$}.
\end{align*}
 In this calculation, we define the distributions $P^{(t,i)}$ on $\N$ to be $P^{(t,i)}(u)= \frac{\exp(\frac{\eps_0}{2} \tw_A^{(t,i)}(u))}{\sum_{u \in \N^{(t,i)}} \exp(\frac{\eps_0}{2} \tw_A^{(t,i)}(u))}$ for $u\in \N^{(t,i)}$ and $P^{(t,i)}(u)=0$ otherwise.

Next, we will bound the sum \[\sum_{t=1}^T \sum_{i=1}^r \ex{u\gets P^{(t,i)}}{G_I(y^{(t,i-1)} + \eta\mathbf{1}_{u}) - G_I(y^{(t,i-1)})}.\]
To this end, we will use Claim~\ref{claim:privacyconc}. Consider a $Tr$-round probabilistic process. In each round $(t,i)$, an adversary chooses a distribution $P^{(t,i)}$ and then a sample $u^{(t,i)}$ is drawn from that distribution. Let $Z_{(t,i)}=1-G_I(y^{(t,i-1)})$ be the total remaining realized utility at the beginning of round $(t,i)$. Define the random variable \[R_{(t,i)}(u)=\frac{G_I(y^{(t,i-1)} +\eta\mathbf{1}_{u})-G_I(y^{(t,i-1)})}{1-G_I(y^{(t,i-1)})},\] which denotes the percentage increase in utility if element $u$ is picked during round $(t,i)$. The distribution of $R_{(t,i)}(u)$, denoted by $D^{(t,i)}$, is directly determined by $P^{(t,i)}$. Also note that, since $f_I$ is monotone and has range in $[0,1]$, $R_{(t,i)}(u)\in[0,1]$. After the element of the round $u^{(t,i)}$ has been chosen, the total remaining realized utility is updated as $Z_{(t,i)}=Z_{(t,i-1)}-R_{(t,i-1)}(u^{(t,i)})Z_{(t,i-1)}$. Finally, let $Y_{(\tau,j)}=\sum_{(t,i)\geq(\tau,j)}\ex{u\gets P^{(t,i)}}{R_{(t,i)}(u)}Z_{(t,i)}$. By these definitions, the sum we want to bound is exactly $Y_{(1,1)}$. 
By Claim~\ref{claim:privacyconc}, the sum $Y_{(1,1)}$ as defined through this random process satisfies
\begin{align*}
	&\pr{}{Y_{(1,1)}\geq qZ_{(1,1)}}\leq \exp(3-q)\\
	\Leftrightarrow &\pr{}{Y_{(1,1)}\geq q(1-G_I(y^{(1,0)}))}\leq \exp(3-q)\\
	\Rightarrow &\pr{}{Y_{(1,1)}\geq q}\leq \exp(3-q)\\
	\Rightarrow &\pr{}{\sum_{t=1}^T \sum_{i=1}^r \ex{u}{G_I (y^{(t,i)} + \eta 1_{u}) - G(y^{(t,i)}))} \geq 3+\log \frac{1}{\delta}} \leq \delta
\end{align*}
		
We conclude that, for $B \setminus A = \{I\}$, with probability at least $1-\delta$, the ratio of equation~\eqref{eq:privacyratio} is bounded by
\[\frac{\pr{}{\textsc{M}(A) = U}}{\pr{}{\textsc{M}(B) = U}} \leq \exp((e^{\eps_0/2}-1)(3+\log \frac{1}{\delta})).\]

More generally, for any two neighboring sets of agents $A\sim B$, with probability $1-\delta$, the ratio is bounded by 
\begin{align*}
\frac{\pr{}{\textsc{M}(A) = U}}{\pr{}{\textsc{M}(B) = U}} 
 = \frac{\pr{}{\textsc{M}(A) = U}}{\pr{}{\textsc{M}(A\cap B) = U}}\cdot \frac{\pr{}{\textsc{M}(A\cap B) = U}}{\pr{}{\textsc{M}(B) = U}}
 \leq \exp(\frac{\eps_0}{2}+(e^{\eps_0/2}-1)(3+\log \frac{1}{\delta})).
\end{align*}

It follows that Algorithm~\ref{alg:ContGreedy} is $\left((e^{\eps_0/2}-1)(4+\log \frac{1}{\delta}), \delta \right)$-differentially private.
\end{proof}
\section{Non-monotone Decomposable Submodular Maximization}\label{sec:nonmonotone}
In this section, we present our algorithm for the case of non-monotone functions. Algorithm~\ref{alg:MeasContGreedy} is an adaptation of the Measured Continuous Greedy algorithm introduced by~\citet{FeldmanNS11}. The main difference from Algorithm~\ref{alg:ContGreedy} is the update step in line~\ref{step:update-nonmonotone}, which also leads to the change of the definition of the marginal gains $\tw_D^{(t,i)}(u)$ in line~\ref{step:weights-nonmonotone}, so that $\tw_D^{(t,i)}(u^{(t,i)})=G(y^{(t,i)})-G(y^{(t,i-1)})$ still holds.

A technical difference with the previous section is that, since $f$ is not monotone, the marginal gain of an element could be negative. In practice, non-private algorithms for the problem skip any round in which all available elements have negative marginal gain. To simulate this behavior and facilitate the utility analysis, it is common to assume instead that there exists a set $\N'$ of $r$ elements in $\N$, whose marginal contribution to any set is $0$, that is $\forall S\subseteq\N$, $f(S)=f(S\setminus \N')$. If this condition is not true, then we augment the set of elements $\N=\N\cup\N'$. 

Since both skipping a round as well as checking for this condition would be problematic for privacy, we choose to always augment the ground set with $r$ ``dummy'' elements in the beginning. So for the rest of this section, we will assume that $\N\gets \N\cup\N'$ and $n\gets n+r$. This establishes that in every round there exists at least one element with non-negative marginal gain. Any dummy elements are later removed from the output set of the algorithm. 
This does not affect the privacy guarantees (due to the post-processing property), the optimal solution, or the value of the output\footnote{The size of the ground set $n$ at most doubles, which only affects the utility by a constant factor in the $\log$.}. 

\begin{algorithm}[H]
\caption{$\textsc{Private Measured Continuous Greedy}$}\label{alg:MeasContGreedy}
\begin{algorithmic}[1]
	\State \textbf{Input}: Utility parameters $\eta,\gamma\in(0,1]$, privacy parameters $\eps, \delta\in(0,1]$, and set of agents $D$.\;
	\State Let $T \gets \lceil\frac{1}{\eta}\rceil$ and $\eps_0\gets \eps/(14+4\log(1/\delta))$.\;
	\State Draw $s=48 r^3 T^7 \log (n/\gamma)$ independent random vectors such that $r^j\gets \U^n$ for all $j\in[s]$.\;
	\State $y^{(1,0)} = \mathbf{1}_{\emptyset}$.
	\For{$t=1, \ldots, T$}
	\State $B^{(t,0)}=\emptyset$.\;
	\For{$i=1, \ldots, r$}
	\State Let $\N^{(t,i)}=\{u\in\N\setminus B^{(t,i-1)} : B^{(t,i-1)}\cup \{u\}\in \I\}$.\;
	\If{$\N^{(t,i)}=\emptyset$} let $y^{(t+1,0)}=y^{(t,i-1)}$ and break the loop.\; \EndIf
	\State Define $\tw_D^{(t,i)}(u)= G(y^{(t,i-1)} + \eta (1-y^{(t,i-1)}_u)\mathbf{1}_{u})-G(y^{(t,i-1)})$ for all $u\in \N^{(t,i)}$.\;\label{step:weights-nonmonotone}
	\State Let $u^{(t,i)}\gets \mathcal{O}_{\eps_{0}}(\tw_D^{(t,i)})$.\;
	\State Let $y^{(t,i)} = y^{(t,i-1)} + \eta (1-y^{(t,i-1)}_{u^{(t,i)}})\mathbf{1}_{u^{(t,i)}}$. \label{step:update-nonmonotone}
	\State Let $B^{(t,i)}\gets B^{(t,i-1)}\cup \{u^{(t,i)}\}$.\;
	\EndFor
	\State $y^{(t+1,0)} = y^{(t,r)}$.
	\EndFor 
	\State \Return $\textsc{Swap-Rounding}(y^{(T,r)},\I)$.
\end{algorithmic}
\end{algorithm}

The main result of this section is the following.
\begin{theorem}\label{th:main-nonmonotone}
	Let $f:2^{\N}\rightarrow \mathbb{R}_+$, where $|\N|=n$, be a non-monotone, $1$-decomposable, submodular function and $\M=(\N,\I)$ a matroid of rank $r$. Algorithm~\ref{alg:MeasContGreedy} with parameters $\eta$ and $\gamma$ is $(\eps,\delta)$-differentially private and returns a set $S\in\I$ such that, with probability $1-\gamma$, \[\ex{}{f(S)}\geq (1/e-O(\eta))f(\OPT)-O\left(\frac{r}{\eta\eps}\log\frac{nr}{\eta\gamma}\cdot\log\frac{1}{\delta}\right).\] Algorithm~\ref{alg:MeasContGreedy} makes $O\left(\frac{nr^4}{\eta^8}\log\frac{n}{\gamma}\right)$ oracle calls.
\end{theorem}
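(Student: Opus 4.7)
The plan is to mirror the structure of Section~\ref{sec:monotone}: prove analogs of the utility theorem (Theorem~\ref{thm:utility-monotone}) and the privacy theorem (Theorem~\ref{thm:privacy-monotone}) separately, then combine them with Lemma~\ref{lem:rounding} and the rescaling of $y^{(T,r)}$ by $1/(1+\eta)$ to ensure feasibility in $P(\M)$. The calibration $\eps_0 = \Theta(\eps/\log(1/\delta))$ matches the monotone case and reflects the fact that, as in~\citep{GuptaLMRT10}, the privacy loss will telescope rather than compose as $\sqrt{T}\eps_0$.

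For the utility analysis I would first re-derive Lemma~\ref{lem:gainsamplingerror} for the modified step $\eta(1-y^{(t,i-1)}_u)\mathbf{1}_u$; the proof goes through identically with each sample $X_j \in [0, f(\OPT)]$, but because the effective step can shrink to $\Theta(\eta/T)$ after repeated picks of the same coordinate, an extra polynomial factor in $T$ is needed, giving the $s = 48 r^3 T^7 \log(n/\gamma)$ specified by the algorithm. The sensitivity bound $\Delta \tw^{(t,i)} \leq 1$ of Claim~\ref{claim:sensitivity} and the Exponential Mechanism guarantee of Claim~\ref{claim:expmecherror} transfer verbatim. I would then follow the Measured Continuous Greedy analysis of~\citet{FeldmanNS11}: the OPT-swap argument from Lemma~\ref{lem:mappingtoOPT} combined with the FNS inequality $F(y^{(t,r)} \vee \mathbf{1}_{\OPT}) \geq \prod_{u \in \OPT}(1-y^{(t,r)}_u)\, f(\OPT) \geq (1-\eta)^{t}\, f(\OPT)$, which replaces the monotonicity step used in Claim~\ref{claim:sumweightsOPT}, yields the recurrence $F(y^{(t+1,0)}) - F(y^{(t,0)}) \geq \eta\bigl[(1-\eta)^t f(\OPT) - F(y^{(t+1,0)})\bigr] - O\bigl(\eta f(\OPT)/T + r\eps_0^{-1}\log(nrT/\gamma)\bigr)$, which unrolls to $F(y^{(T,r)}) \geq (1/e - O(\eta))f(\OPT) - O\bigl(\frac{r}{\eta\eps_0}\log\frac{nr}{\eta\gamma}\bigr)$.

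For the privacy analysis I would replay the chain-rule decomposition of~\eqref{eq:privacyratio}. The case $A\setminus B = \{I\}$ gives the first factor bounded by $\exp\bigl(\frac{\eps_0}{2}|G_I(y^{(T,r)}) - G_I(y^{(1,0)})|\bigr) \leq e^{\eps_0/2}$ since $G_I \in [0,1]$ even without monotonicity. The case $B\setminus A = \{I\}$ is the subtle one: the inequality $e^{(\eps_0/2)x} \leq 1+(e^{\eps_0/2}-1)x$ used in the monotone proof requires $x \geq 0$, but $\tw_I^{(t,i)}(u)$ can now be negative. I would handle this by writing $\tw_I^{(t,i)}(u) \leq \tw_I^{(t,i)}(u)^+$, linearizing, and then applying a modified form of Claim~\ref{claim:privacyconc} with $R_{(t,i)}(u) = \tw_I^{(t,i)}(u)^+/(1 - G_I(y^{(t,i-1)}))$, which lies in $[0,1]$ provided the total movement bound below holds with a small enough constant. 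This converts the expected-values sum into a high-probability bound on the realized total positive movement of $G_I$ along the trajectory.

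The main obstacle is therefore a total movement lemma for non-monotone submodular functions: along the non-decreasing trajectory $y^{(1,0)} \leq y^{(1,1)} \leq \ldots \leq y^{(T,r)}$ and for each agent $I$, the total positive movement $\sum_{t,i}(G_I(y^{(t,i)}) - G_I(y^{(t,i-1)}))^+$ is $O(1)$. By the triangle inequality and the fact that $G_I(y) = \frac{1}{s}\sum_j f_I(R^j(y))$ with $R^j(y) = \{u : r^j_u < y_u\}$ growing monotonically with $y$, this reduces to the combinatorial claim: for any non-negative submodular $f_I:2^{\N} \to [0,1]$ and any chain $\emptyset = T_0 \subseteq T_1 \subseteq \ldots \subseteq T_k$, the total variation $\sum_i |f_I(T_i) - f_I(T_{i-1})|$ is bounded by a universal constant. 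I would prove this by partitioning the chain at its local extrema and arguing, via submodularity combined with non-negativity, that both ascending and descending runs of the chain can contribute only $O(1)$ each. As emphasized in the introduction, this bound fails for arbitrary non-monotone functions, so submodularity is essential here; this is the only genuinely new ingredient relative to the monotone proof, and once it is established everything slots in to give Theorem~\ref{th:main-nonmonotone}.
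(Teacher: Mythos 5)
Your proposal correctly identifies the overall structure (sampling, utility via Measured Continuous Greedy and the FNS inequality, privacy via the telescoping argument and Claim~\ref{claim:privacyconc}), and you also correctly isolate the genuinely new ingredient: a bounded-total-variation lemma for submodular functions on chains. The paper proves exactly this, with explicit constant $2-f_I(\emptyset)$, by the submodularity-plus-bounded-range argument you sketch. So the high-level plan is right. There are, however, two concrete gaps.

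First, your normalizer in the privacy step is broken. You propose $R_{(t,i)}(u) = \tw_I^{(t,i)}(u)^+/\bigl(1 - G_I(y^{(t,i-1)})\bigr)$, implicitly taking $Z_{(t,i)} = 1 - G_I(y^{(t,i-1)})$. Claim~\ref{claim:privacyconc} requires $Z$ to update multiplicatively as $Z_{(t,i)} = (1 - R_{(t,i-1)}) Z_{(t,i-1)}$ with $R \in [0,1]$, hence $Z$ is non-increasing. But with your choice, whenever $G_I$ decreases along the trajectory (which it can, since $f_I$ is non-monotone), $Z_{(t,i)}$ increases, and the claim does not apply. The paper instead sets $Z_{(t,i)}$ to be the remaining total-variation budget $2 - f_I(\emptyset) - \sum_{(\tau,j)<(t,i)} |G_I(y^{(\tau,j)}) - G_I(y^{(\tau,j-1)})|$, with $R_{(t,i)}(u)$ the current \emph{absolute} increment normalized by that budget; this is non-increasing by construction, stays non-negative by the total-variation lemma, and keeps $R\in[0,1]$. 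Using absolute increments (not positive parts) also gives you both a matching upper \emph{and} lower bound on the probability ratio in one stroke. Your proposal only addresses the upper bound; in the non-monotone setting you cannot argue, as in the monotone case, that one of the two chain-rule factors is automatically at most $1$, so you must bound the ratio from below as well (the paper does this via $e^x \geq 1+x$ and $1-t \geq e^{-2t}$).

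Second, a smaller point: your explanation for the extra $T$-factors in $s$ (``the effective step can shrink to $\Theta(\eta/T)$'') isn't the actual reason. The paper's reason is that the per-sample increment $X_j$ now has range $[-T f(\OPT), f(\OPT)]$ rather than $[0,f(\OPT)]$ (a random prefix-set of a union of $T$ bases can have value up to $T f(\OPT)$ by subadditivity), and Corollary~\ref{cor:concentration} must be applied to shifted variables of width $(T+1)f(\OPT)$ with a tighter relative parameter $\alpha = \eta/(2rT^2)$ to absorb the shift. The conclusion you state is the right one, but the bookkeeping that justifies the stated $s$ follows from the enlarged range, not from step shrinkage.

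Finally, your FNS inequality $F(y^{(t,r)} \vee \mathbf{1}_{\OPT}) \geq \prod_{u\in\OPT}(1-y^{(t,r)}_u) f(\OPT)$ is a stronger form than what the paper invokes; the paper uses the weaker Lemma~\ref{lem:FyorOPT} together with the explicit coordinate bound $y^{(t,0)}_u \leq 1-(1-\eta)^t$ (Claim~\ref{claim:boundony}), which is a bit easier to verify and gives the same $1/e - O(\eta)$ factor after unrolling. Either route is fine.
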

	
We prove the theorem by combining the utility and privacy guarantees of our algorithm, as stated in Theorem~\ref{thm:utility-nonmonotone} and Theorem~\ref{thm:privacy-nonmonotone}, respectively. We remark that Theorem~\ref{thm:utility-monotone} lower bounds the utility of the fractional solution $F(y^{(T,r)})$. Since $y^{(T,r)}\leq\sum_{t=1}^T\eta \mathbf{1}_{B^{(t,r)}}$, where $B^{(t,r)}\in\I$ for all $t\in[T]$, and the polytope $\mathcal{P}(\M)$ is down-closed, it follows that $y^{(T,r)}\in \mathcal{P}(\M)$ and Lemma~\ref{lem:rounding} can be applied to yield the final guarantees of the integral solution returned by the swap-rounding process. Furthermore, removing any elements of $\N'$ from the solution does not change its value. 
	
\subsection{Function Estimation}
Let us define $w_D^{(t,i)}(u)=F(y^{(t,i-1)}+\eta(1-y^{(t,i-1}_u)\mathbf{1}_u)-F(y^{(t,i-1)})$ for this section. Next, we prove the sampling error on the new weights $\tw_D^{(t,i)}(u)$ as defined in line~\ref{step:weights-nonmonotone} of Algorithm~\ref{alg:MeasContGreedy}. Note that since the function is non-monotone, the estimated marginal gains have larger variance (they can be possibly negative), so the number of samples needed to ensure that they concentrate around their expected value is larger.
\begin{lemma}\label{lem:gainsamplingerror-nonmonotone}
	With probability at least $1-2\gamma$, for any sequence of points picked by the algorithm $\left\{\{u^{(t,i)}\}_{i=1}^r\right\}_{t=1}^T$ and any $u\in\N$, it holds that
	\[(1-\eta)w_D^{(t,i)}(u)-\frac{\eta f(\OPT)}{rT}\leq \tw_D^{(t,i)}(u)\leq (1+\eta)w_D^{(t,i)}(u)+\frac{\eta f(\OPT)}{rT}.\]
\end{lemma}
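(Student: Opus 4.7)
The plan is to follow the template of Lemma~\ref{lem:gainsamplingerror}, modified to handle two new features of the non-monotone setting: the step along coordinate $u$ now has size $\eta(1-y_u^{(t,i-1)})$ rather than $\eta$, and the per-sample marginals of $f$ can be negative.

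I would start by writing $\tw_D^{(t,i)}(u) = \tfrac{1}{s}\sum_{j=1}^s X_j$, where $X_j$ is the difference of $f$ on the two random sets induced by $y^{(t,i-1)} + \eta(1-y_u^{(t,i-1)})\mathbf{1}_u$ and $y^{(t,i-1)}$ under the shared uniform vector $r^j$. Setting $R^j = \{v : r_v^j < y_v^{(t,i-1)}\}$, these two random sets differ only in whether $u$ is included, so either $X_j = 0$ or $X_j = f(R^j \cup \{u\}) - f(R^j)$, with the nonzero branch occurring with probability $\eta(1-y_u^{(t,i-1)}) \le \eta$. By Lemma~\ref{lem:expectationG} and linearity of expectation, $\ex{}{\tw_D^{(t,i)}(u)} = w_D^{(t,i)}(u)$ as in the monotone case. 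By submodularity applied at $\emptyset \subseteq R^j$ combined with non-negativity of $f$ and feasibility of singletons in $\I$, the nonzero branch is bounded above by $f(\OPT)$; a symmetric submodular inequality at $R^j \subseteq \N \setminus \{u\}$ together with non-negativity yields a matching lower bound, so $X_j \in [-f(\OPT), f(\OPT)]$.

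Since $X_j$ may now be negative, I would shift to $Y_j = X_j + f(\OPT) \in [0, 2f(\OPT)]$ with $\mu_Y = w_D^{(t,i)}(u) + f(\OPT)$, and apply Corollary~\ref{cor:concentration} to the $Y_j$'s. A short algebraic rewriting shows the target event $\{\tw_D^{(t,i)}(u) < (1-\eta) w_D^{(t,i)}(u) - \eta f(\OPT)/(rT)\}$ is equivalent to $\{\bar Y < \mu_Y - \eta(w_D^{(t,i)}(u) + f(\OPT)/(rT))\}$, which is implied by the corollary's event $\{\bar Y < (1-\alpha)\mu_Y - \beta\}$ whenever $\alpha \mu_Y + \beta \geq \eta(w_D^{(t,i)}(u) + f(\OPT)/(rT))$. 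Taking $\alpha = \eta$ and $\beta = \eta f(\OPT)/(rT)$ reduces this to $\eta(w_D^{(t,i)}(u) + f(\OPT)) + \eta f(\OPT)/(rT) \geq \eta(w_D^{(t,i)}(u) + f(\OPT)/(rT))$, which is trivially true for every value of $w_D^{(t,i)}(u)$. Corollary~\ref{cor:concentration} then gives per-point failure probability at most $\exp(-s\alpha\beta/(2f(\OPT))) = \exp(-s\eta^2/(2rT))$, and the sample count $s = 48 r^3 T^7 \log(n/\gamma)$ pushes this below $\gamma/n^{rT+1}$ with substantial slack. The upper tail is handled symmetrically, and a union bound over the $n$ choices of $u$ and the at most $n^{rT}$ possible sequences of picks delivers overall failure probability at most $2\gamma$.

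The main obstacle will be carefully justifying the two-sided range $|X_j| \le f(\OPT)$ for non-monotone $f$: the lower bound is no longer automatic from monotonicity and requires combining the submodular inequality at $\N \setminus \{u\}$ with non-negativity of $f$ (and possibly leveraging decomposability and the fact that singletons are feasible in $\I$). Once this range bound is in place, the shift-and-Chernoff step and the union bound mirror Lemma~\ref{lem:gainsamplingerror} essentially verbatim, with the larger sample budget absorbing the factor-of-two blowup in the effective range introduced by the shift.
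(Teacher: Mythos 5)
Your proof has a real gap in the range bound for $X_j$, and this propagates to the choice of $\alpha$. The claimed lower bound $X_j \geq -f(\OPT)$ does not follow from submodularity at $R^j \subseteq \N\setminus\{u\}$: that inequality gives $X_j \geq f(\N) - f(\N\setminus\{u\}) \geq -f(\N\setminus\{u\})$, and $f(\N\setminus\{u\})$ is not bounded by $f(\OPT)$ because $\N\setminus\{u\}$ is generally infeasible for the matroid (its value can be as large as $m$, the number of agents). The paper proceeds differently: by non-negativity $X_j \geq -f(R^j)$, and since $R^j$ is contained in the union of the at most $T$ independent sets $B^{(\tau,r)}$ picked so far, subadditivity of non-negative submodular functions together with the down-closed matroid structure gives $f(R^j) \leq \sum_{\tau=1}^T f(B^{(\tau,r)}\cap R^j) \leq T f(\OPT)$. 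So the true range is $[-Tf(\OPT),\,f(\OPT)]$, of width $(T+1)f(\OPT)$ rather than $2f(\OPT)$, which is why the sample count grows from $6r^2T^4\log(n/\gamma)$ in the monotone case to $48r^3T^7\log(n/\gamma)$ here --- a factor of order $rT^3$, not a factor of two.

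Independently, the ``short algebraic rewriting'' has its inequality reversed. For the corollary's event $\{\bar{Y} < (1-\alpha)\mu_Y - \beta\}$ to \emph{contain} the target event $\{\bar{Y} < \mu_Y - \eta(w + f(\OPT)/(rT))\}$ (so that the corollary's tail bound upper-bounds the target probability), one needs $\alpha\mu_Y + \beta \leq \eta\left(w + f(\OPT)/(rT)\right)$, not $\geq$. With the corrected direction and $\alpha=\eta$, the left side equals $\eta(w + f(\OPT)) + \beta$, which exceeds the right side by at least $\eta f(\OPT)(1 - 1/(rT)) > 0$ for every $\beta>0$, so no choice of $\beta$ rescues the argument. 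The point is that the shift introduces an extra $\alpha|r_1|$ additive term that must itself be of order $\eta f(\OPT)/(rT)$; with $|r_1|=Tf(\OPT)$ this forces $\alpha$ to be of order $\eta/(rT^2)$, and indeed the paper takes $\alpha = \eta/(2rT^2)$ and $\beta = \eta f(\OPT)/(2rT)$.
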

\begin{proof}
Fix a point $y^{(t,i-1)}$ and element $u\in\N$. Then $\tw_D^{(t,i)}=G(y^{(t,i-1)}+\eta(1-y^{(t,i-1)}_u)\mathbf{1}_u)-G(y^{(t,i-1)})=\frac{1}{s}\sum_{j=1}^s X_j$ where we define
\[X_j=f(\{v\in\N: r^j_v<y^{(t,i-1)}_v+\eta (1-y^{(t,i-1)}_u) \mathbbm{1}\{v=u\}\})-f(\{v\in\N: r^j_v<y^{(t,i-1)}_v\}).\]
The random variables $X_j$ are independent since the random vectors $r^j$ are also independent. Let us denote the random set based on the vector $r^j$ as $R^j(y^{(t,i-1)})=\{v\in\N: r^j_v<y^{(t,i-1)}_v\}$. Then either $X_j=0$ or $X_j=f(R^j(y^{(t,i-1)})\cup\{u\})-f(R^j(y^{(t,i-1)}))$. We focus on the latter. First, we prove the upper bound:
\begin{align*}
	X_j &\leq f(\{u\})-f(\emptyset) \tag{by submodularity}\\
	& \leq f(\{u\}) \tag{by non-negativity}\\
	& \leq f(\OPT) \tag{since $u\in\I$}
\end{align*}	
We now prove the lower bound:
\begin{align*}
    X_j 
    & = f(R^j(y^{(t,i-1)})\cup\{u\})-f(R^j(y^{(t,i-1)}))\\
    & \geq -f(R^j(y^{(t,i-1)})) \tag{by non-negativity}\\
    & = -f\left(\bigcup_{t=1}^T (B^{(t,r)}\cap R^j(y^{(t,i-1)}))\right) \tag{since $R^j(y^{(t,i-1)})\subseteq \bigcup_{t=1}^T B^{(t,r)}$} \\
    & \geq -\sum_{t=1}^T f(B^{(t,r)}\cap R^j(y^{(t,i-1)})) \tag{by submodularity} \\
    & \geq -T f(\OPT) \tag{since $f(S)\leq f(\OPT)$ for any subset $S$ of a basis}
\end{align*}
Thus $X_j\in[-Tf(\OPT),f(\OPT)]$.
		
Since Lemma~\ref{lem:expectationG} does not assume monotonicity of $f$, it still holds. Therefore, by linearity of expectation, it also holds that \begin{equation}\label{eq:expgain-nonmonotone}
	\ex{}{\tw_D^{(t,i)}(u)}=\frac{1}{s}\sum_{j=1}^s\ex{}{X_j}=F(y^{(t,i-1)}+\eta(1-y^{(t,i-1}_u)\mathbf{1}_u)-F(y^{(t,i-1)}) = w_D^{(t,i)}(u)\end{equation}
		
We recall that Corollary~\ref{cor:concentration} gives us a concentration bound for the mean of $s$ random variables bounded in $[0,R]$. Here, we have that each $X_j\in[r_1,r_2]$, where $r_1 = -Tf(\OPT)$, $r_2 = f(\OPT)$, and $R=r_2-r_1 = (T+1)f(\OPT)$. Applying Corollary~\ref{cor:concentration} to the shifted random variables $X_j'=X_j-r_1\in[0,R]$ and substituting for $s = 48 r^3 T^7 \log \frac{n}{\gamma}> 12 \frac{r^2 T^3 (T+1)}{\eta^2} \log \frac{n^{rT+1}}{\gamma}$, $\alpha = \frac{\eta}{2rT^2}$, and $\beta = \frac{\eta f(\OPT)}{2rT}$, we get:
	\begin{align*}
	   &\pr{}{\tw_D^{(t,i)}(u)<(1-\alpha) w_D^{(t,i)}(u) -\beta + \alpha r_1} \leq \exp(-s\alpha \beta/R)\\
	   \Rightarrow &\pr{}{\tw_D^{(t,i)}(u) < (1-\frac{\eta}{2rT^2})w_D^{(t,i)}(u) - \frac{\eta f(\OPT)}{2rT} - \frac{\eta}{2rT^2} \cdot Tf(\OPT)} \leq \exp\left(\frac{-s \frac{\eta}{2rT^2} \frac{\eta f(\OPT)}{2rT}}{(T+1)f(\OPT)}\right)\\
	    \Rightarrow &\pr{}{\tw_D^{(t,i)}(u)<(1-\frac{\eta}{2rT^2})w_D^{(t,i)}(u)-\frac{\eta f(\OPT)}{rT}} \leq \exp\left(\frac{-s\eta^2}{4 r^2 T^3(T+1)}\right)\\
	    \Rightarrow &\pr{}{\tw_D^{(t,i)}(u)<(1-\frac{\eta}{2rT^2})w_D^{(t,i)}(u)-\frac{\eta f(\OPT)}{rT}} \leq \frac{\gamma}{n^{rT+1}}.
	    \end{align*}
	    Similarly,
	    \begin{align*}
	    &\pr{}{\tw_D^{(t,i)}(u)>(1+\alpha)w_D^{(t,i)}(u) + \beta - \alpha r_1} \leq \exp(-s\alpha\beta/3R)\\
	    \Rightarrow &\pr{}{\tw_D^{(t,i)}(u)>(1+\frac{\eta}{2rT^2})w_D^{(t,i)}(u) + \frac{\eta f(\OPT)}{2rT} + \frac{\eta}{2rT^2} \cdot T f(\OPT)} \leq \exp\left(\frac{-s \frac{\eta}{2rT^2} \frac{\eta f(\OPT)}{2rT}}{3(T+1)f(\OPT)}\right) \\
	    \Rightarrow &\pr{}{\tw_D^{(t,i)}(u)>(1+\frac{\eta}{2rT^2})w_D^{(t,i)}(u)+\frac{\eta f(\OPT)}{rT}} \leq \exp\left(\frac{-s\eta^2}{12 r^2 T^3(T+1)}\right)\\
	    \Rightarrow &\pr{}{\tw_D^{(t,i)}(u)>(1+\frac{\eta}{2rT^2})w_D^{(t,i)}(u)+\frac{\eta f(\OPT)}{rT}} \leq \frac{\gamma}{n^{rT+1}}.
	  \end{align*}
	    
  It follows by union bound that with probability $1-2\gamma/n^{rT+1}$, 
    \[\left(1-\frac{\eta}{2rT^2}\right)w_D^{(t,i)}(u)-\frac{\eta f(\OPT)}{rT}\leq \tw_D^{(t,i)}(u)\leq \left(1+\frac{\eta}{2rT^2}\right)w_D^{(t,i)}(u)+\frac{\eta f(\OPT)}{rT}.\]
    The number of all possible sequences of points $y^{(t,i)}$ is at most the number of possible sequences of picked points $u^{(t,i)}$, which is bounded by $n^{rT}$. By union bound over all sequences of points $y^{(t,i)}$ and elements $u\in\N$, the inequalities hold with probability at least $1-2\gamma$.
\end{proof}
	
\subsection{Utility}
We are now ready to prove the utility guarantee of our algorithm. Note that, by the dummy elements added in the beginning of the algorithm, we can assume that for every round $(t,i)$, $\max_{u\in\N^{(t,i)}} \tw_D^{(t,i)}(u)\geq 0$ and $|B^{(t,r)}|=r$. 

Also, note that Claims~\ref{claim:sensitivity} and~\ref{claim:expmecherror} of Section~\ref{sec:monotone}, which establish the guarantees of the Exponential Mechanism, hold for the new weights $\tw^{(t,i)}(u)$ as their proofs follow exactly the same steps.\footnote{Since adding the dummy elements increases the size of the ground set to $2n$, the additive error of the Exponential Mechanism is larger than the bound of Claim~\ref{claim:expmecherror} by $\frac{2}{\eps_0}\log(2)$. We omit it since it only at most doubles the additive error of the final solution.}

\begin{theorem}\label{thm:utility-nonmonotone}
With probability at least $1-3\gamma$, \[F(y^{(T,0)}) \geq (1/e-O(\eta))f(\OPT)-\frac{8r}{\eta \eps_0}\log\frac{nr}{\eta \gamma}.\]
\end{theorem}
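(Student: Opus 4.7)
The plan is to adapt the utility argument of Theorem~\ref{thm:utility-monotone}, replacing the role of monotonicity with the non-monotone lemma that $F(z\vee\mathbf{1}_S)\geq(1-a)f(S)$ whenever $\max_u z_u\leq a$, and exploiting the special structure of the measured-greedy update. I would first condition on the joint good event of Lemma~\ref{lem:gainsamplingerror-nonmonotone} and Claim~\ref{claim:expmecherror} (which together hold with probability $1-3\gamma$) and observe that the dummy-element augmentation forces $\max_{u\in\N^{(t,i)}}\tw_D^{(t,i)}(u)\geq 0$ in every round. Telescoping $F(y^{(t+1,0)})-F(y^{(t,0)})=\sum_i w_D^{(t,i)}(u^{(t,i)})$ and chaining the same three inequalities as in the monotone proof (pass from $w$ to $\tw$ via Lemma~\ref{lem:gainsamplingerror-nonmonotone}, apply Claim~\ref{claim:expmecherror} to replace $u^{(t,i)}$ by $o^{(t,i)}:=\phi(u^{(t,i)})$ with $\phi$ from Lemma~\ref{lem:mappingtoOPT}, then pass back to $w$) gives
\[
F(y^{(t+1,0)})-F(y^{(t,0)}) \;\geq\; \tfrac{1-\eta}{1+\eta}\sum_{i=1}^r w_D^{(t,i)}(o^{(t,i)}) - \xi,\qquad \xi:=\tfrac{2\eta f(\OPT)}{T}+\tfrac{2r}{\eps_0}\log\tfrac{nrT}{\gamma}.
\]

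The heart of the argument is the non-monotone analogue of Claim~\ref{claim:sumweightsOPT}. Because $F$ is linear in each coordinate and the measured update moves only coordinate $o^{(t,i)}$ and only by amount $\eta(1-y^{(t,i-1)}_{o^{(t,i)}})$, the multilinear identity $F(y)=y_o F(y\vee\mathbf{1}_o)+(1-y_o)F(y\wedge\mathbf{1}_{\bar o})$ collapses the benchmark marginal to
\[
w_D^{(t,i)}(o^{(t,i)}) \;=\; \eta\bigl[F(y^{(t,i-1)}\vee\mathbf{1}_{o^{(t,i)}})-F(y^{(t,i-1)})\bigr],
\]
and submodularity together with $y^{(t,i-1)}\leq y^{(t,r)}$ then yields $\sum_i w_D^{(t,i)}(o^{(t,i)})\geq\eta[F(y^{(t,r)}\vee\mathbf{1}_{\OPT})-F(y^{(t,r)})]$. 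To lower-bound this, I would prove by a one-line induction that the measured update implies $1-y^{(t,r)}_u\geq(1-\eta)^t$ for every $u\in\N$ (each outer round touches a coordinate at most once, and each such touch multiplies $1-y_u$ by exactly $1-\eta$), and then invoke the Feldman-Naor-Schwartz lemma $F(z\vee\mathbf{1}_S)\geq(1-\max_u z_u)f(S)$ for non-negative submodular $F$, whose (elementary multilinearity-plus-nonnegativity) proof I would include. This gives $F(y^{(t,r)}\vee\mathbf{1}_{\OPT})\geq(1-\eta)^t f(\OPT)$, hence the recurrence
\[
(1+c)F(y^{(t+1,0)}) \;\geq\; F(y^{(t,0)}) + c(1-\eta)^t f(\OPT) - \xi, \qquad c:=\tfrac{\eta(1-\eta)}{1+\eta}=\eta-O(\eta^2).
\]

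Iterating from $F(y^{(1,0)})=0$ and using the standard estimates $(1+c)^T\leq e(1+O(\eta))$, $(1-\eta)(1+c)=1-O(\eta^2)$, and $cT=1-O(\eta)$, the $f(\OPT)$ part of the sum converges to $(1/e-O(\eta))f(\OPT)$, while the accumulated error is at most $\xi/c=O(\eta f(\OPT))+O\bigl(\tfrac{r}{\eta\eps_0}\log\tfrac{nr}{\eta\gamma}\bigr)$, the first summand of which is absorbed into the $O(\eta)f(\OPT)$ term. This yields the stated bound. The main obstacle is this final recurrence bookkeeping: balancing the geometric contraction $(1+c)^{-1}$ against the geometric benchmark growth $(1-\eta)^t$ to extract exactly the $1/e$ factor while keeping the error within $O(T\xi)$ is delicate, and proving the non-monotone lemma $F(z\vee\mathbf{1}_S)\geq(1-\max_u z_u)f(S)$---which does not admit a naive inductive proof---is what distinguishes the measured from the vanilla continuous-greedy analysis here.
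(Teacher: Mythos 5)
Your outline follows the paper's proof closely, and most of the pieces are right: the identity $w_D^{(t,i)}(o)=\eta\,[F(y^{(t,i-1)}\vee\mathbf{1}_o)-F(y^{(t,i-1)})]$ from multilinearity, the use of Lemma~\ref{lem:mappingtoOPT}, the coordinate bound $1-y^{(t,r)}_u\geq(1-\eta)^t$, Lemma~\ref{lem:FyorOPT}, and the recurrence bookkeeping. However, there is a genuine gap in what you call the ``heart of the argument.'' You assert that submodularity together with $y^{(t,i-1)}\leq y^{(t,r)}$ gives $\sum_i w_D^{(t,i)}(o^{(t,i)})\geq\eta\,[F(y^{(t,r)}\vee\mathbf{1}_{\OPT})-F(y^{(t,r)})]$. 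The term-by-term step this rests on is $F(x\vee\mathbf{1}_o)-F(x)\geq F(z\vee\mathbf{1}_o)-F(z)$ for $x\leq z$, which is \emph{false} for non-monotone $f$. Writing each side as $(1-x_o)\partial_o F(x)$ and $(1-z_o)\partial_o F(z)$, submodularity only gives $\partial_o F(x)\geq\partial_o F(z)$; when both partials are negative the factor $(1-x_o)\geq(1-z_o)$ pushes the inequality in the wrong direction. For instance with $\partial_o F(x)=-0.9$, $\partial_o F(z)=-1$, $x_o=0$, $z_o=1/2$, the left side is $-0.9$ while the right side is $-0.5$, and the claimed inequality fails. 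In the monotone case (Claim~\ref{claim:sumweightsOPT}) the paper explicitly uses \emph{monotonicity and} submodularity for this step; you replaced monotonicity by nothing.

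The paper's proof of Claim~\ref{claim:sumweightsOPT-nonmonotone} closes this gap by carrying a $\max(0,\cdot)$ truncation through the whole chain: the dummy elements guarantee a feasible choice with score $0$ in every round, so Claim~\ref{claim:expmecherror} actually yields $\tw_D^{(t,i)}(u^{(t,i)})\geq\max\bigl(0,\tw_D^{(t,i)}(o^{(t,i)})\bigr)-\tfrac{2}{\eps_0}\log\tfrac{nrT}{\gamma}$, and then $\max\bigl(0,F(y^{(t,i-1)}\vee\mathbf{1}_o)-F(y^{(t,i-1)})\bigr)\geq F(y^{(t,r)}\vee\mathbf{1}_o)-F(y^{(t,r)})$ does hold by a case analysis (trivial when the right side is nonpositive; otherwise the positive partial at $y^{(t,r)}$ forces the partial at $y^{(t,i-1)}\leq y^{(t,r)}$ to also be positive, after which both factors move in the helpful direction). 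You observe the dummy elements and note $\max_u\tw_D^{(t,i)}(u)\geq 0$, but you do not propagate this floor through the benchmark terms, and without it the recurrence $(1+c)F(y^{(t+1,0)})\geq F(y^{(t,0)})+c(1-\eta)^t f(\OPT)-\xi$ is not justified. Inserting $\max(0,\cdot)$ at the Exponential Mechanism comparison and in the per-$i$ submodularity step repairs the argument and recovers the paper's proof.
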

\begin{proof}
	We condition on the event that for all rounds $t\in[T]$, $i\in[r]$, and elements $u\in\N$,
	\[\left(1-\frac{\eta}{2rT^2}\right)w_D^{(t,i)}(u)-\frac{\eta f(\OPT)}{rT}\leq \tw_D^{(t,i)}(u)\leq \left(1+\frac{\eta}{2rT^2}\right)w_D^{(t,i)}(u)+\frac{\eta f(\OPT)}{rT}.\]
	By Lemma~\ref{lem:gainsamplingerror-nonmonotone}, this is true with probability at least $1-2\gamma$. It holds that 
	\begin{align*}
	F(y^{(t+1,0)})-F(y^{(t,0)}) & = F(y^{(t,r)})-F(y^{(t,0)}) \\
	& =\sum_{i=1}^r [F(y^{(t,i)})-F(y^{(t,i-1)})] \\
	& =\sum_{i=1}^r w^{(t,i)}_D(u^{(t,i)}) \tag{since $y^{(t,i)} = y^{(t,i-1)}+\eta(1-y^{(t,i-1)}_{u^{(t,i)}})\mathbf{1}_{u^{(t,i)}}$}\\
	& \geq \sum_{i=1}^r \frac{1}{1+\frac{\eta}{2rT^2}}\tw^{(t,i)}_D(u^{(t,i)}) - r\frac{\eta f(\OPT)}{(1+\frac{\eta}{2rT^2}) r T} \tag{by Lemma~\ref{lem:gainsamplingerror-nonmonotone}}\\
	& \geq \frac{1}{1+\frac{\eta}{2rT^2}}\sum_{i=1}^r \tw^{(t,i)}_D(u^{(t,i)}) - \frac{\eta f(\OPT)}{T}
	\end{align*}
		
	By Lemma~\ref{lem:mappingtoOPT}, there exists a bijection $\phi$ such that $\phi(u^{(t,i)})=o^{(t,i)}$, where $\OPT=\{o^{(t,1)}, \ldots, o^{(t,r)}\}$ is the optimal solution. Now, note that since $o^{(t,i)}$ is a feasible option in the $i$-th round, by the guarantees of the Exponential Mechanism (Claim~\ref{claim:expmecherror}), and by our assumption, with probability $1-\gamma$ we have that
	\begin{equation*}
	\tw_D^{(t,i)}(u^{(t,i)})\geq \max(0,\tw_D^{(t,i)}(o^{(t,i)}))-\frac{2}{\eps_0}\log\frac{nrT}{\gamma},
	\end{equation*}
	for all rounds $i\in[r]$ and $t\in[T]$. We condition on this event for the rest of the proof.
	
	It follows that with probability $1-3\gamma$,
	\begin{align}\label{eq:Ftdiff-nonmonotone}
	F(y^{(t+1,0)})-F(y^{(t,0)})
	& \geq \frac{1}{1+\frac{\eta}{2rT^2}}\sum_{i=1}^r \max(0,\tw_D^{(t,i)}(o^{(t,i)})) - \frac{\eta f(\OPT)}{T} - \frac{2r}{\eps_0}\log\frac{nrT}{\gamma} \nonumber \\
	& \geq \frac{1-\frac{\eta}{2rT^2}}{1+\frac{\eta}{2rT^2}}\sum_{i=1}^r \max(0,w_D^{(t,i)}(o^{(t,i)}))- \frac{2\eta f(\OPT)}{T}-\frac{2r}{\eps_0}\log\frac{nrT}{\gamma}
	\end{align}

To bound the LHS of inequality~\eqref{eq:Ftdiff-nonmonotone}, we prove the following claim.
\begin{claim}\label{claim:sumweightsOPT-nonmonotone}
	For all $t\in[T]$, 
	\[\sum_{i=1}^r \max(0,w_D^{(t,i)}(o^{(t,i)}))\geq \eta [F(y^{(t,r)}\vee\mathbf{1}_{\OPT})-F(y^{(t,r)})].\]
	\end{claim}
	\begin{proof}[Proof of Claim~\ref{claim:sumweightsOPT-nonmonotone}]
	By the definition of the weights, it holds that
	\begin{align*}
	\sum_{i=1}^r \max(0,w_D^{(t,i)}(o^{(t,i)})) 
	& = \sum_{i=1}^r \max(0,F(y^{(t,i-1)}+\eta(1-y^{(t,i-1)}_{o^{(t,i)}})\mathbf{1}_{o^{(t,i)}})-F(y^{(t,i-1)}))\\
	& = \sum_{i=1}^r \max(0, \eta(1-y^{(t,i-1)}_{o^{(t,i)}})[F(y^{(t,i-1)}\vee \mathbf{1}_{o^{(t,i)}})-F(y^{(t,i-1)}\wedge \mathbf{1}_{\bar{o}^{(t,i)}})])\\
	& = \sum_{i=1}^r \max(0,\eta[F(y^{(t,i-1)}\vee \mathbf{1}_{o^{(t,i)}})-F(y^{(t,i-1)})])\\
	& \geq \eta \sum_{i=1}^r [F(y^{(t,r)}\vee \mathbf{1}_{o^{(t,i)}})-F(y^{(t,r)})]\\
	& \geq \eta [F(y^{(t,r)}\vee \mathbf{1}_{\OPT})-F(y^{(t,r)})],
	\end{align*}
	where the inequalities follow by submodularity and the fact that $y^{(t,r)}\geq y^{(t,i-1)}$ for all $i\in[r]$. Thus, the claim has been proven.
\end{proof}
		
Next, we invoke the following lemma.
\begin{lemma}[{\citealt[][Lemma~3.2.8]{FeldmanThesis}}]\label{lem:FyorOPT}
	Consider a vector $y\in[0,1]^{\N}$. Assuming $y_u\leq a$ for all $u\in\N$, then for every set $S\subseteq\N$, $F(y\vee\mathbf{1}_S)\geq (1-a)f(S)$. 
\end{lemma}
			
In the next claim, we prove that for all $t\in[T+1]$ and $u\in\N$, $y^{(t,0)}_u$ satisfies the assumption of the lemma for the corresponding parameter $a=1-e^{-\eta t} + O(\eta)$. We prove the Claim~\ref{claim:boundony} following the proof of Lemma 3.2.9 by~\citet{FeldmanThesis}.
	\begin{claim}\label{claim:boundony}
	For every $t\in[T+1]$ and $u\in\N$, it holds that $y^{(t,0)}_u\leq 1-e^{-\eta t} + O(\eta)$.
	\end{claim}
	\begin{proof}
	We will first prove by induction that $y^{(t,0)}_u\leq 1-(1-\eta)^{t}$ for every $u\in\N$. For $t=1$, $y^{(1,0)}_u=0\leq 1-(1-\eta)=\eta$. Let us assume that $y^{(t,0)}_u\leq 1-(1-\eta)^{t}$. We will prove the bound $y^{(t+1,0)}_u\leq 1-(1-\eta)^{t+1}$. Note that in each round, $y^{(t,0)}_u$ will either not increase or, if $u=u^{(t,i)}$ for some $i\in[r]$, it will increase by $\eta(1-y^{(t,i-1)}_u)$. It follows that for any $u\in\N$
	\begin{align*}
	    y^{(t+1,0)}_u &\leq y^{(t,0)}_u+\eta(1-y^{(t,i-1)}_u)\\
	    &\leq y^{(t,0)}_u+\eta(1-y^{(t,0)}_u) \tag{since $y^{(t,0)}_u\leq y^{(t,i-1)}_u$}\\
	    & = (1-\eta)y^{(t,0)}_u+\eta\\
	    & \leq (1-\eta)(1-(1-\eta)^t)+\eta \tag{by the inductive hypothesis}\\
	    & = (1-\eta)-(1-\eta)^{t+1}+\eta\\ 
	    & = 1-(1-\eta)^{t+1}.
	\end{align*}
	
To prove the claim, it remains to prove that for any $t\in[T+1]$, $1-(1-\eta)^{t}\leq 1-e^{\eta t}+O(\eta)$. It holds that
\begin{align*}
    1-(1-\eta)^{t} & \leq 1-[e^{-1}(1-\eta)]^{\eta t}\\
    & = 1-e^{-\eta t}(1-\eta)^{\eta t}\\
    & \leq 1-e^{-\eta t}(1-(T+1)\eta^2)\\
    &\leq 1-e^{-\eta t}+O(\eta),
\end{align*}
by using standard inequalities. This concludes the proof of the claim.
\end{proof}

By Claim~\ref{claim:boundony}, which we just proved, the condition of Lemma~\ref{lem:FyorOPT} holds for $x=y^{(t,r)}=y^{(t+1,0)}$ and $S=\OPT$. Thus, \[F(y^{(t,r)}\vee\mathbf{1}_{\OPT})\geq (e^{-(t+1)}-O(\eta))f(\OPT).\]

Combining this bound with Claim~\ref{claim:sumweightsOPT-nonmonotone}, it follows that 
\[\sum_{i=1}^r \max(0,w_D^{(t,i)}(o^{(t,i)})) \leq (e^{-(t+1)}-O(\eta))f(\OPT).\]

Finally, by the latter and returning to inequality~\eqref{eq:Ftdiff-nonmonotone}, we have showed that
\begin{align*}
	F(y^{(t+1,0)})&-F(y^{(t,0)})\\
	& \geq \frac{1-\frac{\eta}{2rT^2}}{1+\frac{\eta}{2rT^2}}\eta [(e^{-\eta (t+1)}-O(\eta))f(\OPT)-F(y^{(t+1,0)})] - \frac{2\eta f(\OPT)}{T}-\frac{2r}{\eps_0}\log\frac{nrT}{\gamma} \\
	& \geq \eta [(1-2 \cdot \frac{\eta}{2rT^2})(e^{-\eta (t+1)}-O(\eta))f(\OPT)-F(y^{(t+1,0)})] - \frac{2\eta f(\OPT)}{ T}-\frac{2r}{\eps_0}\log\frac{nrT}{\gamma}\\
	& \geq \eta [(1-\frac{\eta}{rT^2})e^{-\eta (t+1)}f(\OPT)-F(y^{(t+1,0)})] -O(\eta)\frac{ f(\OPT)}{ T}-\frac{2r}{\eps_0}\log\frac{nrT}{\gamma}
\end{align*}
where the second to last inequality holds by non-negativity.
		
To simplify and solve this recursive relation, let us denote $\Omega = (1-\frac{\eta}{rT^2})f(\OPT)$ and $\xi=O(\eta)\frac{ f(\OPT)}{ T}+\frac{2r}{\eps_0}\log\frac{nrT}{\gamma}$. Then, we can rewrite the inequality as follows:
	\begin{align}\label{eq:resursiveF}
	& F(y^{(t+1,0)})-F(y^{(t,0)}) \geq \eta[e^{-\eta(t+1)}\Omega-F(y^{(t+1,0)})]-\xi \nonumber \\
	\Leftrightarrow &F(y^{(t+1,0)}) \geq \frac{F(y^{(t,0)})}{1+\eta} +\frac{\eta e^{-\eta(t+1)}}{1+\eta}\Omega-\frac{\xi}{1+\eta}
	\end{align}
	We will prove by induction that
	\begin{equation}\label{eq:inductivehypothesisF}
	F(y^{(t,0)}) \geq \eta(t-1)e^{-\eta t} \Omega -\eta^2(t-1)\Omega - (t-1)\xi.
	\end{equation}
	
	For $t=1$, $F(y^{(1,0)})=f(\emptyset)\geq 0$, which satisfies the inequality. Let us assume that inequality~\eqref{eq:inductivehypothesisF} holds. Then
	\begin{align*}
	 F(y^{(t+1,0)}) &\geq \frac{F(y^{(t,0)})}{1+\eta} +\frac{\eta e^{-\eta(t+1)}}{1+\eta}\Omega-\frac{\xi}{1+\eta} \tag{by~\eqref{eq:resursiveF}}\\
	 & \geq \frac{\eta(t-1)e^{-\eta t}}{1+\eta} \Omega -\frac{\eta^2(t-1)}{1+\eta}\Omega - \frac{(t-1)\xi}{1+\eta}+\frac{\eta e^{-\eta(t+1)}}{1+\eta}\Omega-\frac{\xi}{1+\eta} \tag{by~\eqref{eq:inductivehypothesisF}}\\
	 & = \eta e^{-\eta(t+1)}\left(\frac{e^\eta (t-1)}{1+\eta}+\frac{\eta e^{\eta(t+1)}}{1+\eta}+\frac{1}{1+\eta}\right)\Omega-\frac{\eta^2 t}{1+\eta}\Omega-\frac{\xi t}{1+\eta}\\
	 & \geq \eta e^{-\eta(t+1)}\left(\frac{e^\eta (t-1)}{1+\eta}+\frac{\eta e^{\eta(t+1)}}{1+\eta}+\frac{1}{1+\eta}\right)\Omega -\eta^2t\Omega-\xi t\\
	 & \geq \eta e^{-\eta(t+1)}\left((t-1)+\frac{\eta}{1+\eta}+\frac{1}{1+\eta}\right)\Omega -\eta^2t\Omega-\xi t \tag{since $e^\eta \geq 1+\eta$}\\
	 & \geq \eta e^{-\eta(t+1)}t\Omega -\eta^2t\Omega-\xi t.
	\end{align*}
Thus, inequality~\eqref{eq:inductivehypothesisF} holds for all $t\in[T+1]$. For $t=T+1$, substituting $\Omega$ and $\xi$, we get
	\begin{align*}
	F(y^{(T+1,0)}) & \geq \eta T e^{-\eta (T+1)} \Omega -\eta^2T\Omega - T\xi\\
	& \geq e^{-2\eta-1} \Omega -2\eta\Omega - T\xi \tag{since $1+\eta\geq \eta T\geq 1$} \\
	& \geq e^{-2\eta-1}(1-\frac{\eta}{rT^2})f(\OPT)-2\eta(1- \frac{\eta}{rT^2})f(\OPT)-O(\eta)f(\OPT)-\frac{2rT}{\eps_0}\log\frac{nrT}{\gamma}\\
	& \geq \frac{(1-2\eta)}{e}(1-\frac{\eta}{rT^2})f(\OPT)-O(\eta)f(\OPT)-\frac{2rT}{\eps_0}\log\frac{nrT}{\gamma} \tag{since $e^{-x}\geq 1-x$}\\
	&\geq (1/e-O(\eta))f(\OPT)-\frac{2rT}{\eps_0}\log\frac{nrT}{\gamma}\\
	& \geq (1/e-O(\eta))f(\OPT)-\frac{8r}{\eta \eps_0}\log\frac{nr}{\eta \gamma}. \tag{since $T=\lceil \frac{1}{\eta}\rceil\leq \frac{2}{\eta}$}
	\end{align*}
This concludes the proof of the lemma.
\end{proof}

\subsection{Privacy Analysis}
The privacy analysis of Algorithm~\ref{alg:MeasContGreedy} follows the same structure as the one of the previous section, but the fact that $f$ is non-monotone requires a few new key observations. 

Intuitively, in the monotone case, we managed to bound the ratio of the privacy loss by a function of a sum of expected marginal gains and rely on the concentration of Claim~\ref{claim:privacyconc} to prove that this sum is bounded by the sum of realized marginal gains, which, in turn, was at most $1$ since the function $f_I$ is monotone and $1$-decomposable. 

In the non-monotone case, the ratio of the privacy loss can again be bounded by a function of a sum of expected \emph{absolute} marginal gains.\footnote{Note that this is also the case for monotone functions, but the absolute marginal gain is the same as the marginal gain, as it is always positive.} The next lemma and its corollary allow us to bound the sum of realized \emph{absolute} marginal gains, which we will use to invoke Claim~\ref{claim:privacyconc} again, to establish the privacy guarantees of Theorem~\ref{thm:privacy-nonmonotone} at the end of this section. Bounding this sum is not as trivial as in the monotone case: the ``movement'' of a non-monotone function could be unbounded, even though the function has a bounded range, so we have to leverage the fact that $f_I$ is submodular. The main idea is to use submodularity to bound the total ``increase'' of the function, which, since the function has a bounded range, leads to a bound of the total ``decrease'' of the function as well.

\begin{lemma}\label{lem:nonMonIneq} Let $f_I : 2^{[n]} \to [0,1]$ be a submodular function. Then \[\sum_{i=1}^{n} |f_I(\{1,\dots, i\}) - f_I(\{1 , \dots , i-1\})| \leq 2-f_I(\emptyset).\]
Moreover, since the order of the elements of $[n]$ is arbitrary, by the triangle inequality, this implies that for any sequence of non-decreasing sets $\emptyset = T_0 \subseteq T_1 \subseteq \dots \subseteq T_r \subseteq [n]$, \[\sum_{i=1}^{r} |f_I(T_{i}) - f_I(T_{i-1})| \leq 2-f_I(\emptyset).\]
\end{lemma}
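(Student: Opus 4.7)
The plan is to partition the increments in the telescoping sum into ``up'' steps and ``down'' steps, and then exploit submodularity to bound only the up steps. Let $S_i = \{1, \ldots, i\}$ with $S_0 = \emptyset$. Define the index sets
\[
P = \{i \in [n] : f_I(S_i) \geq f_I(S_{i-1})\}, \qquad N = [n] \setminus P,
\]
and write
\[
U = \sum_{i \in P}\bigl[f_I(S_i) - f_I(S_{i-1})\bigr], \qquad D = \sum_{i \in N}\bigl[f_I(S_{i-1}) - f_I(S_i)\bigr].
\]
Then the quantity we want to bound is $U + D$, while the ordinary telescoping sum gives $U - D = f_I(S_n) - f_I(\emptyset)$. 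Adding these two identities yields $U + D = 2U - f_I(S_n) + f_I(\emptyset)$, so since $f_I \ge 0$ it suffices to prove $U \leq 1 - f_I(\emptyset)$; then $U + D \leq 2 - f_I(\emptyset) - f_I(S_n) \leq 2 - f_I(\emptyset)$.

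The crux is bounding $U$, and this is where submodularity enters. Enumerate $P = \{i_1 < i_2 < \cdots < i_k\}$. For each $j$, the elements $i_1, \ldots, i_{j-1}$ are all strictly less than $i_j$, so $\{i_1, \ldots, i_{j-1}\} \subseteq S_{i_j - 1}$, while $i_j \notin S_{i_j - 1}$. Submodularity applied to the smaller set $\{i_1, \ldots, i_{j-1}\}$ and the larger set $S_{i_j - 1}$ (with new element $i_j$) gives
\[
f_I(S_{i_j}) - f_I(S_{i_j - 1}) \;\leq\; f_I\bigl(\{i_1, \ldots, i_j\}\bigr) - f_I\bigl(\{i_1, \ldots, i_{j-1}\}\bigr).
\]
Summing over $j = 1, \ldots, k$ makes the right-hand side telescope to $f_I(P) - f_I(\emptyset) \leq 1 - f_I(\emptyset)$, which is the required bound on $U$.

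The main obstacle was to notice that one must \emph{not} try to bound $D$ directly (the individual decreases could be arbitrarily large in principle), and instead reduce to bounding $U$ via the trivial telescoping identity, then invoke submodularity to compare each positive increment against an increment made from a subset. The final claim of the lemma, for arbitrary chains $\emptyset = T_0 \subseteq T_1 \subseteq \cdots \subseteq T_r$, follows by listing the elements of $T_r$ in an ordering consistent with the chain, applying the first bound to the resulting one-element-at-a-time sequence of length $|T_r| \leq n$, and collapsing consecutive increments within each block $T_i \setminus T_{i-1}$ using the triangle inequality $|f_I(T_i) - f_I(T_{i-1})| \leq \sum_{j} |f_I(T_i^{(j)}) - f_I(T_i^{(j-1)})|$, where $T_i^{(j)}$ interpolates between $T_{i-1}$ and $T_i$ one element at a time.
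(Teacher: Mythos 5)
Your proof is correct and takes essentially the same approach as the paper's: both bound the sum $U$ of positive increments by applying submodularity to compare each positive increment $f_I(S_{i_j}) - f_I(S_{i_j-1})$ with the increment relative to the smaller subset $\{i_1,\dots,i_{j-1}\}$, and then use the telescoping identity to bound the negative increments. The only cosmetic difference is that you package the last step as $U+D = 2U - (f_I(S_n) - f_I(\emptyset))$, whereas the paper derives $\sum_t |f_I(S_{j_t})-f_I(S_{j_t-1})| \leq 1$ for the negative steps and adds the two bounds directly.
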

\begin{proof}
Let $S_i=\{1, \dots, i\}$. Suppose $T=\{i_t : t = 1, \dots, k\}$, for some $k\in[n]$, is the set of indices for which $f_I(S_{i_t}) - f_I(S_{i_t-1}) \geq 0$. Then, by submodularity, for $t= 1, \dots , k$ we have that
\[ f_I(S_{i_t}) - f_I(S_{i_t-1}) \leq f_I(T\cap S_{i_t})-f_I(T\cap S_{i_t-1}) = f_I(i_1 ,\dots , i_t) - f_I(i_1 ,\dots , i_{t-1}). \] 
Summing over the range of $t\in[k]$, it follows that
\begin{align}
	&\sum_{t=1}^k f_I(S_{i_t}) - f_I(S_{i_t - 1})  \leq f_I(i_1, \dots i_k) - f_I(\emptyset) \leq 1 - f_I(\emptyset)\nonumber \\
	\Rightarrow &\sum_{t=1}^k |f_I(S_{i_t}) - f_I(S_{i_t - 1}) | \leq 1 - f_I(\emptyset) \label{eq:nmeq1}
\end{align}
		
Similarly, we let $j_1 ,\dots j_\ell$ be the indices for which $f_I(S_{j_t}) - f_I(S_{j_t-1}) < 0$. Then
\begin{align}
	&\sum_{t=1}^k f_I(S_{i_t}) - f_I(S_{i_t - 1}) + \sum_{t=1}^\ell f_I(S_{j_t}) - f_I(S_{j_t - 1}) = \sum_{i=1}^{n} f_I(S_{i}) - f_I(S_{i-1}) \nonumber \\
	\Rightarrow &\sum_{t=1}^\ell f_I(S_{j_t}) - f_I(S_{j_t - 1}) = f_I([n]) -  f_I(\emptyset) -\sum_{t=1}^k f_I(S_{i_t}) - f_I(S_{i_t - 1}) \nonumber \\
	\Rightarrow &\sum_{t=1}^\ell f_I(S_{j_t}) - f_I(S_{j_t - 1})\geq -1 \tag{by~\eqref{eq:nmeq1}}\nonumber \\
	\Rightarrow &\sum_{t=1}^\ell | f_I(S_{j_t}) - f_I(S_{j_t - 1}) |\leq 1 \label{eq:nmeq2}
\end{align}
		
Adding inequalities~\eqref{eq:nmeq1} and~\eqref{eq:nmeq2}, we get the result.
\end{proof}
	
\begin{corollary}\label{cor:estSumBound}
Consider any sequence of elements $v^{(t,i)}$ and solutions $y^{(t,i)}$ so that $y^{(t,i)}=y^{(t,i-1)}+\eta(1-y^{(t,i-1)})\mathbf{1}_{v^{(t,i)}}$. Then \[\sum_{t=1}^T \sum_{i=1}^r \left|G_I(y^{(t,i-1)} + \eta(1-y^{(t,i-1)})\mathbf{1}_{v^{(t,i)}}) - G_I(y^{(t,i-1)})\right| \leq 2-f_I(\emptyset).\]
\end{corollary}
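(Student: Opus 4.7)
The plan is to reduce the corollary to a per-sample application of Lemma~\ref{lem:nonMonIneq} via the triangle inequality, exploiting the fact that the sequence $y^{(t,i)}$ is coordinate-wise non-decreasing.

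First, I would fix a sample index $j\in[s]$ and define $R^j(y) = \{u\in\N : r^j_u < y_u\}$, so that $G_I(y) = \tfrac{1}{s}\sum_{j=1}^s f_I(R^j(y))$. The key observation is that the update rule $y^{(t,i)} = y^{(t,i-1)} + \eta(1-y^{(t,i-1)}_{v^{(t,i)}})\mathbf{1}_{v^{(t,i)}}$ increases exactly one coordinate and never decreases any coordinate (since $y^{(t,i-1)}_u \in [0,1]$ implies the coefficient $\eta(1-y^{(t,i-1)}_{v^{(t,i)}})$ is non-negative). Consequently, for each fixed $j$, the sequence $R^j(y^{(1,0)}) \subseteq R^j(y^{(1,1)}) \subseteq \cdots \subseteq R^j(y^{(T,r)})$ is a chain of sets starting from $R^j(y^{(1,0)}) = R^j(\mathbf{1}_\emptyset) = \emptyset$.

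Next, since this chain of sets is non-decreasing and starts from the empty set, I would apply the second statement of Lemma~\ref{lem:nonMonIneq} to $f_I$ along this chain, obtaining
\[\sum_{t=1}^T\sum_{i=1}^r \left|f_I(R^j(y^{(t,i)})) - f_I(R^j(y^{(t,i-1)}))\right| \leq 2 - f_I(\emptyset).\]
Then, by the definition of $G_I$ and the triangle inequality,
\[\left|G_I(y^{(t,i)}) - G_I(y^{(t,i-1)})\right| \leq \frac{1}{s}\sum_{j=1}^s \left|f_I(R^j(y^{(t,i)})) - f_I(R^j(y^{(t,i-1)}))\right|.\]
Summing this bound over all $(t,i)$, swapping the order of summation, and substituting the per-sample bound gives
\[\sum_{t=1}^T\sum_{i=1}^r \left|G_I(y^{(t,i)}) - G_I(y^{(t,i-1)})\right| \leq \frac{1}{s}\sum_{j=1}^s (2-f_I(\emptyset)) = 2 - f_I(\emptyset),\]
which is precisely the claim (noting that $y^{(t,i)} = y^{(t,i-1)} + \eta(1-y^{(t,i-1)}_{v^{(t,i)}})\mathbf{1}_{v^{(t,i)}}$ matches the quantity on the left of the corollary statement).

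There is no serious obstacle: the two ingredients are (a) the monotonicity of the coordinates of $y^{(t,i)}$, which lets us invoke Lemma~\ref{lem:nonMonIneq} sample-by-sample, and (b) the linearity of $G_I$ as an average over samples, which lets absolute values pass inside via the triangle inequality. The only subtle point is verifying that $y^{(1,0)} = \mathbf{1}_\emptyset$ so that the chain starts from the empty set, matching the hypothesis of Lemma~\ref{lem:nonMonIneq}.
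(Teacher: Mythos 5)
Your proof is correct and takes essentially the same approach as the paper: express $G_I$ as an average over the $s$ samples, pull the absolute value inside by the triangle inequality, swap the order of summation, and apply the chain form of Lemma~\ref{lem:nonMonIneq} to the non-decreasing sequence of sets $R^j(y^{(t,i)})$ for each fixed sample $j$. Your explicit check that $y^{(1,0)}=\mathbf{1}_\emptyset$ makes $R^j(y^{(1,0)})=\emptyset$, which the lemma's hypothesis $T_0=\emptyset$ requires, is a reasonable addition that the paper leaves implicit.
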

\begin{proof}
\begin{align*}
	&\sum_{t=1}^T \sum_{i=1}^r \left|G_I(y^{(t,i-1)} + \eta(1-y^{(t,i-1)})\mathbf{1}_{v^{(t,i)}}) - G_I(y^{(t,i-1)})\right| \\
	&= \sum_{t=1}^T \sum_{i=1}^r \left| \sum_{j=1}^s \frac{f_I(\{ u: r_u^j< y^{(t,i)} \}) - f_I(\{ u: r_u^j< y^{(t,i-1)} \})}{s} \right|  \tag{by the definition of $y^{(t,i)}$}\\
	&\leq  \frac{1}{s} \sum_{j=1}^s \sum_{t=1}^T \sum_{i=1}^r  |f_I(\{ u: r_u^j< y^{(t,i)} \}) - f_I(\{ u: r_u^j< y^{(t,i-1)} \})| \tag{by the triangle inequality}
\end{align*}
We observe that the sequence of sets $\{ u: r_u^j< y^{(t,i)} \}$ that occur as the arguments of $f_I$ is a non-decreasing sequence of sets, and hence Lemma~\ref{lem:nonMonIneq} applies to give us
\begin{align*}
	\sum_{t=1}^T \sum_{i=1}^r \left|G_I(y^{(t,i-1)} + \eta(1-y^{(t,i-1)})\mathbf{1}_{u^{(t,i)}}) - G_I(y^{(t,i-1)})\right|
	\leq \frac{1}{s}\sum_{j=1}^s (2 -f_I(\emptyset))
	=2-f_I(\emptyset)
\end{align*}
This concludes the proof of the corollary.
\end{proof}

We are new ready to prove the next theorem which establishes the privacy guarantees of our algorithm.
\begin{theorem}\label{thm:privacy-nonmonotone}
Algorithm~\ref{alg:MeasContGreedy} is $((14+4\log \frac{1}{\delta})\eps_0, \delta)$-differentially private.
\end{theorem}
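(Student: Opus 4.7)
The plan is to adapt the privacy argument of Theorem~\ref{thm:privacy-monotone}, where the central new difficulty is that $\tw_I^{(t,i)}(u)$ can now be negative. Let $A \sim B$ be neighboring agent sets and consider the ratio $\Pr[M(A) = U]/\Pr[M(B) = U]$ over the sequence $U = (u^{(1,1)}, \ldots, u^{(T,r)})$ of elements picked by the algorithm; by post-processing through \textsc{Swap-Rounding}, controlling this ratio suffices. Decomposing via $A \cap B$, it is enough to handle a neighbor pair of the form $A = B \cup \{I\}$ (the reverse direction being symmetric), and the chain rule factors the ratio into the same two pieces as in the monotone proof: the ``numerator'' factor $\prod_{t,i} \exp(\tfrac{\eps_0}{2}(\tw_A^{(t,i)} - \tw_B^{(t,i)})(u^{(t,i)}))$ and the ``partition-function'' factor $\prod_{t,i} \sum_u \exp(\tfrac{\eps_0}{2}\tw_B^{(t,i)}(u))/\sum_u \exp(\tfrac{\eps_0}{2}\tw_A^{(t,i)}(u))$. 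The crucial new feature is that neither factor can be dismissed as $\leq 1$, so both must be bounded nontrivially.

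For the numerator factor, I would use that the identity $\tw_I^{(t,i)}(u^{(t,i)}) = G_I(y^{(t,i)}) - G_I(y^{(t,i-1)})$ continues to hold under the modified update rule of line~\ref{step:update-nonmonotone}, so the sum telescopes to $G_I(y^{(T,r)}) - G_I(y^{(1,0)}) \in [-1,1]$, contributing at most $\eps_0/2$ in log. For the partition-function factor, I would rewrite it as $\prod_{t,i} \ex{u \sim P^{(t,i)}}{\exp(-\tfrac{\eps_0}{2}\tw_I^{(t,i)}(u))}$ with $P^{(t,i)}$ the Exponential Mechanism distribution induced by $A$, and apply the inequality $e^y \leq 1 + y + y^2$ valid for $|y| \leq 1$ (which holds since $\eps_0 \leq 2$ and $|\tw_I(u)| \leq 1$). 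Combined with $\tw_I^2 \leq |\tw_I|$ and $-\ex{}{\tw_I} \leq \ex{}{|\tw_I|}$ to absorb the possibly-negative sign, this yields $\ex{}{\exp(-\tfrac{\eps_0}{2}\tw_I)} \leq 1 + (\tfrac{\eps_0}{2} + \tfrac{\eps_0^2}{4}) \ex{}{|\tw_I|}$; invoking $\log(1+x) \leq x$ then upper bounds the log-contribution of the partition-function factor by $(\tfrac{\eps_0}{2} + \tfrac{\eps_0^2}{4}) \sum_{t,i} \ex{u \sim P^{(t,i)}}{|\tw_I^{(t,i)}(u)|}$.

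The last and most substantive step is to bound $\sum_{t,i} \ex{u \sim P^{(t,i)}}{|\tw_I^{(t,i)}(u)|}$ via Claim~\ref{claim:privacyconc}. I would instantiate its probabilistic process with $Z_{(1,1)} = 2$, $R_{(t,i)}(u) = |\tw_I^{(t,i)}(u)|/Z_{(t,i)}$, and the recursion $Z_{(t,i+1)} = Z_{(t,i)} - R_{(t,i)}(u^{(t,i)}) Z_{(t,i)} = Z_{(t,i)} - |\tw_I^{(t,i)}(u^{(t,i)})|$, so the target sum equals the quantity $Y_{(1,1)}$ of the claim. The key feasibility check is $R_{(t,i)}(u) \in [0,1]$, that is, $|\tw_I^{(t,i)}(u)| \leq Z_{(t,i)}$ uniformly in $u$; this follows from Corollary~\ref{cor:estSumBound} applied to any hypothetical continuation of the current trajectory that picks $u$ at round $(t,i)$, since Lemma~\ref{lem:nonMonIneq} then caps the total absolute variation (including $|\tw_I^{(t,i)}(u)|$ and all past realized absolute gains) by $2 - f_I(\emptyset) \leq 2$. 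Applying Claim~\ref{claim:privacyconc} with $q = 3 + \log(1/\delta)$ gives, with probability at least $1 - \delta$, $\sum_{t,i} \ex{u \sim P^{(t,i)}}{|\tw_I^{(t,i)}(u)|} \leq 2(3 + \log(1/\delta))$. Combining with the $\eps_0/2$ bound on the numerator factor, absorbing $\eps_0^2/4$ into $\eps_0$ using $\eps_0 \leq 1$, and repeating the argument for the $(A\cap B)/B$ half of the decomposition with a union bound over two applications of Claim~\ref{claim:privacyconc}, yields the stated $(14 + 4\log(1/\delta))\eps_0$ after tracking constants.

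The main obstacle will be establishing $|\tw_I^{(t,i)}(u)| \leq Z_{(t,i)}$ uniformly in $u$, the non-monotone analogue of the trivial ``$G_I \in [0,1]$'' used in the monotone proof; this is precisely why Lemma~\ref{lem:nonMonIneq} and Corollary~\ref{cor:estSumBound} were developed in the preceding subsection, and the reduction from $u$ arbitrary to a specific chain requires the insight that the telescoping argument can be applied to any extension of the realized sequence. A secondary care point is that, unlike the monotone case, neither chain-rule factor is free, so Claim~\ref{claim:privacyconc} must be applied on both sides of the $A\cap B$ split, which roughly doubles constants relative to Theorem~\ref{thm:privacy-monotone} and accounts for the larger leading factor in front of $\eps_0$.
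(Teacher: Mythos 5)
Your proposal is correct and follows the paper's own proof essentially step for step: the chain-rule factorization into a ``realized'' factor and a ``partition-function'' factor, the telescoping of the realized factor to $G_I(y^{(T,r)})-G_I(y^{(1,0)})\in[-1,1]$, the reduction of the partition-function factor to $\sum_{t,i}\ex{u\sim P^{(t,i)}}{|\tw_I^{(t,i)}(u)|}$, the feasibility check $R_{(t,i)}\in[0,1]$ via Lemma~\ref{lem:nonMonIneq} and Corollary~\ref{cor:estSumBound}, and the invocation of Claim~\ref{claim:privacyconc} to cap that sum with probability $1-\delta$. The only deviations are cosmetic --- you use $e^y\leq 1+y+y^2$ directly and normalize the exponential-mechanism distribution by $A$, whereas the paper first passes to $|\tw_I|$, uses $e^x\leq 1+\frac{e^{\eps_0/2}-1}{\eps_0/2}x$, and works with the $B$-normalized distribution, then lower-bounds the reciprocal separately; both routes assemble to the stated $(14+4\log\frac{1}{\delta})\eps_0$.
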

\begin{proof}
Let $A$ and $B$ be two sets of agents such that $A \triangle B = \{I\}$. Suppose that instead of the output set, we reveal the sequence in which we pick the elements of our algorithm and let this sequence be denoted as $U = (u^{(1,1)}, u^{(1,2)}, \dots, u^{(T,r)})$. Note that the sequence $U$ might include some dummy elements. 
We are then interested in bounding the ratio of the probabilities that the output sequence be $U$ under input $A$ and $B$. 
By the post-processing property (Lemma~\ref{lem:post-processing}), this suffices to achieve the same privacy parameters over the output of the algorithm, which is the set \textsc{Swap-Rounding}$(y^{(T,r)}, \I)$ with any dummy elements removed.
	
We recall that the scores of the Exponential Mechanism, as defined in line~\ref{step:weights-nonmonotone} of Algorithm~\ref{alg:MeasContGreedy}, are $\tw_D^{(t,i)}(u) = G(y^{(t,i-1)} + \eta(1-y^{(t,i-1)})\mathbf{1}_{u})-G(y^{(t,i-1)})$. For ease of notation, we drop the irrelevant parameters of our algorithm and denote it by $M$. By the chain rule of probability,
\begin{align}\label{eq:privacyratio-nm}
	\frac{\pr{}{\textsc{M}(A) = U}}{\pr{}{\textsc{M}(B) = U}}
	&= \prod_{t=1}^T \prod_{i=1}^r \frac{\exp(\frac{\eps_0}{2} \tw_A^{(t,i)}(u^{(t,i)}))/\sum_{u\in\N^{(t,i)}} \exp(\frac{\eps_0}{2} \tw_A^{(t,i)}(u))}{\exp(\frac{\eps_0}{2} \tw_B^{(t,i)}(u^{(t,i)}))/\sum_{u\in\N^{(t,i)}} \exp(\frac{\eps_0}{2} \tw_B^{(t,i)}(u))} \nonumber\\
	&=\left( \prod_{t=1}^T\prod_{i=1}^r \frac{\exp(\frac{\eps_0}{2} \tw_A^{(t,i)}(u^{(t,i)}))}{\exp(\frac{\eps_0}{2} \tw_B^{(t,i)}(u^{(t,i)}))}\right) \left( \prod_{t=1}^T\prod_{i=1}^r \frac{\sum_{u\in\N^{(t,i)}} \exp(\frac{\eps_0}{2} \tw_B^{(t,i)}(u))}{\sum_{u\in\N^{(t,i)}} \exp(\frac{\eps_0}{2} \tw_A^{(t,i)}(u))}\right)
\end{align}
 
We bound these two factors separately. We assume without loss of generality that $A \setminus B = \{I\}$. We upper and lower bound the first factor as follows.
\begin{align*}
	\prod_{t=1}^T\prod_{i=1}^r \frac{\exp(\frac{\eps_0}{2} \tw_A^{(t,i)}(u^{(t,i)}))}{\exp(\frac{\eps_0}{2} \tw_B^{(t,i)}(u^{(t,i)}))}
	&=\prod_{t=1}^T\prod_{i=1}^r \exp(\frac{\eps_0}{2} \tw_{I}^{(t,i)}(u^{(t,i)}) )\\
	&=\exp(\sum_{t=1}^T\sum_{i=1}^r \frac{\eps_0}{2} \tw_{I}^{(t,i)}(u^{(t,i)}) ) \\
	&= \exp( \frac{\eps_0}{2} \sum_{t=1}^T\sum_{i=1}^r  G_I(y^{(t,i-1)} + \eta (1-y^{(t,i-1)})\mathbf{1}_{u^{(t,i)}})-G_I(y^{(t,i-1)}) ) \\
	&= \exp(\frac{\eps_0}{2} \left(G_I (y^{(T,r)}) - G_I(y^{(1,0)})\right) ).
\end{align*}
Since $f_I$ has range in $[0,1]$, it follows that 
\begin{equation}\label{eq:privfactor1}
\exp(-\eps_0/2) \leq \prod_{t=1}^T\prod_{i=1}^r \frac{\exp(\frac{\eps_0}{2} \tw_A^{(t,i)}(u^{(t,i)}))}{\exp(\frac{\eps_0}{2} \tw_B^{(t,i)}(u^{(t,i)}))} \leq \exp(\eps_0/2).
\end{equation}

Next, we upper and lower bound the reciprocal of the second factor as follows.
\begin{align}
	&\prod_{t=1}^T \prod_{i=1}^r \frac{\sum_{u \in \N^{(t,i)}} \exp(\frac{\eps_0}{2} \tw_A^{(t,i)}(u))}{\sum_{u \in \N^{(t,i)}} \exp(\frac{\eps_0}{2} \tw_B^{(t,i)} (u))} \nonumber \\
	&= \prod_{t=1}^T \prod_{i=1}^r \frac{\sum_{u \in \N^{(t,i)}}  \exp(\frac{\eps_0}{2} \tw_I^{(t,i)}(u))\exp(\frac{\eps_0}{2} \tw_B^{(t,i)}(u))}{\sum_{u \in \N^{(t,i)}} \exp(\frac{\eps_0}{2} \tw_B^{(t,i)}(u))} \nonumber \\
	&= \prod_{t=1}^T \prod_{i=1}^r  \ex{u\gets P^{(t,i)}}{\exp(\frac{\eps_0}{2} \tw_I^{(t,i)}(u) )} \nonumber \\
	&= \prod_{t=1}^T \prod_{i=1}^r  \ex{u\gets P^{(t,i)}}{\exp(\frac{\eps_0}{2} (G_I(y^{(t,i-1)} + \eta (1-y^{(t,i-1)})\mathbf{1}_{u})-G_I(y^{(t,i-1)})) )} \label{eq:simplecalculation}\\
	&\leq \prod_{t=1}^T \prod_{i=1}^r  \ex{u\gets P^{(t,i)}}{\exp(\frac{\eps_0}{2} \left|G_I(y^{(t,i-1)} + \eta (1-y^{(t,i-1)})\mathbf{1}_{u})-G_I(y^{(t,i-1)})\right| ) } \nonumber \\
	&\leq \exp((e^{\eps_0/2}-1)\sum_{t=1}^T \sum_{i=1}^r \ex{u\gets P^{(t,i)}}{\left|G_I(y^{(t,i-1)} + \eta(1-y^{(t,i-1)})\mathbf{1}_{u}) - G_I(y^{(t,i-1)})\right|} ),\label{eq:expofsum}
	\end{align}
where the last inequality follows by subsequently applying $e^x \leq 1 + \frac{e^{\eps_0/2}-1}{\eps_0/2}\cdot x ~ \forall x\in[0,\frac{\eps_0}{2}]$, and $1 + t \leq e^t ~ \forall t$.

By using Claim~\ref{claim:privacyconc}, we will upper bound the sum of expected absolute marginal gains \[\sum_{(t,i)=(1,1)}^{(T,r)}\ex{u\gets P^{(t,i)}}{\left|G_I(y^{(t,i-1)} + \eta(1-y^{(t,i-1)})\mathbf{1}_{u}) - G_I(y^{(t,i-1)})\right|}.\]

Again, consider a $Tr$-round probabilistic process. In each round $(t,i)$, an adversary chooses a distribution $P^{(t,i)}$ and then a sample $u^{(t,i)}$ is drawn from that distribution. Let \[Z_{(t,i)}=2-f_I(\emptyset)-\sum_{(\tau,j)=(1,1)}^{(t,i-1)} \left|G_I(y^{(\tau,j)}) - G_I(y^{(\tau,j-1)})\right|.\] 

By Corollary~\ref{cor:estSumBound}, it holds that $Z_{(t,i)}\geq 0$. One can think of the quantity $Z_{(t,i)}$ as tracking the remaining progress towards the total sum of the absolute realized values of the marginal gains, namely, \[\sum_{(t,i)=(1,1)}^{(T,r)} \left|G_I(y^{(t,i-1)} + \eta(1-y^{(t,i-1)})\mathbf{1}_{u^{(t,i)}}) - G_I(y^{(t,i-1)})\right|.\]
We also define the random variable \[R_{(t,i)}(u)=\frac{\left|G_I(y^{(t,i-1)} +\eta(1-y^{(t,i-1)})\mathbf{1}_{u})-G_I(y^{(t,i-1)})\right|}{Z_{(t,i)}}.\] The distribution of $R_{(t,i)}(u)$, denoted by $D^{(t,i)}$, is directly determined by $P^{(t,i)}$. Also note that, since $Z_{(t,i)}\geq0$, $R_{(t,i)}(u)\geq0$. By Corollary~\ref{cor:estSumBound},
\[\left|G_I(y^{(t,i-1)} +\eta(1-y^{(t,i-1)})\mathbf{1}_{u})-G_I(y^{(t,i-1)})\right| + \sum_{(\tau,j)=(1,1)}^{(t,i-1)} \left|G_I(y^{(\tau,j)}) - G_I(y^{(\tau,j-1)})\right| \leq 2-f_I(\emptyset).\]
Therefore, $R_{(t,i)}(u)\in[0,1]$.
After the element of the round $u^{(t,i)}$ has been chosen, we update $Z_{(t,i)}=Z_{(t,i-1)}-R_{(t,i-1)}(u^{(t,i)})Z_{(t,i-1)}$. Finally, let $Y_{(\tau,j)}=\sum_{(t,i)\geq(\tau,j)}\ex{u\gets P^{(t,i)}}{R_{(t,i)}(u)}Z_{(t,i)}$. By these definitions, the sum we want to bound is exactly $Y_{(1,1)}$.

By Claim~\ref{claim:privacyconc}, the sum $Y_{(1,1)}$ as defined through this random process satisfies
\begin{align}
	&\pr{}{Y_{(1,1)}\geq qZ_{(1,1)}}\leq \exp(3-q) \nonumber \\
	\Leftrightarrow &\pr{}{Y_{(1,1)}\geq q(2-f_I(\emptyset))}\leq \exp(3-q) \nonumber \\
	\Rightarrow &\pr{}{Y_{(1,1)}\geq 2q}\leq \exp(3-q) \nonumber \\
	\Rightarrow &\pr{}{\sum_{t=1}^T \sum_{i=1}^r \ex{u}{\left|G_I (y^{(t,i-1)} + \eta (1-y^{(t,i-1)})\mathbf{1}_{u}) - G_I(y^{(t,i-1)})\right|} \geq 6+2\log \frac{1}{\delta}} \leq \delta \label{eq:boundsumabs}
\end{align}

Therefore, with probability $1-\delta$, by inequalities~\eqref{eq:expofsum} and~\eqref{eq:boundsumabs},
\begin{equation}\label{eq:privfactor2i}
\prod_{t=1}^T \prod_{i=1}^r \frac{\sum_{u \in \N^{(t,i)}} \exp(\frac{\eps_0}{2} \tw_A^{(t,i)}(u))}{\sum_{u \in \N^{(t,i)}} \exp(\frac{\eps_0}{2} \tw_B^{(t,i)} (u))} \leq \exp\left((e^{\eps_0/2}-1) (6+2\log \frac{1}{\delta}) \right).
\end{equation}

To complete the proof, it remains to lower bound the reciprocal of the second factor of equation~\eqref{eq:privacyratio-nm}. By the same calculations that led to equation~\eqref{eq:simplecalculation} again we have
\begin{align*}
&\prod_{t=1}^T \prod_{i=1}^r \frac{\sum_{u \in \N^{(t,i)}} \exp(\eps_0 \tw_A^{(t,i)}(u))}{\sum_{u \in \N^{(t,i)}} \exp(\eps_0 \tw_B^{(t,i)} (u))} \\
&\geq \prod_{t=1}^T \prod_{i=1}^r  \ex{u\gets P^{(t,i)}}{\exp(- \frac{\eps_0}{2} \left|G_I(y^{(t,i-1)} + \eta (1-y^{(t,i-1)})\mathbf{1}_{u})-G_I(y^{(t,i-1)})\right|)}\\
&\geq \prod_{t=1}^T \prod_{i=1}^r  \ex{u\gets P^{(t,i)}}{1- \frac{\eps_0}{2} \left|G_I(y^{(t,i-1)} + \eta (1-y^{(t,i-1)})\mathbf{1}_{u})-G_I(y^{(t,i-1)})\right|} \tag{$e^x \geq 1+x ~ \forall x$}\\
&\geq \exp(-\eps_0 \sum_{t=1}^T \sum_{i=1}^r \ex{u\gets P^{(t,i)}}{\left|G_I(y^{(t,i-1)} + \eta (1-y^{(t,i-1)})\mathbf{1}_{u})-G_I(y^{(t,i-1)})\right|}). \tag{$1 - t \geq e^{-2t} ~ \forall t\in[0,1/2]$}
\end{align*}
By inequality~\eqref{eq:boundsumabs}, we have that with probability $1-\delta$, 
\begin{equation}\label{eq:privfactor2ii}
\prod_{t=1}^T \prod_{i=1}^r \frac{\sum_{u \in \N^{(t,i)}} \exp(\eps_0 \tw_A^{(t,i)}(u))}{\sum_{u \in \N^{(t,i)}} \exp(\eps_0 \tw_B^{(t,i)} (u))} \geq \exp(-\eps_0(6+2\log \frac{1}{\delta})).
\end{equation}

Combining inequalities~\eqref{eq:privfactor1},~\eqref{eq:privfactor2i}, and~\eqref{eq:privfactor2ii}, we conclude that for any sets such that $A\setminus B = \{I\}$, with probability $1-\delta$, the probability ratio is bounded in the range:
\begin{align*}
   \exp\left(-(e^{\eps_0/2}-1)(7+\log\frac{1}{\delta})\right) \leq \frac{\pr{}{\textsc{M}(A) = U}}{\pr{}{\textsc{M}(B) = U}} \leq \exp\left(\eps_0 \left(7+2\log\frac{1}{\delta}\right)\right).
\end{align*}
Thus, for any two neighboring sets such that $A\sim B$, Algorithm~\ref{alg:MeasContGreedy} is $((14+4\log(1/\delta))\eps_0, \delta)$-differentially private.
\end{proof}

\section{Experiments}\label{sec:experiments}
In this section we describe two experiments evaluating the performance of the Private Continuous Greedy (PCG) algorithm of Section~\ref{sec:monotone}. We replicate the Uber location selection experiment in~\citep{MitrovicBKK17}, comparing the PCG algorithm and its rank invariant noise addition with the composition-law based differentially private greedy (DPG) algorithm, introduced in that paper. We also study a hard instance of a partition matroid constraint where PCG significantly outperforms the discrete DPG of~\citep{MitrovicBKK17} in utility, even when the latter uses the rank invariant privacy parameter.

Following ~\cite{MitrovicBKK17}, we consider the problem of picking $r$ public waiting spots for Uber cabs that are close to potential pick-up requests in Manhattan. This is done by using a dataset of Uber pick-ups in April 2014 \citep{Uber}, in which each record contains the latitude and longitude of a pick-up. The goal is to choose a good set of waiting locations which satisfy the given cardinality or matroid constraints, while satisfying differential privacy with respect to the pick-ups, each of which is assumed to represent only one individual.

We first describe the metric used to define the utility of a set of locations. We recall that the normalised $\ell_1$ (or Manhattan) distance between a waiting location $l=(l_x,l_y)$ and a pick-up $p=(p_x,p_y)$ is given by the expression
\begin{equation*}\label{eq:manhattan}
	M(l,p) = \frac{|l_x - p_x| - |l_y - p_y|}{C},
\end{equation*}
where $C$ is a minimal normalisation factor such that $M(l,p) \in [0,1]$ for our choice of locations and pick-ups. We proceed to define the utility of a non-empty set of locations $S$ evaluated on a dataset of pick-ups $D$ as
\begin{equation}\label{eq:defobjective}
	f_D (S) = \sum_{p\in D} \left(1 - \min_{l \in S} M(l,p)\right) = |D| - \sum_{p\in D} \min_{l \in S} M(l,p).
\end{equation}
Setting $f_D(\emptyset) = 0$, it can be checked that the function $f_D$ for any non-empty dataset $D$ is a positive monotone decomposable submodular function.

\subsection{Cardinality constraint}
Our first experiment is for the case of the decomposable monotone submodular function defined in~\eqref{eq:defobjective} and an $r$-cardinality constraint. We compare our PCG algorithm (Algorithm~\ref{alg:ContGreedy}) with the general monotone submodular maximization algorithm DPG of~\cite{MitrovicBKK17}. Their theoretical utility guarantees are $(1-1/e-\eta)f(\OPT)-\frac{r\log n}{\eta\eps}$ and $(1-1/e)f(\OPT)-\frac{r^{3/2}\log n}{\eps}$, respectively.\footnote{As~\citet{MitrovicBKK17} state, their algorithm with the privacy parameter calculated using basic DP composition often performs better than the one that uses advanced. We always check which of the two initializations of DPG performs better and use this for our comparison.} These bounds imply that settings with low rank are more favourable for DPG (for which $\eps_0$ is set according to the basic DP composition as $\approx\eps/r$) and settings with high rank are more favourable for PCG (where $\eps_0\approx\eps$ as in line~\ref{step:eps0} of Algorithm~\ref{alg:ContGreedy}). We want to investigate the performance of PCG in the high rank regime.

We follow~\citet{MitrovicBKK17} and use a grid over downtown Manhattan as potential waiting locations. We note that this cardinality problem is easier than general cardinality constrained submodular maximization. Because of its structure and the density of the points, the problem reduces to an easy instance of a Geometric Maximum Coverage problem, which admits a PTAS~\citep{LiWZZ15}. As is, a large randomly selected set performs well and gets close to the maximum utility. Indeed, in practice we found that a randomly chosen set performs about as well as the other algorithms beyond rank $r=10$ in a direct re-implementation of the experiment. 
To make the instance harder for random selection while virtually the same in difficulty for most algorithms, we add a large number of copies of the northern corner of the grid to the set of choices.

Concretely, to conduct this harder variant of the location selection experiment we choose the set of possible locations as a $5\times 4$ grid of locations in the fixed box, and 80 copies of its northernmost vertex. For each execution, we choose $m=100$ pickups uniformly at random from our dataset and evaluate the empirical utilities of DPG and our PCG. In PCG, we set $\eta = 0.2$ and use $s=1000$ samples to calculate the marginal gains with respect to the function $G$ (i.e., the proxy of $F$). We also measure the performance of the non-private greedy which has optimal utility as a yardstick, and that of a randomly chosen basis set that serves as a trivial private baseline. 
We set $\eps=0.1$ and $\delta = 1/m^{1.5}$ where $m = |D| = 100$. 
For these choices of privacy parameters, the privacy parameter used in the differentially private choices of increment is about $\eps_0=0.01006$. 

In Figure~\ref{fig:cardinality}, we see that for this experiment the PCG algorithm starts to outperform the DPG algorithm around rank $r=10$, but that again both private algorithms become equivalent to picking a uniformly random set around rank $r=20$. It is around $r=10$ that our setting for $\eps_0$ starts to be larger than the rank-sensitive privacy parameter $\eps/r$ used in each round of the DPG algorithm, which justifies this trend. 
For the intermediate rank domain $r \in [10,20]$, both algorithms outperform the utility offered by a random basis set, but fall short of the non-private greedy which has optimal utility, as expected. In this regime, PCG confers an advantage to DPG.

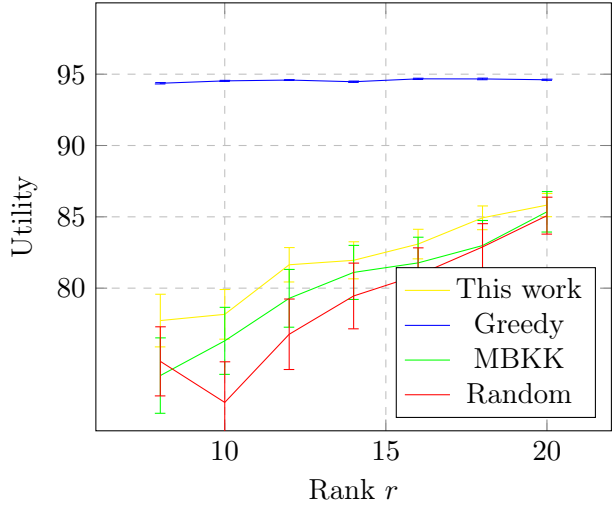
\begin{figure}
\centering
	\begin{tikzpicture}
	\begin{axis}[
	title={Utility versus rank, $\eps = 0.1$},
	xlabel={Rank $r$},
	ylabel={Utility},
	xmin=6, xmax=22,
	ymin=70, ymax=100,
	xtick={10,15,20,25,30,35},
	ytick={80,85,90,95},
	legend pos=south east,
	ymajorgrids=true,
	xmajorgrids=true,
	grid style=dashed,
	]
	
	\addplot[yellow] plot[ 
	error bars/.cd,
	y dir = both, y explicit
	]
	coordinates {
		(8,77.7277) +- (1.0092599999999998,1.8450822669104203)
		(10,78.1646) +- (0.952436,1.741199268738574)
		(12,81.6389) +- (0.6593340000000001,1.205363802559415)
		(14,81.9523) +- (0.710465,1.298839122486289)
		(16,83.0898) +- (0.566725,1.0360603290676418)
		(18,84.9309) +- (0.455372,0.8324899451553931)
		(20,85.8223) +- (0.444712,0.8130018281535649)
	};
	\addplot[blue] plot[ error bars/.cd,
	y dir = both, y explicit
	]
	coordinates {
		(8,94.363) +- (0.026331700000000003,0.048138391224862895)
		(10,94.5305) +- (0.019764200000000003,0.03613199268738575)
		(12,94.5883) +- (0.0146575,0.026796160877513714)
		(14,94.4704) +- (0.0253876,0.046412431444241314)
		(16,94.6733) +- (0.0233854,0.04275210237659964)
		(18,94.6669) +- (0.029973899999999998,0.05479689213893967)
		(20,94.6087) +- (0.022534699999999998,0.04119689213893967)
	};
	\addplot[green] plot[ 
	error bars/.cd,
	y dir = both, y explicit
	]
	coordinates {
		(8,73.871) +- (1.44617,2.64382084095064)
		(10,76.3059) +- (1.28505,2.3492687385740405)
		(12,79.2873) +- (1.10785,2.0253199268738573)
		(14,81.1035) +- (1.03626,1.8944424131627058)
		(16,81.7698) +- (0.9849600000000001,1.800658135283364)
		(18,82.9854) +- (0.962036,1.758749542961609)
		(20,85.3519) +- (0.775712,1.4181206581352834)
	};
	\addplot[red] plot[ 
	error bars/.cd,
	y dir = both, y explicit
	]
	coordinates {
		(8,74.8688) +- (1.32487,2.4220658135283366)
		(10,71.9892) +- (1.55801,2.8482815356489946)
		(12,76.7668) +- (1.35131,2.470402193784278)
		(14,79.4524) +- (1.25996,2.303400365630713)
		(16,80.9212) +- (1.04142,1.9038756855575867)
		(18,82.8864) +- (0.8938729999999999,1.6341371115173675)
		(20,85.0823) +- (0.708121,1.2945539305301645)
	};
	\legend{This work, Greedy, MBKK, Random}
	\end{axis}
	\end{tikzpicture}
\caption{Empirical performance of non-private greedy, PCG, DPG, and random selection under the cardinality constraint.}\label{fig:cardinality}
\end{figure}

\subsection{Partition matroid constraint}
As noted in~\citep{MitrovicBKK17}, in the decomposable case with matroid constraints, DPG combined with the privacy analysis of~\citep{GuptaLMRT10} gives the optimal additive error (see Table~\ref{table}).
In this section, we demonstrate that, even in this case, the $\frac{1}{2}$ multiplicative approximation factor in the DPG guarantee is not a pessimistic upper bound but in fact a tight one. To see why this is, we look at the trace of the non-private discrete greedy in a simple $3$-element partition matroid.

Let $S=\{A,B,C\}$ be the ground set, and $\{\{A\},\{B,C\}\}$ be the partition. We let our matroid structure be a simple rank-$1$ constraint on each partition, i.e. sets in the matroid can have at most one element from each partition. For monotone increasing submodular functions it follows that the choice is essentially between $\{A,B\}$ and $\{A,C\}$. If the submodular function is such that $f(\{B\}) > f(\{A\}), f(\{C\})$ but $f(\{A,B\}) < f(\{A,C\})$ then the greedy algorithm will consistently choose the sub-optimal choice, $B$, and then be forced to pick $A$. In particular, if $f(\{B\}) = 1$ and $f(\{A\}) = f(\{C\}) = 1 -\epsilon$, but $f(\{A,B\}) = 1$ and $f(\{A,C\}) = 2 - 2\epsilon$ (which is readily extended to a submodular function), then the utility gained by the greedy is $\frac{1}{2-2\epsilon}$ times the optimal utility.

We test the practical performance of DPG and PCG in this type of worst-case instance. Although the noise induced by privacy would help DPG overcome this pitfall with some probability, our experiments show that the bound presented above is realized in the experiment. To replicate this instance, we pick three points in Manhattan which mimic this partition structure (with $B$ closest to downtown, and $A$ and $C$ slightly further away) and compare the DPG and the PCG algorithms on a range of dataset sizes. The utility obtained is divided by the number of points in the dataset which is an upper bound for the optimal utility. 

In Figure~\ref{fig:matroid}, we compare the average utilities obtained by PCG (with $\eta=1/7$ and $\delta=1/m^{1.5}$) and DPG with the improved privacy analysis. The error bars at each point mark 1-standard deviation confidence intervals. Although their performances are comparable for small datasets, the improvement of PCG increases as the dataset grows in size. There is high variance due to the random choice of the dataset for each set size, but the separation between the empirical confidence intervals still widens with larger datasets. We find that for these types of worst-case instances, compared to DPG (even with the improved privacy analysis) a significant performance enhancement can be obtained by switching to PCG for the decomposable setting.
\begin{figure}
\centering
	\begin{tikzpicture}
	\begin{axis}[
	title={Utility versus number of data, $\eps = 0.1$},
	xlabel={Number of data $m$},
	ylabel={Utility},
	legend pos=north west,
	ymajorgrids=true,
	xmajorgrids=true,
	grid style=dashed,
	]
	
	\addplot[yellow] plot[ 
	error bars/.cd,
	y dir = both, y explicit
	]
	coordinates {
		(1000,0.7902870000000001)+-(0.00364512,0.00364512)
		(2000,0.79472)+-(0.003403305,0.003403305)
		(3000,0.8084033333333334)+-(0.0026899000000000003,0.0026899000000000003)
		(4000,0.8110925)+-(0.0031621749999999997,0.0031621749999999997)
		(5000,0.8035180000000001)+-(0.0030962,0.0030962)
		(6000,0.8004883333333334)+-(0.0031712166666666665,0.0031712166666666665)
		(7000,0.8104942857142857)+-(0.002902142857142857,0.002902142857142857)
		(8000,0.81078625)+-(0.0029497875,0.0029497875)
		(9000,0.8219733333333333)+-(0.0020228333333333335,0.0020228333333333335)
		(10000,0.810046)+-(0.00291266,0.00291266)
	};
	\addplot[blue] plot[ error bars/.cd,
	y dir = both, y explicit
	]
	coordinates {
		(1000,0.7858419999999999)+-(0.0036990699999999996,0.0036990699999999996)
		(2000,0.791995)+-(0.0034040399999999997,0.0034040399999999997)
		(3000,0.78637)+-(0.0030962333333333335,0.0030962333333333335)
		(4000,0.793175)+-(0.003532,0.003532)
		(5000,0.783772)+-(0.003179,0.003179)
		(6000,0.7886716666666667)+-(0.0032672666666666668,0.0032672666666666668)
		(7000,0.7880228571428571)+-(0.0032394285714285715,0.0032394285714285715)
		(8000,0.78253375)+-(0.003137075,0.003137075)
		(9000,0.7850511111111111)+-(0.0030192,0.0030192)
		(10000,0.781828)+-(0.0030065300000000003,0.0030065300000000003)
	};
	\legend{This work,MBKK}
	
	\end{axis}
	\end{tikzpicture}
	\caption{Empirical performance of PCG and DPG under the partition matroid constraint.}\label{fig:matroid}
\end{figure}
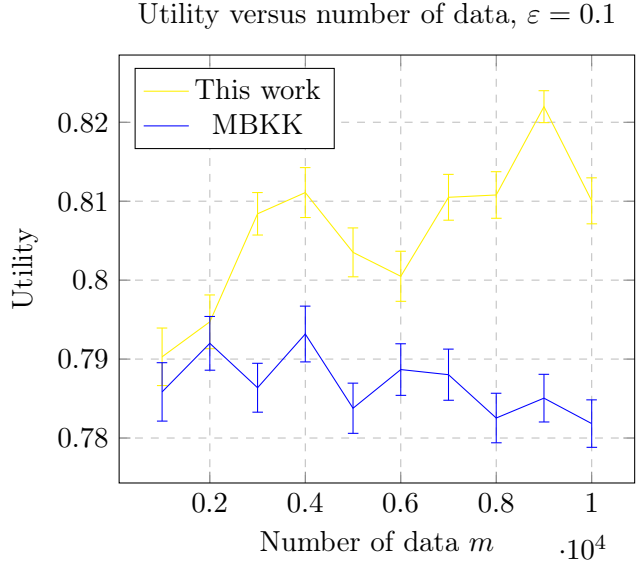

\section*{Acknowledgements}
The authors were supported by NSF grants CCF 1909314 and CCF 1750716. LZ was also supported by NSF grants CCF-1718088, CCF-1750640.

\bibliographystyle{plainnat}
\bibliography{biblio}

\begin{thebibliography}{42}
\providecommand{\natexlab}[1]{#1}
\providecommand{\url}[1]{\texttt{#1}}
\expandafter\ifx\csname urlstyle\endcsname\relax
  \providecommand{\doi}[1]{doi: #1}\else
  \providecommand{\doi}{doi: \begingroup \urlstyle{rm}\Url}\fi

\bibitem[Borgs et~al.(2014)Borgs, Brautbar, Chayes, and Lucier]{BorgsBCL14}
Christian Borgs, Michael Brautbar, Jennifer Chayes, and Brendan Lucier.
\newblock Maximizing social influence in nearly optimal time.
\newblock In \emph{Proceedings of the Twenty-Fifth Annual ACM-SIAM Symposium on
  Discrete Algorithms}, SODA 14, pages 946--957, USA, 2014. Society for
  Industrial and Applied Mathematics.
\newblock ISBN 9781611973389.

\bibitem[Borodin et~al.(2017)Borodin, Jain, Lee, and Ye]{BorodinJLY17}
Allan Borodin, Aadhar Jain, Hyun~Chul Lee, and Yuli Ye.
\newblock Max-sum diversification, monotone submodular functions, and dynamic
  updates.
\newblock \emph{ACM Trans. Algorithms}, 13\penalty0 (3), July 2017.
\newblock ISSN 1549-6325.
\newblock \doi{10.1145/3086464}.
\newblock URL \url{https://doi.org/10.1145/3086464}.

\bibitem[Buchbinder and Feldman(2018)]{BuchbinderF18}
Niv Buchbinder and Moran Feldman.
\newblock \emph{Submodular Functions Maximization Problems}, volume~1,
  chapter~42.
\newblock Chapman and Hall/CRC, 2nd edition, 2018.

\bibitem[Buchbinder and Feldman(2019)]{BuchbinderF19}
Niv Buchbinder and Moran Feldman.
\newblock Constrained submodular maximization via a nonsymmetric technique.
\newblock \emph{Mathematics of Operations Research}, 44\penalty0 (3):\penalty0
  988--1005, 2019.
\newblock \doi{10.1287/moor.2018.0955}.
\newblock URL \url{https://doi.org/10.1287/moor.2018.0955}.

\bibitem[Calinescu et~al.(2011)Calinescu, Chekuri, P\'{a}l, and
  Vondr\'{a}k]{CalinescuCPV11}
Gruia Calinescu, Chandra Chekuri, Martin P\'{a}l, and Jan Vondr\'{a}k.
\newblock Maximizing a monotone submodular function subject to a matroid
  constraint.
\newblock \emph{SIAM J. Comput.}, 40\penalty0 (6):\penalty0 1740--1766,
  December 2011.
\newblock ISSN 0097-5397.
\newblock \doi{10.1137/080733991}.
\newblock URL \url{https://doi.org/10.1137/080733991}.

\bibitem[Chekuri et~al.(2010)Chekuri, Vondrak, and Zenklusen]{ChekuriVZ10}
Chandra Chekuri, Jan Vondrak, and Rico Zenklusen.
\newblock Dependent randomized rounding via exchange properties of
  combinatorial structures.
\newblock In \emph{Proceedings of the 2010 IEEE 51st Annual Symposium on
  Foundations of Computer Science}, FOCS 10, pages 575--584, USA, 2010. IEEE
  Computer Society.
\newblock ISBN 9780769542447.
\newblock \doi{10.1109/FOCS.2010.60}.
\newblock URL \url{https://doi.org/10.1109/FOCS.2010.60}.

\bibitem[Dobzinski and Schapira(2006)]{DobzinskiS06}
Shahar Dobzinski and Michael Schapira.
\newblock An improved approximation algorithm for combinatorial auctions with
  submodular bidders.
\newblock In \emph{Proceedings of the Seventeenth Annual ACM-SIAM Symposium on
  Discrete Algorithm}, SODA 06, pages 1064--1073, USA, 2006. Society for
  Industrial and Applied Mathematics.
\newblock ISBN 0898716055.

\bibitem[Dueck and Frey(2007)]{DueckF07}
Delbert Dueck and Brendan Frey.
\newblock Non-metric affinity propagation for unsupervised image
  categorization.
\newblock In \emph{IEEE 11th International Conference on Computer Vision},
  pages 1--8, 11 2007.
\newblock ISBN 978-1-4244-1631-8.
\newblock \doi{10.1109/ICCV.2007.4408853}.

\bibitem[Dwork et~al.(2006)Dwork, McSherry, Nissim, and Smith]{DworkMNS06}
Cynthia Dwork, Frank McSherry, Kobbi Nissim, and Adam Smith.
\newblock Calibrating noise to sensitivity in private data analysis.
\newblock In Shai Halevi and Tal Rabin, editors, \emph{Theory of Cryptography},
  pages 265--284, Berlin, Heidelberg, 2006. Springer Berlin Heidelberg.
\newblock ISBN 978-3-540-32732-5.

\bibitem[Dwork et~al.(2010)Dwork, Rothblum, and Vadhan]{DworkRV10}
Cynthia Dwork, Guy~N. Rothblum, and Salil Vadhan.
\newblock Boosting and differential privacy.
\newblock In \emph{Proceedings of the 2010 IEEE 51st Annual Symposium on
  Foundations of Computer Science}, FOCS ’10, page 51–60, USA, 2010. IEEE
  Computer Society.
\newblock ISBN 9780769542447.
\newblock \doi{10.1109/FOCS.2010.12}.
\newblock URL \url{https://doi.org/10.1109/FOCS.2010.12}.

\bibitem[{Ene} and {Nguyen}(2016)]{EneN16}
A.~{Ene} and H.~L. {Nguyen}.
\newblock Constrained submodular maximization: Beyond 1/e.
\newblock In \emph{2016 IEEE 57th Annual Symposium on Foundations of Computer
  Science (FOCS)}, pages 248--257, 2016.

\bibitem[Feige(2006)]{Feige06}
Uriel Feige.
\newblock On maximizing welfare when utility functions are subadditive.
\newblock In \emph{Proceedings of the Thirty-Eighth Annual ACM Symposium on
  Theory of Computing}, STOC 06, pages 41--50, New York, NY, USA, 2006.
  Association for Computing Machinery.
\newblock ISBN 1595931341.
\newblock \doi{10.1145/1132516.1132523}.
\newblock URL \url{https://doi.org/10.1145/1132516.1132523}.

\bibitem[Feige and Vondr\'{a}k(2006)]{FeigeV06}
Uriel Feige and Jan Vondr\'{a}k.
\newblock Approximation algorithms for allocation problems: Improving the
  factor of 1 - 1/e.
\newblock In \emph{Foundations of Computer Science, 1975., 16th Annual
  Symposium on}, pages 667 -- 676, 11 2006.
\newblock \doi{10.1109/FOCS.2006.14}.

\bibitem[{Feldman} et~al.(2011){Feldman}, {Naor}, and {Schwartz}]{FeldmanNS11}
M.~{Feldman}, J.~{Naor}, and R.~{Schwartz}.
\newblock A unified continuous greedy algorithm for submodular maximization.
\newblock In \emph{2011 IEEE 52nd Annual Symposium on Foundations of Computer
  Science}, pages 570--579, 2011.

\bibitem[Feldman(2012)]{FeldmanThesis}
Moran Feldman.
\newblock \emph{Maximization Problems with Submodular Objective Functions}.
\newblock PhD thesis, Israel Institute of Technology, 2012.

\bibitem[Fujishige(2005)]{Fujishige05}
Satoru Fujishige.
\newblock \emph{Submodular Functions and Optimization}, volume~58.
\newblock Elsevier, 2nd edition, 2005.
\newblock ISBN 9780444520869.

\bibitem[Gomes and Krause(2010)]{GomesK10}
Ryan Gomes and Andreas Krause.
\newblock Budgeted nonparametric learning from data streams.
\newblock In Johannes F{\"u}rnkranz and Thorsten Joachims, editors,
  \emph{Proceedings of the 27th International Conference on Machine Learning
  (ICML-10)}, pages 391--398, Haifa, Israel, June 2010. Omnipress.
\newblock URL \url{http://www.icml2010.org/papers/433.pdf}.

\bibitem[Gupta et~al.(2010)Gupta, Ligett, McSherry, Roth, and
  Talwar]{GuptaLMRT10}
Anupam Gupta, Katrina Ligett, Frank McSherry, Aaron Roth, and Kunal Talwar.
\newblock Differentially {P}rivate {C}ombinatorial {O}ptimization.
\newblock In \emph{Proceedings of the twenty-first annual ACM-SIAM symposium on
  Discrete Algorithms}, pages 1106--1125. Society for Industrial and Applied
  Mathematics, 2010.

\bibitem[Hassidim and Singer(2017)]{HassidimS17}
Avinatan Hassidim and Yaron Singer.
\newblock Submodular optimization under noise.
\newblock In Satyen Kale and Ohad Shamir, editors, \emph{Proceedings of the
  2017 Conference on Learning Theory}, volume~65 of \emph{Proceedings of
  Machine Learning Research}, pages 1069--1122, Amsterdam, Netherlands, 07--10
  Jul 2017. PMLR.
\newblock URL \url{http://proceedings.mlr.press/v65/hassidim17a.html}.

\bibitem[Kawahara et~al.(2009)Kawahara, Kiyohito, Tsuda, and
  Bilmes]{KawaharaKTB09}
Yoshinobu Kawahara, Nagano Kiyohito, Koji Tsuda, and Jeff~A Bilmes.
\newblock Submodularity cuts and applications.
\newblock In Y.~Bengio, D.~Schuurmans, J.~D. Lafferty, C.~K.~I. Williams, and
  A.~Culotta, editors, \emph{Advances in Neural Information Processing Systems
  22}, pages 916--924. Curran Associates, Inc., 2009.
\newblock URL
  \url{http://papers.nips.cc/paper/3762-submodularity-cuts-and-applications.pdf}.

\bibitem[Kempe et~al.(2003)Kempe, Kleinberg, and Tardos]{KempeKT03}
David Kempe, Jon Kleinberg, and \'{E}va Tardos.
\newblock Maximizing the spread of influence through a social network.
\newblock In \emph{Proceedings of the Ninth ACM SIGKDD International Conference
  on Knowledge Discovery and Data Mining}, KDD 03, pages 137--146, New York,
  NY, USA, 2003. Association for Computing Machinery.
\newblock ISBN 1581137370.
\newblock \doi{10.1145/956750.956769}.
\newblock URL \url{https://doi.org/10.1145/956750.956769}.

\bibitem[Kirchhoff and Bilmes(2014)]{KirchhoffB14}
Katrin Kirchhoff and Jeff~A. Bilmes.
\newblock Submodularity for data selection in statistical machine translation.
\newblock In \emph{Proceedings of the 2014 Conference on Empirical Methods in
  Natural Language Processing (EMNLP)}, pages 131--141, 2014.

\bibitem[Krause and Cevher(2010)]{KrauseC10}
Andreas Krause and Volkan Cevher.
\newblock Submodular dictionary selection for sparse representation.
\newblock In Johannes F{\"u}rnkranz and Thorsten Joachims, editors,
  \emph{Proceedings of the 27th International Conference on Machine Learning
  (ICML-10)}, pages 567--574, Haifa, Israel, June 2010. Omnipress.
\newblock URL \url{http://www.icml2010.org/papers/366.pdf}.

\bibitem[Krause and Guestrin(2005)]{KrauseG05}
Andreas Krause and Carlos Guestrin.
\newblock Near-optimal nonmyopic value of information in graphical models.
\newblock In \emph{Proceedings of the Twenty-First Conference on Uncertainty in
  Artificial Intelligence}, UAI 05, pages 324--331, Arlington, Virginia, USA,
  2005. AUAI Press.
\newblock ISBN 0974903914.

\bibitem[Li et~al.(2015)Li, Wang, Zhang, and Zhang]{LiWZZ15}
Jian Li, Haitao Wang, Bowei Zhang, and Ningye Zhang.
\newblock Linear time approximation schemes for geometric maximum coverage.
\newblock \emph{Computing and Combinatorics}, 9198, 2015.
\newblock \doi{10.1016/j.tcs.2017.11.026}.

\bibitem[Lin and Bilmes(2011)]{LinB11}
Hui Lin and Jeff Bilmes.
\newblock A class of submodular functions for document summarization.
\newblock In \emph{Proceedings of the 49th Annual Meeting of the Association
  for Computational Linguistics: Human Language Technologies}, pages 510--520,
  Portland, Oregon, USA, June 2011. Association for Computational Linguistics.
\newblock URL \url{https://www.aclweb.org/anthology/P11-1052}.

\bibitem[McSherry and Talwar(2007)]{McSherryT07}
Frank McSherry and Kunal Talwar.
\newblock Mechanism {D}esign via {D}ifferential {P}rivacy.
\newblock In \emph{Proceedings of the 48th Annual IEEE Symposium on Foundations
  of Computer Science}, FOCS '07, pages 94--103, Washington, DC, USA, 2007.
  IEEE Computer Society.
\newblock ISBN 0-7695-3010-9.
\newblock \doi{10.1109/FOCS.2007.41}.
\newblock URL \url{http://dx.doi.org/10.1109/FOCS.2007.41}.

\bibitem[Mirzasoleiman et~al.(2016{\natexlab{a}})Mirzasoleiman, Badanidiyuru,
  and Karbasi]{MirzasoleimanBK16}
Baharan Mirzasoleiman, Ashwinkumar Badanidiyuru, and Amin Karbasi.
\newblock Fast constrained submodular maximization: Personalized data
  summarization.
\newblock In Maria~Florina Balcan and Kilian~Q. Weinberger, editors,
  \emph{Proceedings of The 33rd International Conference on Machine Learning},
  volume~48 of \emph{Proceedings of Machine Learning Research}, pages
  1358--1367, New York, New York, USA, 20--22 Jun 2016{\natexlab{a}}. PMLR.
\newblock URL \url{http://proceedings.mlr.press/v48/mirzasoleiman16.html}.

\bibitem[Mirzasoleiman et~al.(2016{\natexlab{b}})Mirzasoleiman, Karbasi,
  Sarkar, and Krause]{MirzasoleimanKSK16}
Baharan Mirzasoleiman, Amin Karbasi, Rik Sarkar, and Andreas Krause.
\newblock Distributed submodular maximization.
\newblock \emph{Journal of Machine Learning Research}, 17\penalty0
  (235):\penalty0 1--44, 2016{\natexlab{b}}.
\newblock URL \url{http://jmlr.org/papers/v17/mirzasoleiman16a.html}.

\bibitem[Mirzasoleiman et~al.(2016{\natexlab{c}})Mirzasoleiman, Zadimoghaddam,
  and Karbasi]{MirzasoleimanZK16}
Baharan Mirzasoleiman, Morteza Zadimoghaddam, and Amin Karbasi.
\newblock Fast distributed submodular cover: Public-private data summarization.
\newblock In D.~D. Lee, M.~Sugiyama, U.~V. Luxburg, I.~Guyon, and R.~Garnett,
  editors, \emph{Advances in Neural Information Processing Systems 29}, pages
  3594--3602. Curran Associates, Inc., 2016{\natexlab{c}}.

\bibitem[Mitrovic et~al.(2017)Mitrovic, Bun, Krause, and
  Karbasi]{MitrovicBKK17}
Marko Mitrovic, Mark Bun, Andreas Krause, and Amin Karbasi.
\newblock Differentially {P}rivate {S}ubmodular {M}aximization: {D}ata
  {S}ummarization in {D}isguise.
\newblock In \emph{Proceedings of the 34th International Conference on Machine
  Learning-Volume 70}, pages 2478--2487. JMLR. org, 2017.

\bibitem[Mitzenmacher and Upfal(2017)]{MitzenmacherU17}
Michael Mitzenmacher and Eli Upfal.
\newblock \emph{Probability and Computing: Randomization and Probabilistic
  Techniques in Algorithms and Data Analysis}.
\newblock Cambridge University Press, 2nd edition, 2017.
\newblock ISBN 110715488X, 9781107154889.

\bibitem[Narayanan(1997)]{Narayanan97}
H.~Narayanan.
\newblock Submodular functions and electrical networks.
\newblock In \emph{Annals of Discrete Mathematics}, volume~54. Elsevier, 1st
  edition, 1997.
\newblock ISBN 9780080867946.

\bibitem[Nemhauser et~al.(1978)Nemhauser, Wolsey, and Fisher]{NemhauserWF78}
G.~L. Nemhauser, L.~A. Wolsey, and M.~L. Fisher.
\newblock An analysis of approximations for maximizing submodular set
  functions--i.
\newblock \emph{Math. Program.}, 14\penalty0 (1):\penalty0 265--294, December
  1978.
\newblock ISSN 0025-5610.
\newblock \doi{10.1007/BF01588971}.
\newblock URL \url{https://doi.org/10.1007/BF01588971}.

\bibitem[Papadimitriou et~al.(2008)Papadimitriou, Schapira, and
  Singer]{PapadimitriouSS08}
Christos Papadimitriou, Michael Schapira, and Yaron Singer.
\newblock On the hardness of being truthful.
\newblock In \emph{Proceedings of the 2008 49th Annual IEEE Symposium on
  Foundations of Computer Science}, FOCS 08, pages 250--259, USA, 2008. IEEE
  Computer Society.
\newblock ISBN 9780769534367.
\newblock \doi{10.1109/FOCS.2008.54}.
\newblock URL \url{https://doi.org/10.1109/FOCS.2008.54}.

\bibitem[Puthiya~Parambath et~al.(2016)Puthiya~Parambath, Usunier, and
  Grandvalet]{ParambathUG16}
Shameem~A. Puthiya~Parambath, Nicolas Usunier, and Yves Grandvalet.
\newblock A coverage-based approach to recommendation diversity on similarity
  graph.
\newblock In \emph{Proceedings of the 10th ACM Conference on Recommender
  Systems}, RecSys 16, pages 15--22, New York, NY, USA, 2016. Association for
  Computing Machinery.
\newblock ISBN 9781450340359.
\newblock \doi{10.1145/2959100.2959149}.
\newblock URL \url{https://doi.org/10.1145/2959100.2959149}.

\bibitem[Raz and Safra(1997)]{RazS97}
Ran Raz and Shmuel Safra.
\newblock A sub-constant error-probability low-degree test, and a sub-constant
  error-probability pcp characterization of np.
\newblock In \emph{Proceedings of the Twenty-Ninth Annual ACM Symposium on
  Theory of Computing}, STOC 97, pages 475--484, New York, NY, USA, 1997.
  Association for Computing Machinery.
\newblock ISBN 0897918886.
\newblock \doi{10.1145/258533.258641}.
\newblock URL \url{https://doi.org/10.1145/258533.258641}.

\bibitem[Schrijver(2003)]{Schrijver03}
Alexander Schrijver.
\newblock Combinatorial optimization: Polyhedra and efficiency.
\newblock In \emph{Journal of Computer and System Sciences - JCSS}, volume~B.
  Springer, January 2003.

\bibitem[Singla et~al.(2014)Singla, Bogunovic, Bart\'{o}k, Karbasi, and
  Krause]{SinglaBBKK14}
Adish Singla, Ilija Bogunovic, G\'{a}bor Bart\'{o}k, Amin Karbasi, and Andreas
  Krause.
\newblock Near-optimally teaching the crowd to classify.
\newblock In \emph{Proceedings of the 31st International Conference on
  International Conference on Machine Learning - Volume 32}, ICML14, page
  II154II162. JMLR.org, 2014.

\bibitem[Sipos et~al.(2012)Sipos, Swaminathan, Shivaswamy, and
  Joachims]{SiposSSPT12}
Ruben Sipos, Adith Swaminathan, Pannaga Shivaswamy, and Thorsten Joachims.
\newblock Temporal corpus summarization using submodular word coverage.
\newblock In \emph{Proceedings of the 21st ACM International Conference on
  Information and Knowledge Management}, CIKM 12, pages 754--763, New York, NY,
  USA, 2012. Association for Computing Machinery.
\newblock ISBN 9781450311564.
\newblock \doi{10.1145/2396761.2396857}.
\newblock URL \url{https://doi.org/10.1145/2396761.2396857}.

\bibitem[\texttt{UberDataset}(2014)]{Uber}
\texttt{UberDataset}.
\newblock Uber pickups in {N}ew {Y}ork {C}ity, 2014.
\newblock URL
  \url{https://www.kaggle.com/fivethirtyeight/uber-pickups-in-new-york-city}.

\bibitem[Vondr\'{a}k(2008)]{Vondrak08}
Jan Vondr\'{a}k.
\newblock Optimal approximation for the submodular welfare problem in the value
  oracle model.
\newblock In \emph{Proceedings of the Fortieth Annual ACM Symposium on Theory
  of Computing}, STOC 08, pages 67--74, New York, NY, USA, 2008. Association
  for Computing Machinery.
\newblock ISBN 9781605580470.
\newblock \doi{10.1145/1374376.1374389}.
\newblock URL \url{https://doi.org/10.1145/1374376.1374389}.

\end{thebibliography}
	
\end{document}